\documentclass{article}

\usepackage{hyperref}

\usepackage[accepted]{icml2025}

\usepackage{mymath}

\icmltitlerunning{Benefits of Early Stopping in Gradient Descent for Overparameterized Logistic Regression}

\begin{document}

\twocolumn[
\icmltitle{Benefits of Early Stopping in Gradient Descent \texorpdfstring{\\}{} for Overparameterized Logistic Regression}

\icmlsetsymbol{equal}{*}

\begin{icmlauthorlist}
\icmlauthor{Jingfeng Wu}{ucb}
\icmlauthor{Peter L.~Bartlett}{equal,ucb,google}
\icmlauthor{Matus Telgarsky}{equal,nyu}
\icmlauthor{Bin Yu}{equal,ucb}
\end{icmlauthorlist}

\icmlaffiliation{ucb}{University of California, Berkeley}
\icmlaffiliation{google}{Google DeepMind}
\icmlaffiliation{nyu}{New York University}

\icmlcorrespondingauthor{Jingfeng Wu}{uuujf@berkeley.edu}
\icmlcorrespondingauthor{Peter L.~Bartlett}{peter@berkeley.edu}
\icmlcorrespondingauthor{Matus Telgarsky}{mjt10041@nyu.edu}
\icmlcorrespondingauthor{Bin Yu}{binyu@berkeley.edu}

\icmlkeywords{Implicit Bias, Gradient Descent, Early Stopping, Logistic Regression, Overparameterization}

\vskip 0.3in
]

\printAffiliationsAndNotice{\icmlEqualContribution}

\begin{abstract}
In overparameterized logistic regression, gradient descent (GD) iterates diverge in norm while converging in direction to the maximum $\ell_2$-margin solution---a phenomenon known as the implicit bias of GD. This work investigates additional regularization effects induced by early stopping in well-specified high-dimensional logistic regression. We first demonstrate that the excess logistic risk vanishes for early-stopped GD but diverges to infinity for GD iterates at convergence. This suggests that early-stopped GD is well-calibrated, whereas asymptotic GD is statistically inconsistent. Second, we show that to attain a small excess zero-one risk, polynomially many samples are sufficient for early-stopped GD, while exponentially many samples are necessary for any interpolating estimator, including asymptotic GD. This separation underscores the statistical benefits of early stopping in the overparameterized regime. Finally, we establish nonasymptotic bounds on the norm and angular differences between early-stopped GD and $\ell_2$-regularized empirical risk minimizer, thereby connecting the implicit regularization of GD with explicit $\ell_2$-regularization.  
\end{abstract}

\section{Introduction}

Modern machine learning often operates in the \emph{overparameterized} regime, where the number of parameters exceeds the number of training data. Despite this, models trained by \emph{gradient descent} (GD) often generalize well even in the absence of explicit regularization \citep{zhang2021understanding,neyshabur2017exploring,bartlett2021deep}. The common explanation is that GD exhibits certain \emph{implicit regularization} effects that prevent overfitting. 

The implicit regularization of GD is relatively well understood in regression settings. Using overparameterized linear regression as an example, amongst all interpolators, GD asymptotically converges to the minimum $\ell_2$-norm interpolator \citep{zhang2021understanding}. Moreover, when the data covariance satisfies certain conditions, the minimum $\ell_2$-norm interpolator achieves vanishing excess risk while fitting training data with \emph{constant} amount of noise, 
a phenomenon known as \emph{benign overfitting} \citep[see][and references therein]{bartlett2020benign,tsigler2023benign}.
When the data covariance is general, although benign overfitting may not occur, early-stopped GD (and one-pass stochastic GD) can still achieve vanishing excess risk \citep{buhlmann2003boosting,yao2007early,lin2017optimal,dieuleveut2016nonparametric,zou2023benign,zou2022risk,wu2022last}. This suggests early stopping provides an additional regularization effect for GD in linear regression.
Moreover, the statistical effects of early stopping are known to be comparable to that of $\ell_2$-regularization in linear regression \citep{suggala2018connecting,ali2019continuous,zou2021benefits,sonthalia2024regularization}.

However, the picture is less complete for classification, 
where the risk is measured by the logistic loss and the zero-one loss instead of the squared loss. 
In overparameterized logistic regression,
GD diverges in norm while converging in direction to the maximum $\ell_2$-margin solution \citep[see][and \Cref{prop:implicit-bias-asymp} in \Cref{sec:preliminary}]{soudry2018implicit,ji2018risk}, which is in contrast with GD's convergence to the (bounded!) minimum $\ell_2$-norm solution in the linear regression setting. 
In standard (finite-dimensional, low-noise, large margin) classification settings, the asymptotic implicit bias of GD implies generalization via classical margin theory \citep{bartlett1999generalization}.  More recently,
certain high-dimensional settings exhibit well-behaved
maximum margin solutions and benign overfitting \citep[see][for example]{montanari2019generalization}, but it is unclear if these
results apply more broadly or represent special cases.
Moreover, if the maximum $\ell_2$-margin solution generalizes poorly,
new techniques are required, as the aforementioned least squares techniques
cannot be easily adapted owing to their heavy dependence upon the explicit linear algebraic form
of GD's path specific to least squares.

\begin{figure*}[t]
    \centering
    \subfigure[Empirical and population logistic risk]{
    \label{fig:toy:logistic}
    \includegraphics[width=0.45\linewidth]{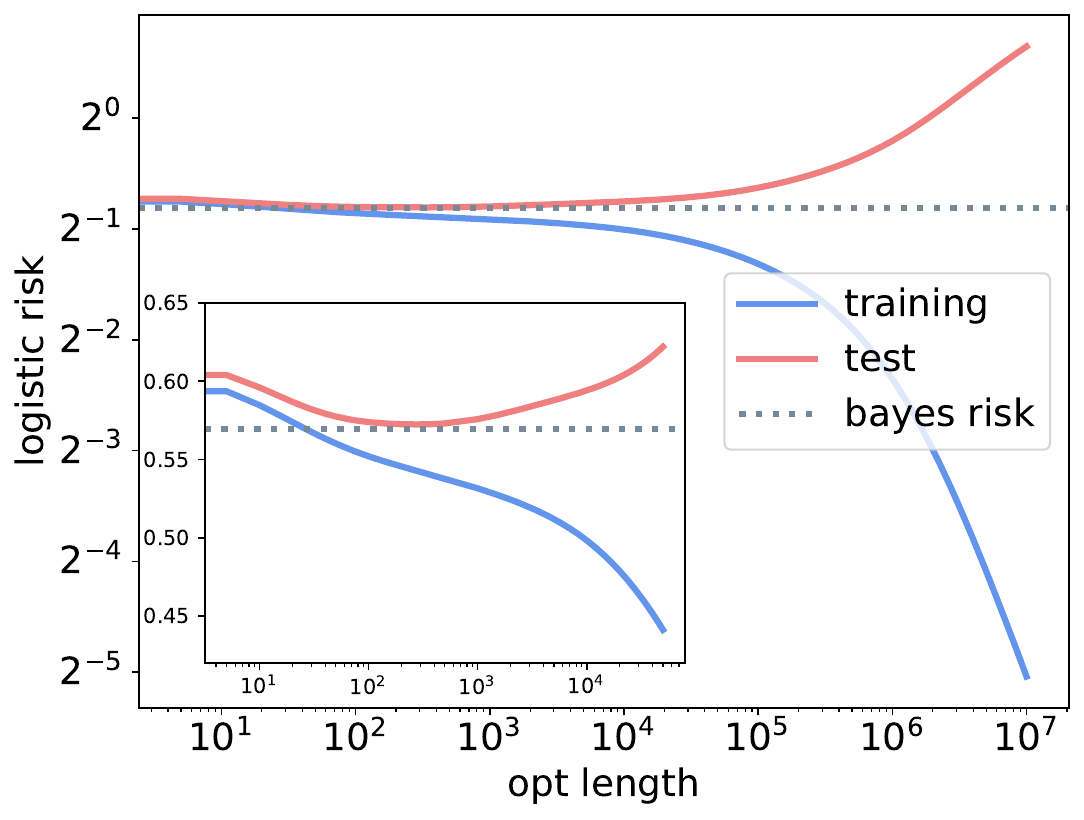}
    }
    \hfill
    \subfigure[Empirical and population zero-one error]{
    \label{fig:toy:error}
    \includegraphics[width=0.45\linewidth]{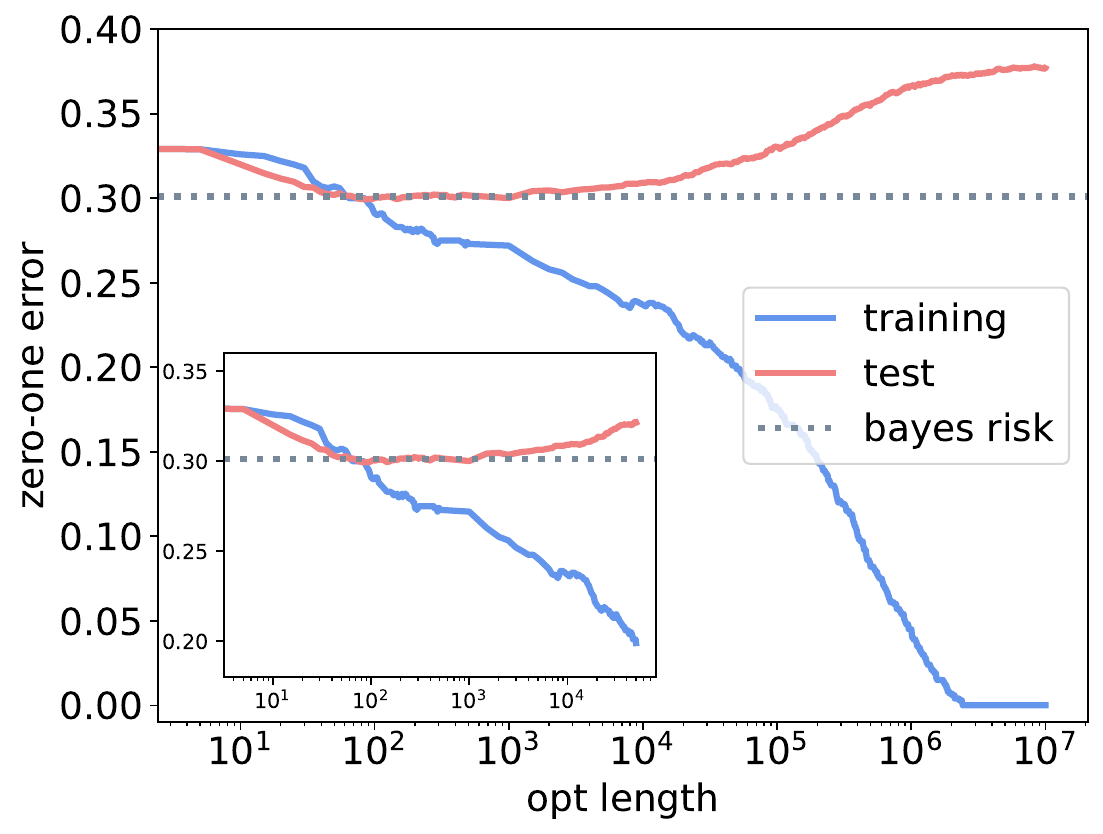}
    }
    \caption{The logistic risk and zero-one error along the GD path for an overparameterized logistic regression problem. 
    Here $d=2000$, $n=1000$, $\lambda_i=i^{-2}$, $\wB^*_{0:100}=1$ and $\wB^*_{100:\infty}=0$. The optimization length is measured by $\eta t$. The plots show that the excess logistic risk and excess zero-one error are both small for GD with appropriate early stopping, and both grow larger when GD enters the interpolation regime. These demonstrate the regularization of early stopping in GD. }
    \label{fig:toy}
\end{figure*}

\paragraph{Contributions.} 
This work investigates the beneficial regularization effects of early stopping in GD for overparameterized logistic regression. We focus on a well-specified setting where the feature vector follows an anisotropic Gaussian design and the binary label conditional on the feature is given by a logistic model (see \Cref{assump:data-distribution} in \Cref{sec:preliminary}). 
We are particularly interested in the regime where the label contains a constant level of noise. 
We establish the following results.

\begin{enumerate}[leftmargin=*]
\item \textbf{Calibration via early stopping.} 
We first derive risk upper bounds for early-stopped GD that can be applied in the overparameterized regime. 
With an oracle-chosen stopping time, early-stopped GD achieves vanishing excess logistic risk and excess zero-one error (as the sample size grows) for \emph{every} well-specified logistic regression problem. Furthermore, its naturally induced conditional probability approaches the true underlying conditional probability model.
These properties suggest that early-stopped GD is \emph{consistent} and \emph{calibrated} for \emph{every} well-specified logistic regression problem, even in the overparameterized regime.

\item \textbf{Advantages over interpolation.} We then provide negative results for GD without early stopping. 
We show that GD at convergence,
in contrast to the typical successes of maximum margin predictors,
suffers from an \emph{unbounded} logistic risk and a \emph{constant} calibration error in the overparameterized regime. 
Moreover, for a broad class of overparameterized logistic regression problems, to attain a small excess zero-one error, early-stopped GD only needs \emph{polynomially} many samples, whereas any interpolating estimators, including asymptotic GD, requires at least \emph{exponentially} many samples. 
These results underscore the statistical benefits of early stopping.

\item \textbf{Connections to $\ell_2$-regularization.} Finally, we compare the GD path (formed by GD iterates with all possible stopping times) with the $\ell_2$-regularization path (formed by $\ell_2$-regularized empirical risk minimizers with all possible regularization strengths). For general convex and smooth problems, including logistic regression, these two paths differ in norm by a factor between $0.585$ and $3.415$, and differ in direction by an angle no more than $\pi/4$. 
Specific to overparameterized logistic regression, the $\ell_2$-distance of the two paths is asymptotically zero in a widely considered situation but may diverge to infinity in the worst case. 
These findings partially explain the implicit regularization of early stopping via its connections with the explicit $\ell_2$-regularization.
\end{enumerate}

\paragraph{Notation.}
For two positive-valued functions $f(x)$ and $g(x)$, we write $f(x)\lesssim g(x)$ or $f(x)\gtrsim g(x)$
if there exists a constant $c>0$ such that $f(x) \le cg(x)$ or $f(x) \ge cg(x)$ for every $x$, respectively. 
We write $f(x) \eqsim g(x) $ if $f(x) \lesssim g(x) \lesssim f(x)$.
We use the standard big-O notation. 
For two vectors $\uB$ and $\vB$ in a Hilbert space, we denote their inner product by $\la\uB, \vB\ra$ or equivalently, $\uB^\top \vB$.
For two matrices $\AB$ and $\BB$ of appropriate dimension, we define their inner product as $\langle \AB, \BB \rangle := \tr(\AB^\top \BB)$.
For a positive semi-definite (PSD) matrix $\AB$ and a vector $\vB$ of appropriate dimension, we write $\|\vB\|_{\AB}^2 := \vB^\top \AB \vB$. In particular, we write $\|\vB \| := \|\vB \|_\IB$.
For a positive integer $n$, we write $[n]:=\{1,\dots,n\}$.

\section{Preliminaries}\label{sec:preliminary}

Let $(\xB,y)\in\Hbb\otimes \{\pm 1\}$ be a pair of features and the corresponding binary label sampled from an unknown population distribution.
Here $\Hbb$ is a finite or countably infinite dimensional Hilbert space.  
For a parameter $\wB\in\Hbb$, define its population \emph{logistic risk} as
\begin{align*}
    \risk(\wB):= \Ebb \ell(y \xB^\top \wB),\ \text{where} \ \ell(t) := \ln(1+e^{-t}),
\end{align*}
and define its population \emph{zero-one error} as
\begin{align*}
    \error(\wB):= \Ebb \ind{y\xB^\top \wB \le 0} 
    = \Pr(y\xB^\top \wB \le 0),
\end{align*}
where the expectation is over the population distribution of $(\xB,y)$.
It is worth noting that, different from the logistic risk $\risk(\wB)$, the zero-one error $\error(\wB)$ is insensitive to the parameter norm.
Moreover, we measure the \emph{calibration error} of a parameter $\wB\in\Hbb$ by
\begin{align*}
    \calibration(\wB):= \Ebb \big|p(\wB;\xB) - \Pr(y=1|\xB)\big|^2,
\end{align*}
where $p(\wB;\xB)$ is a naturally induced conditional probability given by 
\begin{align*}
    p(\wB;\xB) := \frac{1}{1+\exp(-\xB^\top\wB)}.
\end{align*}

We say an estimator $\hat \wB$ is \emph{consistent} (for classification) if it attains the Bayes zero-one error asymptotically, that is, $\error(\hat \wB) - \min \error \to 0$.
We say an estimator $\hat\wB$ is \emph{calibrated} if its induced conditional probability predicts the true one asymptotically \citep{foster1998asymptotic}, that is, $\calibration(\hat\wB) \to 0$.

\paragraph{Gradient descent.}
Let $(\xB_i, y_i)_{i=1}^n$ be $n$ independent copies of $(\xB,y)$. Define the empirical risk as 
\begin{equation*}
    \eRisk(\wB):= \frac{1}{n}\sum_{i=1}^n\ell(y_i\xB_i^\top \wB),\quad \wB\in\Hbb.
\end{equation*}
Then the iterates of \emph{gradient descent} (GD) are given by
\begin{equation}\label{eq:GD}\tag{GD}
 \wB_0 =0,\quad 
   \wB_{t+1} = \wB_t - \eta \grad \eRisk(\wB_t),\quad t\ge 0, 
\end{equation}
where $\eta>0$ is a fixed stepsize. We consider zero initialization to simplify the presentation, which does not cause the loss of generality.
We aim to compare asymptotic GD, that is, $\wB_{\infty}$, with early-stopped GD, that is, $\wB_t$ at a certain finite stopping time $t<\infty$.

\paragraph{Data model.}
We mainly focus on a \emph{well-specified} setting formalized by the following conditions. 
However, part of our results can also be applied to misspecified cases. 
\begin{assumption}[Well-specification]\label{assump:data-distribution}
Let $\SigmaB \in \Hbb^{\otimes 2}$ be positive semi-definite (PSD) and $\tr(\SigmaB) < \infty$. Let $\wB^*\in\Hbb$ be such that $\|\wB^*\|_{\SigmaB} < \infty$.
Assume that $(\xB, y)\in \Hbb \otimes \{\pm 1\}$ is given by
\begin{align*}
\xB\sim \Ncal(0,\SigmaB),\quad \Pr(y | \xB) = \frac{1}{1+\exp(-y\xB^\top \wB^*)}.
\end{align*}
\end{assumption}

Under this data model, we have the following standard properties for the logistic risk, zero-one error, and calibration error (see for example \citep{ji2021early} or Section 4.7 in \citep{mohri2018foundations}). The proof is included in \Cref{sec:proof:risk-properties} for completeness.

\begin{proposition}[Basic properties]\label{prop:risk-properties}
Under \Cref{assump:data-distribution}, we have
\begin{assumpenum}
\item $\wB^* = \arg\min\risk(\cdot)$ and $\wB^*\in \arg\min \error(\cdot)$;
\item 
for every $\wB\in\Hbb$, it holds that
\begin{align*}
    \error (\wB) - \min \error  
    \le 2  \sqrt{\calibration(\wB)} 
    \le\sqrt{2}\cdot \sqrt{\risk(\wB) - \min\risk};
\end{align*}
\item if additionally we have $\|\wB^*\|_{\SigmaB} \lesssim 1$, then 
\begin{align*}
    \min \risk  \gtrsim 1,\quad \min \error \gtrsim 1.
\end{align*}
\end{assumpenum}
\end{proposition}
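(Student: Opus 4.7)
For part (a), the plan is to condition on $\xB$ and view the population logistic risk as a pointwise cross-entropy. Writing $p^*(\xB) := \Pr(y=1 \mid \xB) = 1/(1+\exp(-\xB^\top\wB^*))$, we have
\begin{equation*}
\risk(\wB) = \Ebb_{\xB}\bigl[ p^*(\xB)\,\ell(\xB^\top\wB) + (1-p^*(\xB))\,\ell(-\xB^\top\wB) \bigr],
\end{equation*}
and for each fixed $\xB$ the integrand is the binary cross-entropy between $\mathrm{Ber}(p^*(\xB))$ and $\mathrm{Ber}(p(\wB;\xB))$, which is minimized (with value $H_b(p^*(\xB))$) exactly when $p(\wB;\xB) = p^*(\xB)$, i.e.\ when $\xB^\top\wB = \xB^\top\wB^*$. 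The choice $\wB = \wB^*$ realizes this everywhere, giving $\wB^* \in \arg\min \risk$. For the zero-one error, the Bayes classifier predicts $\sign(2p^*(\xB)-1) = \sign(\xB^\top\wB^*)$, so $\wB^*$ also minimizes $\error$.

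For part (b), I would first prove the right inequality via Pinsker. The excess logistic risk rewrites as the expected KL divergence between the two conditional Bernoullis,
\begin{equation*}
\risk(\wB) - \min\risk = \Ebb_{\xB}\bigl[\mathrm{KL}\bigl(\mathrm{Ber}(p^*(\xB)) \,\big\|\, \mathrm{Ber}(p(\wB;\xB))\bigr)\bigr],
\end{equation*}
and Pinsker's inequality in the form $\mathrm{KL}(\mathrm{Ber}(p)\|\mathrm{Ber}(q)) \ge 2(p-q)^2$ bounds this below by $2\calibration(\wB)$, yielding $2\sqrt{\calibration(\wB)} \le \sqrt{2}\sqrt{\risk(\wB)-\min\risk}$. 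For the left inequality, the standard excess zero-one decomposition gives
\begin{equation*}
\error(\wB) - \min\error = \Ebb\bigl[\, |2p^*(\xB)-1|\cdot \ind{\sign(\xB^\top\wB)\ne\sign(\xB^\top\wB^*)} \,\bigr];
\end{equation*}
on the disagreement event, $p(\wB;\xB)$ and $p^*(\xB)$ lie on opposite sides of $1/2$, hence $|p(\wB;\xB)-p^*(\xB)| \ge |p^*(\xB)-1/2|$. This converts the right-hand side into $\le 2\,\Ebb |p(\wB;\xB)-p^*(\xB)|$, and Cauchy--Schwarz delivers $\le 2\sqrt{\calibration(\wB)}$.

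For part (c), the idea is that under $\|\wB^*\|_{\SigmaB}\lesssim 1$ the margin $z := \xB^\top\wB^* \sim \Ncal(0,\|\wB^*\|_{\SigmaB}^2)$ is a bounded-variance Gaussian, so $|z|$ is of constant order with constant probability. From the pointwise identity $\Ebb_{y|\xB}[\ell(y\xB^\top\wB^*)] = H_b(p^*(\xB))$, we lower bound $\min\risk = \Ebb[H_b(p^*(z))]$ by restricting to the event $\{|z|\le C\}$, on which $p^*(z)$ is bounded away from $\{0,1\}$ and hence $H_b(p^*(z))\gtrsim 1$. An analogous argument with $\min\error = \Ebb[\min(p^*(z),1-p^*(z))]$ and the bound $\min(p^*(z),1-p^*(z)) \ge 1/(1+e^C)$ on $\{|z|\le C\}$ yields $\min\error\gtrsim 1$. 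I do not anticipate a serious obstacle: every step is either a pointwise computation or a standard inequality (Pinsker, Cauchy--Schwarz, Gaussian tail). The only mild care needed is choosing $C$ in part (c) so that both $\Pr(|z|\le C)$ and the conditional lower bound on $H_b(p^*(z))$ are simultaneously of constant order, which is straightforward since both constants depend only on the absolute upper bound on $\|\wB^*\|_{\SigmaB}$.
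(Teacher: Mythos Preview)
Your proposal is correct and for parts (a) and (b) follows essentially the same route as the paper: conditioning on $\xB$, writing the risk as entropy plus KL, the excess-error identity $\error(\wB)-\error(\wB^*)=\Ebb[|2p^*-1|\ind{\sign(\xB^\top\wB)\ne\sign(\xB^\top\wB^*)}]$, the bound $|p^*-1/2|\le |p(\wB;\cdot)-p^*|$ on the disagreement event, Cauchy--Schwarz, and Pinsker. One small omission: the statement asserts $\wB^*=\arg\min\risk$ (not merely $\in$), and your sketch only gives membership; the paper records uniqueness from the strict positivity of the KL term.

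For part (c) your argument is a little different and in fact more elementary than the paper's. You restrict to the event $\{|z|\le C\}$ with $z=\xB^\top\wB^*\sim\Ncal(0,\|\wB^*\|_\SigmaB^2)$ and use that both $H_b(p^*(z))$ and $\min(p^*(z),1-p^*(z))$ are bounded below by constants there, together with $\Pr(|z|\le C)\gtrsim 1$ when $\|\wB^*\|_\SigmaB\lesssim 1$. The paper instead first reduces the risk bound to the error bound via $\risk(\wB)\ge \ln(2)\cdot\error(\wB)$, and then lower bounds $\error(\wB^*)$ by an explicit Gaussian integral, ultimately invoking the standard error-function bounds $\frac{1}{x+\sqrt{x^2+2}}\le e^{x^2}\int_x^\infty e^{-t^2}\dif t$ to obtain the quantitative estimate $\min\error\ge 1/(\sqrt{2}\pi(\|\wB^*\|_\SigmaB+1))$. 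Your route is shorter and perfectly adequate for the qualitative $\gtrsim 1$ claim; the paper's route yields an explicit constant with the dependence on $\|\wB^*\|_\SigmaB$ spelled out.
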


\Cref{prop:risk-properties} suggests that the Bayes logistic risk and Bayes zero-one error are attained by the true model parameter $\wB^*$. Moreover, the excess zero-one error is controlled by the calibration error, which is further controlled by the excess logistic risk. 
Thus under \Cref{assump:data-distribution}, a calibrated estimator is also consistent for classification, and an estimator is calibrated if it attains the Bayes logistic risk asymptotically.
However, the reverse might not be true.
As we will show later, for overparameterized logistic regression, early-stopped GD is calibrated and consistent for both logistic risk and zero-one error. In contrast, asymptotic GD is poorly calibrated and attains an unbounded logistic risk, although it could be consistent for zero-one error.

\paragraph{Noise and overparameterization.}
Most of our results should be interpreted in the \emph{noisy} and \emph{overparameterized} regime. Specifically, this means \[\|\wB^*\|_{\SigmaB} \lesssim 1\  \text{and}\ \rank(\SigmaB) \ge n.\] The first condition ensures the population distribution carries a constant amount of noise, as the Bayes logistic risk and Bayes zero-one error are lower bounded by a constant according to \Cref{prop:risk-properties}. In other words, the population distribution is strictly \emph{not} linearly separable. 
Despite so, the second condition ensures the \emph{linear separability} of the training data almost surely, as the number of effective parameters exceeds the number of training data. 
In this regime, estimators can \emph{interpolate} the training data, yet this interpolation inherently carries the risk of \emph{overfitting} and \emph{poor calibration}. 
Our setting aligns well with the prior setting for studying benign overfitting in linear regression \citep{bartlett2020benign,tsigler2023benign}.

\paragraph{Asymptotic implicit bias.}
When the training data is linearly separable (implied by overparameterization), prior works show that GD diverges to infinite in norm while converging in direction to the maximum $\ell_2$-margin direction \citep{soudry2018implicit,ji2018risk}. This characterizes the asymptotic \emph{implicit bias} of GD. See the following proposition for a precise statement.

\begin{proposition}[Asymptotic implicit bias]\label{prop:implicit-bias-asymp}
Assume that $\rank(\xB_1,\dots,\xB_n) \ge n$. Then the training data $(\xB_i, y_i)_{i=1}^n$ is linearly separable, that is, \[\max_{\|\wB\|=1} \min_{i\in [n]} y_i\xB_i^\top \wB > 0.\] 
Let $\tilde \wB$ be the maximum $\ell_2$-margin direction, that is,
\begin{align*}
    \tilde \wB := \arg\max_{\|\wB\|=1} \min_{i\in [n]} y_i\xB_i^\top \wB.
\end{align*}
Then $\tilde\wB$ is unique and the following holds for \Cref{eq:GD} with any stepsize $\eta>0$:
\begin{align*}
\|\wB_t\|\to \infty,\quad 
\frac{\wB_t}{\|\wB_t\|}\to \tilde\wB,
\end{align*}
\end{proposition}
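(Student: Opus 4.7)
The statement decomposes into three claims: linear separability, uniqueness of $\tilde\wB$, and the two convergence statements. The first two are short. For separability, because $\rank(\xB_1,\ldots,\xB_n)\ge n$ the vectors are linearly independent, so there exists $\vB\in\Hbb$ with $\xB_i^\top \vB = y_i$ for every $i$; then $y_i\xB_i^\top\vB = 1 > 0$, giving a positive margin $1/\|\vB\|$ after normalization. For uniqueness, the max-margin problem is equivalent, via the rescaling $\uB = \wB/\min_i y_i\xB_i^\top \wB$, to the strictly convex hard-margin program $\min\tfrac12\|\uB\|^2$ subject to $y_i\xB_i^\top \uB \ge 1$, which has a unique minimizer; $\tilde\wB$ is obtained by normalizing this minimizer.

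For $\|\wB_t\|\to\infty$, I would first argue that $\eRisk(\wB_t) \to 0$. Since $\ell$ is $\tfrac14$-smooth, $\eRisk$ is $L$-smooth with $L\lesssim\max_i\|\xB_i\|^2$, so for $\eta$ small enough GD monotonically decreases the empirical risk. By separability, $\eRisk(c\vB) \to 0$ as $c\to\infty$ (each summand $\ell(cy_i\xB_i^\top \vB)\to 0$), so $\inf \eRisk = 0$; no finite $\wB$ attains it since $\ell>0$ pointwise. Standard convex descent analysis then gives $\eRisk(\wB_t)\to 0$. But $\eRisk(\wB)\to 0$ forces every margin $y_i\xB_i^\top \wB \to \infty$, which in turn forces $\|\wB\|\to\infty$ since the data is fixed.

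The main obstacle is directional convergence $\wB_t/\|\wB_t\|\to \tilde\wB$. I would follow the smoothed-margin strategy of \citet{ji2018risk} (an alternative is the explicit decomposition $\wB_t = \tilde\wB\log t + \rB_t$ with $\rB_t$ bounded, as in \citet{soudry2018implicit}, but this is more delicate). Define the smoothed margin
\begin{equation*}
\bar\gamma_t := -\frac{\log\bigl(n\,\eRisk(\wB_t)\bigr)}{\|\wB_t\|},
\end{equation*}
which satisfies $\bar\gamma_t \le \min_i y_i\xB_i^\top \wB_t/\|\wB_t\|$ since $\ell(z)\ge e^{-z}/(\text{const})$ in the relevant regime. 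The plan is to show (i) an upper bound $\|\wB_t\|\lesssim \log t$ by exploiting that each GD step decreases $\eRisk$ at a rate comparable to $\|\grad \eRisk(\wB_t)\|^2$ and using the $(1/\wB^*\text{-scale})$ suboptimality gap, and (ii) a lower bound $\bar\gamma_t \to \gamma^*$, the hard max margin. Step (ii) proceeds by viewing the negative gradient as a convex combination of the $y_i\xB_i$ weighted by the loss derivatives, so moving along $-\grad\eRisk$ is an approximate Frank-Wolfe step on the dual of the max-margin QP; telescoping the associated primal-dual gap along the trajectory shows $\bar\gamma_t\to\gamma^*$. Combined with the norm lower bound, this squeezes $\min_i y_i\xB_i^\top \wB_t/\|\wB_t\|\to \gamma^*$, so every subsequential limit of $\wB_t/\|\wB_t\|$ is a unit vector achieving the max margin. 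By the uniqueness established in the first paragraph, the full sequence converges to $\tilde\wB$.
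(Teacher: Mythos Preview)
Your separability argument is essentially the paper's: the paper constructs the ordinary least-squares estimator $\hat\wB := \XB^\top(\XB\XB^\top)^{-1}\yB$, which is well-defined since $\rank(\XB)\ge n$ and satisfies $\XB\hat\wB = \yB$; this is the same $\vB$ you produce. Your uniqueness argument, via the equivalent strictly convex hard-margin QP, differs from the paper's, which argues directly that if two distinct unit vectors both achieved the maximum margin then their normalized average would achieve a strictly larger margin, a contradiction. Both routes are valid and of comparable length.

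For the convergence claims $\|\wB_t\|\to\infty$ and $\wB_t/\|\wB_t\|\to\tilde\wB$, the paper does not give a self-contained argument at all: it simply records that these are established by \citet{soudry2018implicit,ji2018risk} when the stepsize is small enough that $\eRisk(\wB_t)$ decreases monotonically, and that the extension to arbitrary fixed $\eta>0$ follows from techniques in \citep{wu2023implicit,wu2024large}. Your sketch of the smoothed-margin strategy from \citet{ji2018risk} is therefore already more detailed than what the paper offers for that part.

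There is, however, one genuine gap in your plan relative to the stated proposition. Your argument that $\eRisk(\wB_t)\to 0$ (and hence $\|\wB_t\|\to\infty$) relies on the descent lemma and standard convex optimization bounds, which require $\eta$ to be at most the inverse smoothness constant; you say so explicitly (``for $\eta$ small enough''). But the proposition claims the conclusion for \emph{any} fixed $\eta>0$. For large stepsizes $\eRisk$ need not be monotone, and neither the Soudry-et-al.\ decomposition nor the Ji--Telgarsky smoothed-margin analysis applies out of the box. The paper covers this case only by invoking \citep{wu2023implicit,wu2024large}; your plan would need either an analogous citation or a separate argument handling the large-stepsize regime.
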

\begin{proof}[Proof of \Cref{prop:implicit-bias-asymp}]
Let $\XB := (\xB_1,\dots,\xB_n)^\top$ and $\yB := (y_1,\dots,y_n)^\top$. 
Since $\rank(\XB) \ge n$, the ordinary least squares estimator
\[\hat \wB:= \XB^\top (\XB\XB^\top)^{-1}\yB\] is well-defined. This implies the linear separability of the training data as $\XB \hat \wB = \yB$. If two distinct directions $\tilde\wB_1$ and $\tilde\wB_2$ achieve the maximum $\ell_2$-margin, the direction of their average achieves a larger margin, which is a contradiction. So $\tilde\wB$ must be unique.

For logistic regression with linearly separable data, the implicit bias of GD is established by \citet{soudry2018implicit,ji2018risk} when the stepsizes are small such that $\eRisk(\wB_t)$ decreases monotonically. The same results can be extended to GD with any fixed stepsize using techniques from \citep{wu2023implicit,wu2024large}.
\end{proof}

\paragraph{Additional notation.}
The following notations are handy for presenting our results.
Let $(\lambda_i)_{i\ge 1}$ be the eigenvalues of the data covariance $\SigmaB$, sorted in non-increasing order. Let $\uB_i$ be the eigenvector of $\SigmaB$ corresponding to $\lambda_i$.
Let $\big(\pi(i)\big)_{i\ge 1}$ be resorted indexes
such that $\lambda_{\pi(i)}\big(\uB_{\pi(i)}^\top \wB^*\big)^2$ is non-increasing as a function of $i$. Define 
\[
\wB^*_{0:k} := \sum_{i\le k} \uB_{\pi(i)}\uB_{\pi(i)}^\top \wB^*,\quad 
\wB^*_{k:\infty} := \sum_{i> k} \uB_{\pi(i)}\uB_{\pi(i)}^\top \wB^*.
\]
It is clear that $\|\wB^*\|_{\SigmaB} < \infty$ implies that $\big\|\wB^*_{k:\infty}\big\|_{\SigmaB} = o(1)$ as $k$ increases.

\section{Upper Bounds for Early-Stopped GD}\label{sec:upper-bound}
In this section, we present two risk bounds for early-stopped GD for overparameterized logistic regression and a characterization of the implicit bias of early stopping in GD.

\subsection{A Bias-Dominating Bound}
We first provide a bias-dominating excess logistic risk bound for early-stopped GD in overparameterized logistic regression.
The proof is deferred to \Cref{sec:proof:gd:upper-bound}.

\begin{theorem}[A ``bias-dominating'' risk bound]\label{thm:gd:upper-bound}
Suppose that \Cref{assump:data-distribution} holds.
Let $k$ be an arbitrary index.
Suppose that the stepsize for \Cref{eq:GD} satisfies
\[\eta\le \frac{1}{C_0\big( 1+\tr(\SigmaB)+\lambda_1\ln(1/\delta)/n\big) },\]
where $C_0>1$ is a universal constant. Then with probability at least $1-\delta$, there exists a stopping time $t$ such that 
\begin{align*}
\eRisk(\wB_t)\le \eRisk(\wB^*_{0:k}) \le \eRisk(\wB_{t-1}).
\end{align*}
Moreover, 
for \Cref{eq:GD} with this stopping time we have
\begin{align*}
&\lefteqn{\risk(\wB_t)-\min\risk \lesssim } \\
&   \sqrt{\frac{ \max\big\{1,\, \tr(\SigmaB)\|\wB^*_{0:k}\|^2 \big\}\ln^2(n/\delta)}{n}} 
+\|\wB^*_{k:\infty}\|^2_{\SigmaB}.
\end{align*}
\end{theorem}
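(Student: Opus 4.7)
The plan is to decompose the excess logistic risk around the truncated comparator $\wB^*_{0:k}$, use monotonic descent of GD to push the empirical risk at the chosen stopping time below $\eRisk(\wB^*_{0:k})$, and then transfer from empirical to population through uniform convergence on a ball whose radius is controlled by $\|\wB^*_{0:k}\|$ alone (not by the potentially huge $\|\wB^*\|$).

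First, I would establish smoothness and monotonic decrease. Since $0\le\ell''\le 1/4$, the Hessian of $\eRisk$ is dominated by $\hat\SigmaB/4$ where $\hat\SigmaB := \tfrac{1}{n}\sum_i \xB_i\xB_i^\top$. A standard Gaussian operator-norm concentration, e.g.\ matrix Bernstein or a truncation argument, gives $\|\hat\SigmaB\|_{\mathrm{op}} \lesssim \tr(\SigmaB) + \lambda_1 \ln(1/\delta)/n$ with probability at least $1-\delta$. Under the prescribed stepsize this yields $\eta L\le 1$ for the smoothness constant, so the descent lemma forces $\eRisk(\wB_{t+1})\le \eRisk(\wB_t)$ monotonically. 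Combined with $\inf_\wB \eRisk(\wB)\le\eRisk(\wB^*_{0:k})$ and $\eRisk(\wB_0)=\ln 2$, there is a first finite $t$ satisfying the sandwich $\eRisk(\wB_t) \le \eRisk(\wB^*_{0:k}) \le \eRisk(\wB_{t-1})$ (the case $\eRisk(\wB^*_{0:k})\ge \ln 2$ is trivial with $t=0$).

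Second, I would decompose
\[
\risk(\wB_t) - \risk(\wB^*) = \underbrace{\risk(\wB_t) - \eRisk(\wB_t)}_{\mathrm{(A)}} + \underbrace{\eRisk(\wB_t) - \eRisk(\wB^*_{0:k})}_{\le\, 0} + \underbrace{\eRisk(\wB^*_{0:k}) - \risk(\wB^*_{0:k})}_{\mathrm{(B)}} + \underbrace{\risk(\wB^*_{0:k}) - \risk(\wB^*)}_{\mathrm{(C)}}.
\]
Term $\mathrm{(C)}$ is the bias: Taylor-expanding $\risk$ at $\wB^*$ and using $\nabla\risk(\wB^*)=0$ together with $\ell''\le 1/4$ gives $\mathrm{(C)}\le \tfrac18\|\wB^*_{k:\infty}\|_{\SigmaB}^2$, matching the second summand of the target bound. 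Term $\mathrm{(B)}$ is a single-point gap at a deterministic vector with $\|\wB^*_{0:k}\|_{\SigmaB}$ bounded, so Bernstein for i.i.d.\ one-dimensional Lipschitz Gaussian observations controls it by $\sqrt{\|\wB^*_{0:k}\|_{\SigmaB}^2\ln(1/\delta)/n}$ up to lower order. For $\mathrm{(A)}$ I first need a norm bound on $\wB_t$: for each $s<t$ the stopping rule gives $\eRisk(\wB_s)\ge \eRisk(\wB^*_{0:k})$, so the classical convex/smooth GD identity
\[
\|\wB_{s+1}-\wB^*_{0:k}\|^2 \le \|\wB_s-\wB^*_{0:k}\|^2 - 2\eta\bigl(\eRisk(\wB_{s+1})-\eRisk(\wB^*_{0:k})\bigr)
\]
telescopes to $\|\wB_{t-1}-\wB^*_{0:k}\|\le \|\wB^*_{0:k}\|$; one further GD step combined with the descent-lemma bound on $\eta\|\nabla\eRisk(\wB_{t-1})\|$ then yields $\|\wB_t\|\lesssim \|\wB^*_{0:k}\|$.

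With $\wB_t$ confined to a ball of radius $\lesssim \|\wB^*_{0:k}\|$, term $\mathrm{(A)}$ is a uniform deviation of the $1$-Lipschitz logistic process over that ball. Ledoux--Talagrand contraction and Gaussianity of $\xB$ bound the Rademacher complexity by $\|\wB^*_{0:k}\|\sqrt{\tr(\SigmaB)/n}$, and standard concentration of suprema of empirical processes contributes $\sqrt{\ln(1/\delta)/n}$; absorbing $\mathrm{(B)}$ into $\mathrm{(A)}$ and taking the maximum with the $\sqrt{\ln(1/\delta)/n}$ contribution (which dominates when $\|\wB^*_{0:k}\|$ is tiny) gives the factor $\sqrt{\max\{1,\tr(\SigmaB)\|\wB^*_{0:k}\|^2\}\ln^2(n/\delta)/n}$, with the polylog arising from union bounds and Bernstein tails. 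The main obstacle is this uniform-convergence step in a possibly infinite-dimensional Hilbert space: one must obtain the correct dependence on $\tr(\SigmaB)$ rather than on ambient dimension and track the log factors precisely through the $\|\hat\SigmaB\|_{\mathrm{op}}$ concentration. The choice of comparator $\wB^*_{0:k}$ rather than $\wB^*$ is essential, because it makes the norm-control ball radius depend only on the top-$k$ mass of $\wB^*$, trading the variance-like term off against the bias $\|\wB^*_{k:\infty}\|_{\SigmaB}^2$.
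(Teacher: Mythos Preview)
Your proposal is correct and follows essentially the same route as the paper: the identical four-term decomposition around $\wB^*_{0:k}$, the telescoped GD inequality (the paper's \Cref{lemma:implicit-regularization}) to confine $\wB_t$ to a ball of radius $1+2\|\wB^*_{0:k}\|$, the Taylor bound $\risk(\wB^*_{0:k})-\risk(\wB^*)\le \tfrac12\|\wB^*_{k:\infty}\|_\SigmaB^2$, and Rademacher-complexity uniform deviation over that ball. The only notable implementation differences are that the paper bounds smoothness more crudely via $\|\nabla^2\eRisk\|\le \tfrac1n\sum_i\|\xB_i\|^2$ and scalar Bernstein rather than operator-norm concentration of $\hat\SigmaB$, and it handles your term $\mathrm{(B)}$ with the same uniform Rademacher bound as $\mathrm{(A)}$ rather than a separate single-point Bernstein; for the ``main obstacle'' you flag, the paper obtains the $\tr(\SigmaB)$ dependence by truncating $\xB$ to $\|\xB\|\le X$ with $X^2\eqsim \tr(\SigmaB)\ln(n/\delta)$ before invoking the standard linear-class Rademacher bound.
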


The existence of the desired stopping time is because GD minimizes the empirical risk monotonically \citep{ji2018risk}.
In \Cref{thm:gd:upper-bound}, we choose $k$ to minimize the upper bounds. Intuitively, $k$ determines the number of dimensions in which early-stopped GD is able to learn the true parameter. Moreover, early-stopped GD ignores the remaining dimensions and pays an ``approximation'' error. 
A few more remarks on \Cref{thm:gd:upper-bound} are in order.

\paragraph{Calibration and consistency.}
\Cref{thm:gd:upper-bound} implies that early-stopped GD attains the Bayes logistic risk asymptotically for \emph{any} logistic regression problem satisfying \Cref{assump:data-distribution}.
To see this, we pick $k$ as an increasing function of $n$ such that 
\(
\|\wB^*_{0:k}\|=o(n).
\)
Then $\|\wB^*_{k:\infty}\|_{\SigmaB} = o(1)$ since $k$ increases as $n$ increases (recall that $\|\wB^*\|_{\SigmaB}$ is finite by \Cref{assump:data-distribution}). Hence the risk bound in \Cref{thm:gd:upper-bound} implies that
\[\risk(\wB_t)-\min\risk= o(1)\ \text{as $n$ increases}.\]
By \Cref{prop:risk-properties}, this also ensures that early-stopped GD induces a conditional probability that approaches the true one and achieves a vanishing excess zero-one error. Hence early-stopped GD is calibrated and consistent for any well-specified logistic regression problem.

As a concrete example, let us consider the following source and capacity conditions \citep{caponnetto2007optimal}, 
\begin{equation}\label{eq:source-capacity}
    \lambda_i \eqsim i^{-a},\quad \lambda_i (\uB_i^\top \wB^*_i)^2 \eqsim i^{-b},\quad a,b>1.
\end{equation}
Then \Cref{thm:gd:upper-bound} implies 
\begin{align*}
    \risk(\wB_t) -\min\risk = \begin{dcases}
    \tilde\Ocal\big(n^{-1/2}\big) & b>a+1, \\
        \tilde\Ocal(n^{\frac{1-b}{a+b-1}}) & b \le a+1.
    \end{dcases}
\end{align*}
This provides an explicit rate on the excess risk. Note that the obtained rate might not be the sharpest. An improved rate under stronger conditions is provided later in \Cref{thm:gd:fast-upper-bound}. Note that our main purpose here is to show the calibration and consistency of early-stopped GD for every well-specified logistic regression problem. 

\paragraph{Stopping time.}
Note that the stopping time $t$ relies on the oracle information of the true parameter $\wB^*$. Therefore the ``early-stopped GD'' in \Cref{thm:gd:upper-bound} is not a practical algorithm. Instead, we should view \Cref{thm:gd:upper-bound} as a guarantee for GD with an \emph{optimally tuned} stopping time. It will also be clear later in \Cref{sec:lower-bound} that the optimal stopping time $t$ must be finite for overparameterized logistic regression. 
Moreover, we point out that the stopping time $t$ is a function of $k$ and thus also depends on the sample size $n$.

Although the stopping time in \Cref{thm:gd:upper-bound} is implicit, one can compute an upper bound on it using standard optimization and concentration tools. Specifically, GD converges in $\Ocal(1/t)$ rate as the empirical risk is convex and smooth. Moreover, we can compute $\eRisk(\wB_{0:k})$ using concentration bounds. These lead to an upper bound on the stopping time.

\paragraph{Misspecification.}
For the simplicity of discussion, we state \Cref{thm:gd:upper-bound} in a well-specified case formalized by \Cref{assump:data-distribution}. Nonetheless, from its proof in \Cref{sec:proof:gd:upper-bound}, it is clear that the same results also hold in misspecified cases, where we define $\wB^* \in \arg\min \risk$ and assume $\SigmaB^{-1/2}\xB$ is subGaussian.
Here, we do not need to make assumptions on the true conditional probability $\Pr(y|\xB)$.
In those misspecified cases, however, \Cref{prop:risk-properties} may not hold. Thus \Cref{thm:gd:upper-bound} only provides a logistic risk bound but does not yield any bounds on calibration error or zero-one error.

We also note that the proof of \Cref{thm:gd:upper-bound} can be adapted to other loss functions that are convex, smooth, and Lipschitz.

\subsection{A Variance-Dominating Bound}
From \Cref{thm:gd:upper-bound}, we see that early-stopped GD is consistent and calibrated under the arguably weakest condition on the true parameter, $\|\wB^*\|_{\SigmaB} < \infty$. However, the attained bound decays at a rate no faster than $\Ocal(1/\sqrt{n})$ as long as $\|\wB^*\|_{\SigmaB} \gtrsim 1$.
In the simpler case where $\|\wB^*\|< \infty$, 
we can tune the stopping time to achieve an improved bound. This is presented in the following theorem. The proof is deferred to \Cref{sec:proof:gd:fast-upper-bound}.

\begin{theorem}[A ``variance-dominating'' risk bound]\label{thm:gd:fast-upper-bound}
Suppose that \Cref{assump:data-distribution} holds with $\|\wB^*\|<\infty$.
Let $k$ be an arbitrary index.
Suppose that the stepsize for GD satisfies the same condition as in \Cref{thm:gd:upper-bound} and the stopping time $t$ is such that
\[\eRisk(\wB_t) \le \eRisk(\wB^*) \le \eRisk(\wB_{t-1}).\]
Assume for simplicity that 
\(\|\wB^*\|\gtrsim 1\), \(\lambda_1 \lesssim 1\), and \(\tr(\SigmaB)\gtrsim 1.\)
Then with probability at least $1-\delta$, we have 
\begin{align*}
   \risk (\wB_t) -  \min\risk \lesssim & \|\wB^*\| \bigg( \frac{k}{n} + \sqrt{\frac{\sum_{i>k}\lambda_i}{n}} +\\
  & \qquad \frac{  \tr(\SigmaB)^{1/2}\ln\big(n \|\wB^*\|\tr(\SigmaB) /\delta\big)}{n} \bigg).
\end{align*}
\end{theorem}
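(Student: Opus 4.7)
The plan is to combine (i) a Lyapunov/potential-function argument for GD that controls $\|\wB_t-\wB^*\|$ by $\|\wB^*\|$, (ii) a head--tail decomposition of the generalization gap along the eigenbasis of $\SigmaB$, and (iii) concentration of $\eRisk(\wB^*)$ around $\risk(\wB^*)$. The key quantitative ingredients are the smoothness of the empirical risk (inherited from \Cref{thm:gd:upper-bound}) and the Bernstein/self-bounding structure of the logistic loss, which let me upgrade naive $\sqrt{1/n}$ rates to $1/n$ rates at the right places.

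I would start from the decomposition $\risk(\wB_t) - \risk(\wB^*) \le [\risk(\wB_t)-\eRisk(\wB_t)] + [\eRisk(\wB^*)-\risk(\wB^*)]$, obtained by dropping the middle term $\eRisk(\wB_t)-\eRisk(\wB^*)\le 0$ furnished by the stopping rule. For the second bracket the summands $\ell(y_i\xB_i^\top\wB^*)$ are i.i.d., sub-exponential (since $\xB\sim\Ncal(0,\SigmaB)$ and $\ell$ is $1$-Lipschitz), and have variance bounded by $\|\wB^*\|^2\tr(\SigmaB)$; a Bernstein tail bound combined with the self-bounding inequality $|\ell'(z)|^2\lesssim \ell(z)$ yields the $\|\wB^*\|\sqrt{\tr(\SigmaB)}\ln(\cdot)/n$ contribution.

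For the first bracket, the first step is iterate control. Expanding $\|\wB_{s+1}-\wB^*\|^2$ and summing over $s<t$, convexity of $\eRisk$ gives
\[
\|\wB_t-\wB^*\|^2 \le \|\wB^*\|^2 \;-\; 2\eta\sum_{s<t}\bigl(\eRisk(\wB_s)-\eRisk(\wB^*)\bigr) \;+\; \eta^2\sum_{s<t}\|\grad\eRisk(\wB_s)\|^2.
\]
Under the stepsize condition from \Cref{thm:gd:upper-bound}, the descent lemma telescopes the last sum to $O(\eta^{-1}\ln 2)$, while monotonicity of $\eRisk$ along the GD path together with the stopping rule makes every summand of the middle sum nonnegative; this yields $\|\wB_t\|\lesssim \|\wB^*\|$. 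The generalization gap $\risk(\wB_t)-\eRisk(\wB_t)$ is then controlled by a head--tail split $\wB_t=\wB_{t,0:k}+\wB_{t,k:\infty}$: on the $k$-dimensional head, a localized Rademacher/Bernstein analysis restricted to $\{\wB:\|\wB\|\lesssim \|\wB^*\|\}$ delivers a fast $\|\wB^*\|\cdot k/n$ term; on the tail, a $\SigmaB$-weighted Rademacher complexity bound combined with $\|\wB^*_{k:\infty}\|_\SigmaB^2\le \|\wB^*\|^2\sum_{i>k}\lambda_i$ delivers the $\|\wB^*\|\sqrt{\sum_{i>k}\lambda_i/n}$ term.

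The main obstacle is obtaining the fast $k/n$ rate on the head rather than the naive $\sqrt{k/n}$ that plain uniform convergence would give. This requires a local Rademacher analysis exploiting the Bernstein condition $\Var\bigl(\ell(y\xB^\top\wB)-\ell(y\xB^\top\wB^*)\bigr)\lesssim \risk(\wB)-\risk(\wB^*)$ valid near $\wB^*$, together with careful bookkeeping so that the resulting fixed-point rate composes cleanly with the tail bound and the $\eRisk(\wB^*)$ concentration without losing the $\ln(\cdot)/n$ third term. Verifying the Bernstein condition for the Gaussian logistic model and propagating the iterate-control estimate through the localized complexity fixed point, while keeping the $\|\wB^*\|$-dependence linear rather than quadratic, is the technically most delicate step of the argument.
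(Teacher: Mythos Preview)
Your high-level ingredients---iterate control via the potential-function argument, a Bernstein condition coming from a Hessian lower bound near $\wB^*$, and local Rademacher complexity to upgrade to a fast rate---are the same as the paper's. But the way you propose to assemble them has two genuine gaps.

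First, the two-bracket decomposition $\risk(\wB_t)-\risk(\wB^*)\le[\risk(\wB_t)-\eRisk(\wB_t)]+[\eRisk(\wB^*)-\risk(\wB^*)]$ with \emph{separate} bounds cannot deliver the stated rate. The second bracket is the deviation of an i.i.d.\ average of $\ell(y_i\xB_i^\top\wB^*)$ from its mean; since $\Var\bigl(\ell(y\xB^\top\wB^*)\bigr)$ is an absolute constant under \Cref{assump:data-distribution} (recall $\min\risk\gtrsim 1$), no Bernstein or self-bounding trick can push this term below $1/\sqrt{n}$. The fast rate only emerges when the two brackets are kept together as the centered empirical process $(P-P_n)(\ell_{\wB_t}-\ell_{\wB^*})$ and the Bernstein condition $\Var(\ell_\wB-\ell_{\wB^*})\lesssim B\,(\risk(\wB)-\risk(\wB^*))$ is fed into the localization. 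The paper does exactly this: it invokes the Bartlett--Bousquet--Mendelson local Rademacher theorem directly on the excess loss, never splitting off $\eRisk(\wB^*)-\risk(\wB^*)$.

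Second, the ``head--tail split $\wB_t=\wB_{t,0:k}+\wB_{t,k:\infty}$'' as you describe it does not work, because the logistic loss is nonlinear in $\wB$ and the loss does not decompose across coordinates; moreover, the quantity you invoke, $\|\wB^*_{k:\infty}\|_\SigmaB$, plays no role in \Cref{thm:gd:fast-upper-bound}. In the paper the index $k$ enters not through a split of the iterate but through the local Rademacher complexity of the \emph{full} norm-bounded linear class with the local $\SigmaB$-constraint: for any $k$, $\Ebb\rad_n\{f:\|\wB\|\le W,\ \Ebb(f-f^*)^2\le r\}\le\sqrt{2kr/(nL^2)+\sum_{i>k}\lambda_i/(nX^2)}$, and the fixed-point equation then produces the $k/n+\sqrt{\sum_{i>k}\lambda_i/n}$ structure. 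A further technical point you omit is that the Gaussian features are unbounded; the paper truncates to $\tilde\xB=\xB\ind{\|\xB\|\le X}$ and verifies both the Bernstein condition (via a population Hessian lower bound, which requires controlling the truncation error on second moments) and the local complexity bound for the truncated class, then passes back to $\risk$ at the end.
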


\paragraph{Comparing \Cref{thm:gd:upper-bound,thm:gd:fast-upper-bound}.}
Compared to \Cref{thm:gd:upper-bound}, \Cref{thm:gd:fast-upper-bound} achieves a faster rate, but is only applicable when $\|\wB^*\|<\infty$.
Specifically, in the classical finite-dimensional setting where $\|\wB^*\| \eqsim 1$ and $\SigmaB=\IB_d$, the excess risk bound in \Cref{thm:gd:fast-upper-bound} decreases at the rate of $\tilde\Ocal(d/n)$ while that in \Cref{thm:gd:upper-bound} decreases at the rate of $\tilde\Ocal\big(\sqrt{d/n}\big)$.
For another example, under the source and capacity conditions of \Cref{eq:source-capacity}, \Cref{thm:gd:fast-upper-bound} provides an improved excess risk bound of $\tilde\Ocal\big(n^{-a/(1+a)}\big)$ when $b>a+1$, but is not applicable when $b\le a+1$.

The stopping time in \Cref{thm:gd:upper-bound} is designed to handle more general high-dimensional situations that even allow $\|\wB^*\|=\infty$. It tends to stop ``earlier'' so that the bias error tends to dominate the variance error. In comparison, \Cref{thm:gd:fast-upper-bound} is limited to simpler cases where $\|\wB^*\|<\infty$ and sets a ``later'' stopping time so that the variance error tends to dominate the bias error. Therefore \Cref{thm:gd:fast-upper-bound} achieves a faster rate.

\paragraph{Future directions.}
\Cref{thm:gd:upper-bound,thm:gd:fast-upper-bound} are sufficiently powerful for our purpose of demonstrating the benefits of early stopping. 
However, we point out that neither Theorems \ref{thm:gd:upper-bound} nor \ref{thm:gd:fast-upper-bound} reveal the \emph{true} trade-off between the bias and variance errors induced by early stopping. 
This is unsatisfactory given that in linear regression, the exact trade-off between bias and variance errors has been settled for one-pass SGD \citep{zou2023benign,wu2022last,wu2022power} and $\ell_2$-regularization \citep{tsigler2023benign}, and has been partially settled for early-stopped GD \citep[assuming a Gaussian prior]{zou2022risk}.
We leave the improvement of these bounds for future work.

From a technical perspective, the gap in analysis between linear regression and logistic regression is significant.
All the prior sharp analyses of GD in linear regression make heavy use of explicit calculations with chains of equalities and closed-form solutions. 
But these fail to hold for GD in logistic regression since the Hessian is no longer fixed.
While one might suspect that a limiting analogy
can be made where least squares ideas are applied locally around an
optimum, a priori there is no reason to believe that the GD path,
which diverges to infinity, even passes near the population optimum,
let alone spends a reasonable amount of time there.  Moreover,
as our lower bounds in \Cref{sec:lower-bound} attest, the GD path exhibits significant curvature.
Due to these issues, we believe tools from linear regression can not
be merely ported over, and new approaches are required.
While we have provided some tools to this end,
as above \Cref{thm:gd:upper-bound,thm:gd:fast-upper-bound} do not tightly characterize
the GD path, and much is left to future work.

\subsection{Implicit Bias of Early Stopping}
In this part, we briefly discuss the proof ideas by introducing the following key lemma in our analysis.  
Variants of this lemma have appeared in \citep{ji2018risk,ji2019polylogarithmic,shamir2021gradient,telgarsky2022stochastic,wu2024large} for analyzing different aspects of GD. 
For completeness, we include a proof of it in \Cref{sec:proof:implicit-regularization}.

\begin{lemma}[Implicit bias of early stopping]\label{lemma:implicit-regularization}
Let $\eRisk(\cdot)$ be convex and $\beta$-smooth. Let $(\wB_t)_{t\ge 0}$ be given by \Cref{eq:GD} with stepsize $\eta \le 1/\beta$.
Then for every $\uB$, we have 
\begin{align*}
    \frac{\|\wB_t - \uB\|^2}{2\eta t} + \eRisk(\wB_t) \le \eRisk(\uB) + \frac{\|\uB\|^2}{2\eta t},\quad t>0.
\end{align*}
\end{lemma}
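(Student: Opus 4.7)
The plan is to derive a one-step ``potential'' inequality of the form
\[
\eRisk(\wB_{s+1}) - \eRisk(\uB) \le \frac{\|\wB_s - \uB\|^2 - \|\wB_{s+1}-\uB\|^2}{2\eta},
\]
then telescope it over $s=0,\dots,t-1$ and use monotonicity of $\eRisk(\wB_s)$ along the iterates to convert the resulting Cesàro-average bound into a last-iterate bound.

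First, I would invoke the standard descent lemma for smooth convex functions: since $\eRisk$ is $\beta$-smooth and $\eta\le 1/\beta$, the update $\wB_{s+1}=\wB_s-\eta\grad\eRisk(\wB_s)$ satisfies
\[
\eRisk(\wB_{s+1}) \le \eRisk(\wB_s) - \frac{\eta}{2}\|\grad\eRisk(\wB_s)\|^2.
\]
In particular, $\eRisk(\wB_s)$ is non-increasing in $s$.

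Second, for any reference $\uB$, I would expand the squared distance:
\[
\|\wB_{s+1}-\uB\|^2 = \|\wB_s-\uB\|^2 - 2\eta\langle\grad\eRisk(\wB_s),\wB_s-\uB\rangle + \eta^2\|\grad\eRisk(\wB_s)\|^2,
\]
and then use convexity of $\eRisk$ to upper bound the inner product by $\eRisk(\wB_s)-\eRisk(\uB)$. This yields
\[
\eRisk(\wB_s) - \eRisk(\uB) \le \frac{\|\wB_s-\uB\|^2 - \|\wB_{s+1}-\uB\|^2}{2\eta} + \frac{\eta}{2}\|\grad\eRisk(\wB_s)\|^2.
\]
Substituting the descent lemma to absorb the $\frac{\eta}{2}\|\grad\eRisk(\wB_s)\|^2$ term into $\eRisk(\wB_s)-\eRisk(\wB_{s+1})$ gives the one-step potential inequality above.

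Finally, I would sum from $s=0$ to $s=t-1$, so that the distance terms telescope (using $\wB_0=0$) to yield
\[
\sum_{s=0}^{t-1}\bigl(\eRisk(\wB_{s+1})-\eRisk(\uB)\bigr) \le \frac{\|\uB\|^2-\|\wB_t-\uB\|^2}{2\eta}.
\]
Invoking monotonicity $\eRisk(\wB_t)\le\eRisk(\wB_{s+1})$ for every $s\le t-1$ lower bounds the left-hand side by $t\bigl(\eRisk(\wB_t)-\eRisk(\uB)\bigr)$. Dividing through by $t$ and rearranging produces the claimed inequality.

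There is no serious obstacle here: the argument is a textbook combination of the descent lemma, convexity, and telescoping. The only point warranting minor care is the last step, where the last-iterate (as opposed to averaged-iterate) form of the bound relies crucially on the pointwise monotonicity of $\eRisk(\wB_s)$, which in turn requires the smoothness assumption together with $\eta\le 1/\beta$; these hold by hypothesis.
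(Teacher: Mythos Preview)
Your proposal is correct and follows essentially the same route as the paper: descent lemma, expansion of $\|\wB_{s+1}-\uB\|^2$, convexity, telescoping, and then monotonicity of $\eRisk(\wB_s)$ to pass from the average to the last iterate. One cosmetic quibble: convexity gives a \emph{lower} bound $\langle\grad\eRisk(\wB_s),\wB_s-\uB\rangle\ge \eRisk(\wB_s)-\eRisk(\uB)$, not an upper bound, but since this term enters the distance expansion with a negative sign your subsequent inequality is correct as written.
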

This lemma reveals an implicit bias of early-stopping, in which early-stopped GD \emph{attains a small empirical risk while maintaining a relatively small norm}. 
Specifically, consider a comparator $\uB$ and a stopping time $t$ such that 
\[
\eRisk(\wB_t)\le \eRisk(\uB) \le \eRisk(\wB_{t-1}).
\]
This stopping time together with \Cref{lemma:implicit-regularization} (applied to $t-1$) leads to
\begin{align*}
    \eRisk(\wB_t)\le \eRisk(\uB),\ \text{and}\ \|\wB_{t-1}- \uB\|\le \|\uB\|.
\end{align*}
By optimizing the choice of the comparator $\uB$, we see that early-stopped GD achieves a small empirical risk with a relatively small norm.

Besides \Cref{lemma:implicit-regularization}, the remaining efforts for proving \Cref{thm:gd:upper-bound,thm:gd:fast-upper-bound} are using classical tools of Rademacher complexity \citep{bartlett2002rademacher,kakade2008complexity} and local Rademacher complexity \citep{bartlett2005local}, respectively. More details can be found in \Cref{sec:proof:gd:upper-bound,sec:proof:gd:fast-upper-bound}.

Later in \Cref{sec:l2-regularization}, we will use \Cref{lemma:implicit-regularization} to show connections between early stopping and $\ell_2$-regularization.
We also note that the proof of \Cref{thm:gd:upper-bound,thm:gd:fast-upper-bound} can be easily adapted to $\ell_2$-regularized empirical risk minimizes.

\section{Lower Bounds for Interpolating Estimators}\label{sec:lower-bound}
In this section, we provide negative results for interpolating estimators by establishing risk lower bounds for them.

\subsection{Logistic Risk and Calibration Error}
The following theorem shows that GD without early stopping must induce an unbounded logistic risk and a positive calibration error in the overparameterized regime. The proof is deferred to \Cref{sec:proof:logistic:lower-bound}.

\begin{theorem}[Lower bounds for logistic risk and calibration error]\label{thm:logistic:lower-bound}
Suppose that \Cref{assump:data-distribution} holds. 
Let $\tilde\wB$ be a unit vector such that $\|\tilde \wB\|_{\SigmaB} > 0$ and let $(\wB_t)_{t\ge 0}$ be a sequence of vectors such that 
\begin{align*}
    \|\wB_t\|\to \infty,\quad 
    \frac{\wB_t}{\|\wB_t\|} \to \tilde\wB.
\end{align*}
Then we have
\begin{align*}
\lim_{t\to\infty}\calibration(\wB_t) \ge \exp(-C\|\wB^*\|_{\SigmaB}),\quad 
   \lim_{t\to\infty} \risk(\wB_t) = \infty,
\end{align*}
where $C>1$ is a constant.
\end{theorem}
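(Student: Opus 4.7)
The plan is to reduce both claims to a single almost-sure limit for $\xB^\top\wB_t$ and then dispatch the logistic-risk blow-up and the calibration lower bound by separate integration arguments. Under \Cref{assump:data-distribution}, $\xB^\top\tilde\wB$ is a centered Gaussian with variance $\|\tilde\wB\|_\SigmaB^2>0$, so each of the events $A_+:=\{\xB^\top\tilde\wB>0\}$ and $A_-:=\{\xB^\top\tilde\wB<0\}$ has probability $1/2$, and $\xB^\top\tilde\wB\ne 0$ a.s. Writing $\xB^\top\wB_t=\|\wB_t\|\cdot \xB^\top(\wB_t/\|\wB_t\|)$ and using $\|\wB_t\|\to\infty$ together with $\wB_t/\|\wB_t\|\to\tilde\wB$, I obtain the pointwise a.s.\ limits $\xB^\top\wB_t\to+\infty$ on $A_+$ and $\xB^\top\wB_t\to-\infty$ on $A_-$.

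For the risk blow-up, I would first condition on $\xB$ and apply the well-specified label model to write, with $\sigma(u):=1/(1+e^{-u})$,
\[
\risk(\wB_t)=\Ebb\bigl[\sigma(\xB^\top\wB^*)\,\ell(\xB^\top\wB_t)+\sigma(-\xB^\top\wB^*)\,\ell(-\xB^\top\wB_t)\bigr].
\]
The elementary bound $\ell(u)\ge\max(-u,0)$ lower-bounds the integrand by $\sigma(-\xB^\top\wB^*)\max(\xB^\top\wB_t,0)+\sigma(\xB^\top\wB^*)\max(-\xB^\top\wB_t,0)$. On $A_+$ the first term grows linearly in $\|\wB_t\|\to\infty$ with a strictly positive coefficient (since $\xB^\top\wB^*$ is finite a.s.\ and $\sigma>0$), and symmetrically on $A_-$ for the second term; Fatou's lemma then gives $\liminf_t\risk(\wB_t)=+\infty$.

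For the calibration claim, the integrand $|p(\wB_t;\xB)-\Pr(y=1|\xB)|^2=|\sigma(\xB^\top\wB_t)-\sigma(\xB^\top\wB^*)|^2$ is uniformly bounded by $1$, so dominated convergence together with the pointwise limits yields
\[
\lim_t\calibration(\wB_t)=\Ebb\bigl[\ind{A_+}\sigma(-\xB^\top\wB^*)^2+\ind{A_-}\sigma(\xB^\top\wB^*)^2\bigr].
\]
To turn this into a quantitative lower bound in $\|\wB^*\|_\SigmaB$, I would truncate to $B_M:=\{|\xB^\top\wB^*|\le M\}$ with $M:=\sqrt 2\,\|\wB^*\|_\SigmaB$; Chebyshev applied to the centered Gaussian $\xB^\top\wB^*$ of variance $\|\wB^*\|_\SigmaB^2$ gives $\Pr(B_M)\ge 1/2$, while on $B_M$ both $\sigma(\pm\xB^\top\wB^*)\ge\sigma(-M)\ge e^{-M}/2$. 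Restricting the expectation to $B_M$ (and using that $A_+\cup A_-$ has full probability) then delivers a bound of the form $c\exp(-c'\|\wB^*\|_\SigmaB)$, which matches the stated $\exp(-C\|\wB^*\|_\SigmaB)$ for a suitable universal $C$ in the noise regime $\|\wB^*\|_\SigmaB\gtrsim 1$ of interest in the paper.

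The hard part will be the quantitative calibration bound: one must pick the truncation radius $M$ so as to balance the Gaussian tail of $\xB^\top\wB^*$ against the exponential decay of $\sigma(-M)$, and then absorb the resulting multiplicative constant into the exponent to recover the claimed form. The two limit-exchanges (Fatou for the unbounded $\ell$, dominated convergence for the bounded $\sigma$-difference) are routine once the a.s.\ limits on $A_\pm$ are in hand, and no structure beyond Gaussianity of $\xB$ is actually needed.
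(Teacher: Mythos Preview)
Your argument is correct and takes a genuinely different route from the paper's. The paper works on a single explicit ``good event'' $\mathcal{F}=\{|\xB^\top\wB^*|\le 10\|\wB^*\|_{\SigmaB},\ \|\xB\|\le 10\sqrt{\tr(\SigmaB)},\ |\xB^\top\tilde\wB|\ge\gamma\}$, shows $\Pr(\mathcal{F})>0$, and then carries out both lower bounds by explicit finite estimates on $\mathcal{F}$ (in particular, the bound $\|\xB\|\le 10\sqrt{\tr(\SigmaB)}$ is used to convert directional convergence into a uniform lower bound $|\xB^\top\wB_t|\ge(\gamma/2)\|\wB_t\|$). You instead exploit the a.s.\ pointwise limit $\xB^\top\wB_t\to\pm\infty$ on $A_\pm$ (which is justified since $\|\xB\|<\infty$ a.s.\ from $\tr(\SigmaB)<\infty$), and then invoke Fatou for the unbounded $\ell$ and dominated convergence for the bounded $\sigma$-difference; the truncation $B_M$ enters only at the end to make the calibration limit quantitative.

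Your approach is cleaner in that it does not need to intersect with a norm-truncation event for $\xB$, and the limit-exchange lemmas do the bookkeeping that the paper handles by hand. The paper's approach has the minor advantage of being fully non-asymptotic on $\mathcal{F}$ (it gives an explicit linear-in-$\|\wB_t\|$ lower bound on the risk rather than just $\liminf=\infty$). On the quantitative calibration constant, the paper ends with $\exp(-20\|\wB^*\|_\SigmaB)\Pr(\mathcal{F})$ and you end with $\tfrac18\exp(-2\sqrt{2}\,\|\wB^*\|_\SigmaB)$; both require an assumption like $\|\wB^*\|_\SigmaB\gtrsim 1$ to absorb the prefactor into the exponent and match the stated $\exp(-C\|\wB^*\|_\SigmaB)$, so your caveat there is exactly right and is not a defect relative to the paper.
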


\Cref{thm:logistic:lower-bound} shows that for every sequence of estimators that diverges in norm but converges in direction, their induced logistic risk must grow unboundedly and their induced calibration error must be bounded away from zero by a constant. 
Therefore, their limit is \emph{inconsistent} (for logistic risk) and \emph{poorly calibrated}. 
According to \Cref{prop:implicit-bias-asymp}, this applies to GD iterates in the overparameterized regime. 

Combining this with our preceding discussion, we see that for \emph{every} well-specified but overparameterized logistic regression problem, GD is calibrated and consistent (for logistic risk) when early stopped, but is poorly calibrated and inconsistent  (for logistic risk) at convergence.
This contrast demonstrates the benefit of early stopping.

\subsection{Zero-One Error} 
The preceding lower bounds in \Cref{thm:logistic:lower-bound} are tied to the divergence of the norm of the estimators. 
In this part, we show that even when properly normalized, interpolating estimators are still inferior to early-stopped GD. 
To this end, we consider the zero-one error that is insensitive to the estimator norm. We provide a lower bound on that for interpolating estimators in the next theorem. The proof is deferred to \Cref{sec:proof:zero-one:lower-bound}.

\begin{theorem}[A lower bound for zero-one error]\label{thm:zero-one:lower-bound}
Suppose that \Cref{assump:data-distribution} holds.
Let $C_2> C_1>1$ be two sufficiently large constants. 
Assume that $\SigmaB^{1/2}\wB^*$ is $k$-sparse and $1/C_1 \le \|\wB^*\|_{\SigmaB} \le C_1 $.
Assume that  
\begin{align*}
    n \ge C_1  k  \ln(k /\delta),\quad 
     C_1  \le \frac{\rank(\SigmaB) }{n \ln(n)\ln(1/\delta)}\le C_2 .
\end{align*} 
Then with probability at least $1-\delta$, for every interpolating estimator $\hat\wB$ such that $\min_{i\in[n]}y_i\xB_i^\top \hat\wB > 0$, we have
\begin{align*}
    \error(\hat \wB)-\min\error
    \gtrsim \frac{1}{\sqrt{\ln(n)\ln(1/\delta)}}.
\end{align*}
\end{theorem}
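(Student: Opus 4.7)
The plan is to reduce the excess zero-one error to a signal-to-noise ratio of the interpolator, then to exploit the interpolation constraint on the constant fraction of ``noisy'' training samples in order to lower bound that ratio.

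I would work in the eigenbasis of $\SigmaB$. Let $S$ index the $k$-element support of $\SigmaB^{1/2}\wB^*$ and decompose any interpolator as $\hat\wB = c\wB^* + \tilde\wB_S + \hat\wB_{S^c}$, where $\tilde\wB_S$ lies in the signal eigenspace and is $\SigmaB$-orthogonal to $\wB^*$, and $\hat\wB_{S^c}$ lies in the span of the remaining eigenvectors. At a fresh test point the independence of the signal and null coordinates of $\xB$ yields
\[
\xB^\top\hat\wB \;=\; cX + Y,\qquad X = \xB^\top\wB^*,
\]
where $Y$ is a centered Gaussian of variance $\sigma^2 := \|\tilde\wB_S\|_\SigmaB^2 + \|\hat\wB_{S^c}\|_\SigmaB^2$ independent of $X$. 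A short bivariate Gaussian calculation reduces
\[
\error(\hat\wB) - \min\error \;=\; \Ebb\bigl[|\tanh(X/2)|\,\ind{\sign(cX+Y)\ne\sign(X)}\bigr]
\]
to a monotone function of $c/\sigma$: the case $c\le 0$ already yields constant excess, and otherwise the excess can be lower bounded in terms of $\sigma/c$. The task therefore reduces to showing $\sigma/c\gtrsim 1/\sqrt{\ln(n)\ln(1/\delta)}$ when $c>0$.

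Next I would exploit the interpolation constraint on the noisy set $N := \{i\in[n] : y_i\ne\sign(\xB_i^\top\wB^*)\}$. Since $\|\wB^*\|_\SigmaB\eqsim 1$ forces $\min\error\gtrsim 1$ by \Cref{prop:risk-properties}, a Chernoff bound gives $|N|\gtrsim n$ with probability at least $1-\delta/3$. For each $i\in N$, using $y_im_i=-|m_i|$ with $m_i:=\xB_i^\top\wB^*$, the constraint $y_i\xB_i^\top\hat\wB>0$ rearranges to $y_iW_i+y_iZ_i>c|m_i|$, where $W_i:=\xB_{i,S}^\top\tilde\wB_S$ and $Z_i:=\xB_{i,S^c}^\top\hat\wB_{S^c}$. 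Summing over $N$ and applying Cauchy--Schwarz in the whitened $\SigmaB$-coordinates yields
\[
\|\hat\wB_{S^c}\|_\SigmaB\,\Big\|\sum_{i\in N}y_i\zB_i^{S^c}\Big\| + \|\tilde\wB_S\|_\SigmaB\,\Big\|\sum_{i\in N}y_i\zB_i^S\Big\| \;\ge\; c\sum_{i\in N}|m_i|,
\]
where $\zB_i^{S^c}$ and $\zB_i^S$ denote the whitened coordinates of $\xB_i$ in the null and signal subspaces. Because the labels $y_i$ are independent of $\zB_i^{S^c}$, conditional on $(\xB_{i,S},y_i)_{i}$ the vector $\sum_{i\in N}y_i\zB_i^{S^c}$ is an isotropic Gaussian of dimension $\rank(\SigmaB)-k$ with covariance $|N|\IB$; standard Gaussian concentration gives $\|\sum_{i\in N}y_i\zB_i^{S^c}\|\lesssim\sqrt{n\,\rank(\SigmaB)}$, with an analogous $\sqrt{nk}$ bound on the signal-space sum. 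A Bernstein-type bound using $n\gtrsim k\ln(k/\delta)$ then gives $\sum_{i\in N}|m_i|\gtrsim n$. Combining these estimates with $\rank(\SigmaB)\eqsim n\ln(n)\ln(1/\delta)$ and the hypothesis $n\gtrsim k$ forces $\sigma/c\gtrsim 1/\sqrt{\ln(n)\ln(1/\delta)}$.

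The main obstacle is the signal-space cross-term $\sum_{i\in N}y_iW_i$, which must be prevented from absorbing a nontrivial share of $c\sum_{i\in N}|m_i|$. A case split on whether $\|\tilde\wB_S\|_\SigmaB\sqrt{nk}$ exceeds $cn/2$ resolves this: either $\|\tilde\wB_S\|_\SigmaB$ is already so large that $\sigma\ge\|\tilde\wB_S\|_\SigmaB\gtrsim c\sqrt{n/k}$ dominates $c$ outright (noting $\sqrt{n/k}\gg 1/\sqrt{\ln(n)\ln(1/\delta)}$ under the stated hypotheses), or $\|\tilde\wB_S\|_\SigmaB$ is small enough that the cross-term is negligible and the null-direction bound produces $\|\hat\wB_{S^c}\|_\SigmaB\gtrsim c/\sqrt{\ln(n)\ln(1/\delta)}$. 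The dependence of $\hat\wB$ on the full training data does not break the Cauchy--Schwarz step, as that step is a deterministic inequality once a uniform high-probability bound on the whitened Gaussian sums is secured.
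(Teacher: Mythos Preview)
Your route is genuinely different from the paper's, and in some ways more elementary. The paper lower-bounds the excess error by $1-\cos\theta$ where $\theta$ is the angle between $\SigmaB^{1/2}\hat\wB$ and $\SigmaB^{1/2}\wB^*$, then controls that angle via two ingredients you bypass: an $\epsilon$-net argument over the $k$-dimensional signal sphere showing that \emph{every} signal direction has a constant fraction of samples with $y_i\zB_i^\top\vB\le -0.4$, and a max-margin bound in the null space borrowed from the support-vector proliferation literature. You instead fix the noisy set $N$ once (no covering required), sum the interpolation constraints, and control the null-space contribution by a single Cauchy--Schwarz against $\sum_{i\in N}y_i\zB_i^{S^c}$, whose norm is $\lesssim\sqrt{n\,\rank(\SigmaB)}$ by elementary Gaussian concentration (this works because $N$ and the $y_i$ are measurable with respect to the signal coordinates alone). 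Both routes arrive at the same intermediate conclusion $\sigma/c\gtrsim\sqrt{n/\rank(\SigmaB)}$.

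There is, however, a quantitative gap in your first reduction. The bivariate Gaussian calculation gives excess error of order $1-\cos\theta$, and for small $\sigma/c$ one has $1-\cos\theta\eqsim(\sigma/c)^2$, not $\sigma/c$: with $X\sim\Ncal(0,\|\wB^*\|_\SigmaB^2)$ and independent $Y\sim\Ncal(0,\sigma^2)$, the quantity $\Ebb\big[|\tanh(X/2)|\,\Phi(-c|X|/\sigma)\big]$ scales like $(\sigma/c)^2$ because both the factor $|\tanh(X/2)|$ and the effective window $\{|X|\lesssim\sigma/c\}$ each contribute one power. Thus establishing $\sigma/c\gtrsim 1/\sqrt{\ln(n)\ln(1/\delta)}$ only delivers excess error $\gtrsim 1/\bigl(\ln(n)\ln(1/\delta)\bigr)$, a square root short of the stated bound. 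The paper's own proof has the identical slip: the inequality $1-1/\sqrt{1+x}\ge C\sqrt{x}$ invoked at the end of its angle-lower-bound lemma fails for small $x$, and $x\eqsim n/\rank(\SigmaB)$ is indeed small under the hypotheses. So your plan is as sound as the paper's; both arguments prove the theorem with the right-hand side replaced by its square.
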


\Cref{thm:zero-one:lower-bound} characterizes a class of overparameterized logistic regression problems where every interpolating estimator needs at least an \emph{exponential} number of training data to achieve a small excess zero-one error. 
This applies to asymptotic GD as it converges to the maximum $\ell_2$-margin solution by \Cref{prop:implicit-bias-asymp}.
In contrast, \Cref{thm:gd:upper-bound,thm:gd:fast-upper-bound} suggests that early-stopped GD can achieve a small excess zero-one error using at most a \emph{polynomial} number of training data under weak conditions. These weak conditions can be, for example, $\|\wB^*\|<\infty$ or the sparsity parameter $k$ does not grow with $n$ (see also the examples given by \Cref{eq:source-capacity}).
This separation underscores the benefits of early stopping for reducing sample complexity.

The intuition behind \Cref{thm:zero-one:lower-bound} is that there are $k$ informative dimensions and a lot more uninformative dimensions. Since $n\gg k$, the training set cannot be separated purely using the $k$ informative dimensions. Thus, interpolators must use the uninformative dimensions to separate the data, leading to the risk lower bound. 

\paragraph{Future direcition.}
Note that \Cref{thm:zero-one:lower-bound} applies to \emph{every} interpolating estimator. When restricted to the maximum $\ell_2$-margin estimator, the one that GD converges to in direction, we conjecture that a \emph{constant} lower bound on the excess zero-one error can be proved, especially when the spectrum of the data covariance matrix decays fast. This is left for future investigation.

\section{Early Stopping and \texorpdfstring{$\ell_2$}{l2}-Regularization}\label{sec:l2-regularization}
\Cref{sec:upper-bound,sec:lower-bound} demonstrate that early stopping carries a certain regularization effect that benefits its statistical performance. This regularization is, however, implicit. 
In this section, we attempt to provide some intuitions of the implicit regularization of early stopping by establishing its connections to an explicit, $\ell_2$-regularization.
An $\ell_2$-regularized \emph{empirical risk minimizer} (ERM) is defined as
\begin{equation}\label{eq:l2-regularization}
    \uB_{\lambda} := \arg\min_{\uB} \eRisk(\uB) + \frac{\lambda}{2} \|\uB\|^2,
\end{equation}
where $\lambda>0$ is the regularization strength. Note that $\uB_{\lambda}$ is unique and well-defined as long as $\eRisk(\cdot)$ is convex, whereas $\eRisk(\cdot)$ does not have to have a finite minimizer. We refer to $(\uB_{\lambda})_{\lambda>0}$ 
given by \Cref{eq:l2-regularization} 
as the \emph{$\ell_2$-regularization path}. Similarly, we refer to $(\wB_t)_{t> 0}$ given by \Cref{eq:GD} as the \emph{GD path}.

In linear regression, prior works showed that the excess risk of early-stopped GD (and one-pass SGD) is comparable to that of $\ell_2$-regularized ERM \citep{ali2019continuous,zou2021benefits}. 
For strongly convex and smooth problems, \citet{suggala2018connecting} provided bounds on the $\ell_2$-distance between the GD and $\ell_2$-regularization paths. 
In what follows, we establish more connections between the GD and $\ell_2$-regularization paths. We first establish a relative but global connection in convex (not necessarily strongly convex) and smooth problems, then we establish an asymptotic but absolute connection in overparameterized logistic regression problems.

\subsection{A Global Connection}
The following theorem presents a global comparison of the norm and angle between the GD and $\ell_2$-regularization paths. The proof exploits the implicit regularization results in \Cref{lemma:implicit-regularization} and is included in \Cref{sec:proof:path:global-angle}.

\begin{theorem}[A global bound]\label{thm:path:global-angle}
Let $\eRisk(\cdot)$ be convex and $\beta$-smooth. Consider $(\wB_t)_{t\ge 0}$ given by \Cref{eq:GD} with stepsize $\eta \le 1/\beta$ and $(\uB_\lambda)_{\lambda > 0}$ given by \Cref{eq:l2-regularization}. 
Set $\lambda := 1/(\eta t)$. Then we have 
\begin{align*}
\text{for every $t>0$,} \quad   \|\wB_t - \uB_\lambda \|\le \frac{1}{\sqrt{2}}\|\wB_t\|.
\end{align*}
As a direct consequence, the following holds for every $t>0$: 
\begin{gather*}
    \cos(\wB_t, \uB_\lambda) \ge \frac{1}{\sqrt{2}},\\ 
    \frac{\sqrt{2}}{1+\sqrt{2}} \|\uB_\lambda\| \le \|\wB_t\| \le \frac{\sqrt{2}}{\sqrt{2}-1}\|\uB_\lambda\|.
\end{gather*}
\end{theorem}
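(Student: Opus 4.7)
The plan is to combine two inequalities: (i) the implicit regularization bound of Lemma \ref{lemma:implicit-regularization} applied with comparator $\uB_\lambda$, and (ii) the strong convexity of the regularized objective $F_\lambda(\uB) := \eRisk(\uB) + \tfrac{\lambda}{2}\|\uB\|^2$ around its minimizer $\uB_\lambda$. The calibrated choice $\lambda = 1/(\eta t)$ is what makes the coefficients match.

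First, from Lemma \ref{lemma:implicit-regularization} with $\uB = \uB_\lambda$, I obtain
\begin{equation*}
\eRisk(\wB_t) - \eRisk(\uB_\lambda) \;\le\; \frac{\|\uB_\lambda\|^2 - \|\wB_t - \uB_\lambda\|^2}{2\eta t}. \tag{A}
\end{equation*}
Next, because $\eRisk$ is convex, $F_\lambda$ is $\lambda$-strongly convex, and $\uB_\lambda$ is its unique minimizer, so
\begin{equation*}
F_\lambda(\wB_t) \;\ge\; F_\lambda(\uB_\lambda) + \frac{\lambda}{2}\|\wB_t - \uB_\lambda\|^2.
\end{equation*}
Plugging $\lambda = 1/(\eta t)$ and rearranging,
\begin{equation*}
\eRisk(\wB_t) - \eRisk(\uB_\lambda) \;\ge\; \frac{\|\wB_t - \uB_\lambda\|^2 + \|\uB_\lambda\|^2 - \|\wB_t\|^2}{2\eta t}. \tag{B}
\end{equation*}
Chaining (A) and (B) and cancelling $\|\uB_\lambda\|^2$ from both sides yields $\|\wB_t\|^2 \ge 2\|\wB_t - \uB_\lambda\|^2$, which is precisely the main inequality $\|\wB_t - \uB_\lambda\| \le \tfrac{1}{\sqrt{2}}\|\wB_t\|$.

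The two consequences are then routine. For the angle, expanding $\|\wB_t - \uB_\lambda\|^2 \le \tfrac{1}{2}\|\wB_t\|^2$ gives $2\wB_t^\top \uB_\lambda \ge \tfrac{1}{2}\|\wB_t\|^2 + \|\uB_\lambda\|^2$, and AM--GM on the right-hand side yields $\wB_t^\top \uB_\lambda \ge \tfrac{1}{\sqrt{2}}\|\wB_t\|\|\uB_\lambda\|$, i.e.\ $\cos(\wB_t,\uB_\lambda)\ge 1/\sqrt{2}$. For the norm bounds, the reverse and forward triangle inequalities give
\begin{equation*}
\bigl|\|\wB_t\| - \|\uB_\lambda\|\bigr| \;\le\; \|\wB_t - \uB_\lambda\| \;\le\; \tfrac{1}{\sqrt{2}}\|\wB_t\|,
\end{equation*}
and solving for the ratio $\|\uB_\lambda\|/\|\wB_t\|$ bounds it between $1 - 1/\sqrt{2}$ and $1 + 1/\sqrt{2}$, which rearranges to the stated $\tfrac{\sqrt 2}{1+\sqrt 2} \le \|\wB_t\|/\|\uB_\lambda\| \le \tfrac{\sqrt 2}{\sqrt 2 - 1}$.

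The only real subtlety is recognizing that the choice $\lambda = 1/(\eta t)$ synchronizes the coefficient $1/(2\eta t)$ in the implicit regularization lemma with the quadratic penalty in $F_\lambda$, so that the two resulting inequalities (A) and (B) can be subtracted without any slack. Everything else is convexity bookkeeping; there is no issue with existence since $F_\lambda$ is strongly convex and coercive, and no smoothness requirement beyond what the lemma already demands.
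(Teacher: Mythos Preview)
Your proof is correct and follows essentially the same route as the paper: both combine \Cref{lemma:implicit-regularization} (your (A)) with a lower bound on $\eRisk(\wB_t)-\eRisk(\uB_\lambda)$ coming from optimality of $\uB_\lambda$, and then derive the angle and norm bounds identically via expansion, AM--GM, and the triangle inequality. The only cosmetic difference is that the paper obtains (B) by invoking convexity of $\eRisk$ together with the first-order condition $\nabla\eRisk(\uB_\lambda)=-\lambda\uB_\lambda$, while you package the same computation as $\lambda$-strong convexity of $F_\lambda$ at its minimizer; these are equivalent statements.
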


\Cref{thm:path:global-angle} establish a global but relative connection between the GD and $\ell_2$-regularization paths for all convex and smooth problems. Specifically, starting from the same zero initialization, the angle between the two paths is no more than $\pi/4$, and the norm of the two paths differs by a factor within $0.585$ and $3.415$. We point out this relative connection holds \emph{globally} for every stopping time (with its corresponding regularization strength) and for every convex and smooth problem.
In particular, it applies to overparameterized logistic regression, which is smooth and convex but not strongly convex. We also note that using the norm bounds in \Cref{thm:path:global-angle}, the upper bounds in \Cref{thm:gd:upper-bound,thm:gd:fast-upper-bound} for early-stopped GD can be easily adapted to $\ell_2$-regularized ERM. 

\Cref{thm:path:global-angle} cannot be improved without making further assumptions. This is because the GD and $\ell_2$-regularization paths could converge to two distinct limits (as $t\to\infty$ and $\lambda\to 0$) in convex but non-strongly convex problems \citep[see][Section 4]{suggala2018connecting}. 
So in general, we cannot expect their distance to be small in the absolute sense.

\subsection{An Asymptotic Comparison}
We have established a global but relative connection between the GD and $\ell_2$-regularization paths in \Cref{thm:path:global-angle}. We now turn to logistic regression with linearly separable data and establish an absolute but asymptotic connection between the two paths. 

In logistic regression with linearly separable data, both GD and $\ell_2$-regularization paths diverge to infinite in norm (as $t\to\infty$ and $\lambda\to 0$) while converging in direction to the maximum $\ell_2$-margin solution \citep{rosset2004boosting,soudry2018implicit,ji2018risk,ji2020gradient}. 
Therefore their angle tends to zero asymptotically \citep{suggala2018connecting,ji2020gradient}. 
This characterization is more precise than the $\pi/4$ global angle bound from \Cref{thm:path:global-angle}.

However, it remains unclear how the $\ell_2$-distance between the two paths evolves in logistic regression with linearly separable data.
Quite surprisingly, we will show that their $\ell_2$-distance tends to zero under a widely used condition \citep{soudry2018implicit,ji2021characterizing,wu2023implicit}, but could diverge to infinity in the worst case. 

Let $\XB := (\xB_1,\dots,\xB_n)^\top$ and $\yB:=(y_1,\dots,y_n)^\top$ be a set of linearly separable data.
Then the Lagrangian dual of the margin maximization program in \Cref{prop:implicit-bias-asymp} is given by \citep[see][for example]{hsu2021proliferation} 
\begin{align*}
    \max_{\betaB \in \Rbb^n}\ \  -\frac{1}{2}\betaB^\top \XB\XB^\top\betaB + \betaB^\top \yB \quad
    \text{s.t.}\ \  y_i \betaB_i \ge 0,\ i\in[n].
\end{align*}
Here, $\betaB$ are the dual variables multiplied by $\yB$ entry-wise.
Let $\hat\betaB$ be the solution to the above problem. 
Let $\Scal_+ := \{i\in [n]: y_i \hat\betaB_i > 0\}$ be the set of support vectors (with strictly positive dual variables). 
The following condition assumes the coverage of the support vectors.

\begin{assumption}
[Support vectors condition]
\label{assump:full-rank-support}
Assume that $\rank\{\xB_i: i \in \Scal_+\} = \rank\{\xB_i: i \in [n]\}$.
\end{assumption}

\Cref{assump:full-rank-support} has been widely used in the analysis of the implicit bias \citep{soudry2018implicit,ji2021characterizing,wu2023implicit}.
In particular, \Cref{assump:full-rank-support} holds if every data is a support vector, which is common in high-dimensional situations \citep{hsu2021proliferation,wang2022binary,cao2021risk}.

The following theorem provides an asymptotic bound on the $\ell_2$-distance between the GD and $\ell_2$-regularization paths under \Cref{assump:full-rank-support}. The proof is deferred to \Cref{sec:proof:path:point-wise-distance}.

\begin{theorem}[An asymptotic bound]\label{thm:path:point-wise-distance}
Let $(\xB_i, y_i)_{i=1}^n$ be a linearly separable dataset that satisfies \Cref{assump:full-rank-support}. 
Let $(\wB_t)_{t>0}$ and $(\uB_{\lambda})_{\lambda>0}$ be the GD and  $\ell_2$-regularization paths, respectively, for logistic regression with $(\xB_i, y_i)_{i=1}^n$.
Then there exists $\lambda$ as a function of $t$ such that 
\[\|\wB_t - \uB_{\lambda(t)}\| \to 0,
\quad \text{while}\ \ 
\|\wB_t\|, \|\uB_{\lambda(t)}\| \to\infty,
\]  
as $t\to\infty$.
\end{theorem}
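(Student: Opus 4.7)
The plan is to leverage precise asymptotic expansions of both paths that go beyond mere directional convergence. Under \Cref{assump:full-rank-support}, prior work \citep{soudry2018implicit,ji2021characterizing} establishes that the GD path admits an expansion
\[
\wB_t \,=\, \tilde\wB \cdot \ln(\eta t) \,+\, \hat\vB \,+\, o(1), \qquad t\to\infty,
\]
where $\hat\vB$ is a specific bounded vector determined by the support vectors. An analogous expansion holds for the $\ell_2$-regularization path \citep{rosset2004boosting}: $\uB_\lambda = \tilde\wB \cdot \ln(1/\lambda) + \tilde\vB + o(1)$ as $\lambda\to 0$, for some bounded $\tilde\vB$. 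The proof then reduces to matching the logarithmic scales, by taking $\lambda(t)\eqsim 1/(\eta t)$, and showing that the two residuals $\hat\vB$ and $\tilde\vB$ coincide.

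The key step is to identify $\hat\vB=\tilde\vB$ under \Cref{assump:full-rank-support}. Both residuals can be characterized via KKT conditions tied to the dual variables $\hat\betaB$ of the margin maximization program. For the regularization path, the stationarity condition $\grad\eRisk(\uB_\lambda)=-\lambda\uB_\lambda$ together with the observation that non-support-vector contributions to $\grad\eRisk$ decay exponentially faster than support-vector contributions yields a limiting linear system on $\tilde\vB$ whose range lies in $\mathrm{span}\{\xB_i:i\in\Scal\}$. Under \Cref{assump:full-rank-support} this span equals $\mathrm{span}\{\xB_i:i\in[n]\}$, pinning $\tilde\vB$ down uniquely up to the nullspace of the data matrix. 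For the GD path, writing $\wB_t=-\eta\sum_{s<t}\grad\eRisk(\wB_s)$ and performing the analogous decomposition of the gradient sum into support-vector and non-support-vector parts produces the same linear system for $\hat\vB$, yielding $\hat\vB=\tilde\vB$.

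Once the residuals are matched, setting $\lambda(t):=1/(\eta t)$ (possibly after an $o(1)$ rescaling so that the log terms cancel exactly) gives
\[
\|\wB_t - \uB_{\lambda(t)}\| \,=\, \|\hat\vB-\tilde\vB\| + o(1) \,=\, o(1),
\]
while both $\|\wB_t\|$ and $\|\uB_{\lambda(t)}\|$ grow like $\ln(\eta t)\to\infty$. The main obstacle is the joint identification of residuals: the GD residual encodes the entire iterate trajectory through a discrete telescoping sum, whereas the regularization residual reflects a static KKT condition at each fixed $\lambda$, and their agreement is a non-obvious consequence of the common dominant contribution of support vectors. \Cref{assump:full-rank-support} is precisely the hypothesis that forces the two limiting linear systems to agree on a common subspace with a common solution; without it the residuals would be determined only modulo a nontrivial subspace orthogonal to $\{\xB_i\}_{i\in\Scal}$ and could genuinely differ, consistent with the worst-case divergence mentioned just before the theorem statement. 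A secondary technical hurdle is passing from gradient flow analyses available in the literature to discrete GD with an arbitrary stepsize $\eta\le 1/\beta$, which we expect to handle via the telescoping identity above together with standard smoothness arguments analogous to those supporting \Cref{lemma:implicit-regularization}.
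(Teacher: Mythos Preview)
Your approach contains a genuine error: the expansion $\uB_\lambda=\tilde\wB\ln(1/\lambda)+\tilde\vB+o(1)$ with bounded $\tilde\vB$ does not hold, and \citet{rosset2004boosting} do not claim it (they establish only directional convergence). Writing $u_\lambda:=\la\uB_\lambda,\tilde\wB\ra$, the stationarity condition $\lambda\uB_\lambda=-\grad\eRisk(\uB_\lambda)$ yields, once the component orthogonal to $\tilde\wB$ is shown bounded, $\lambda u_\lambda\eqsim e^{-\gamma u_\lambda}$ and hence
\[
u_\lambda \;=\; \gamma^{-1}\ln(1/\lambda)-\gamma^{-1}\ln u_\lambda+O(1)\;=\;\gamma^{-1}\ln(1/\lambda)-\gamma^{-1}\ln\ln(1/\lambda)+O(1),
\]
so the residual along $\tilde\wB$ diverges. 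Under \Cref{assump:full-rank-support} the GD parallel component satisfies $w_t=\gamma^{-1}\ln t+O(1)$ by \citet{soudry2018implicit}, so with your choice $\lambda(t)\eqsim 1/(\eta t)$ the mismatch along $\tilde\wB$ is of order $\ln\ln t\to\infty$, not $o(1)$; no ``$o(1)$ rescaling'' of $\lambda$ can absorb a diverging term. This $\ln\ln$ discrepancy between the two paths is precisely the mechanism behind \Cref{thm:path:counter-example}. A secondary issue: the ``limiting linear system'' you invoke for the residuals is in fact nonlinear---it is the first-order condition of $G(\bar\wB):=\sum_{i\in\Scal}\exp(-y_i\bar\xB_i^\top\bar\wB)$.

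The paper avoids any rate calculation along $\tilde\wB$. It rotates so that $\tilde\wB=\eB_1$ and decomposes $\wB_t=(w_t,\bar\wB_t)$, $\uB_\lambda=(u_\lambda,\bar\uB_\lambda)$. Under \Cref{assump:full-rank-support} the projected support-vector data $(\bar\xB_i,y_i)_{i\in\Scal}$ is strictly non-separable, so $G$ has a unique finite minimizer $\bar\wB^*$. A lemma quoted from \citet{wu2023implicit} gives $\bar\wB_t\to\bar\wB^*$; a second lemma, proved from scratch, gives $\bar\uB_\lambda\to\bar\wB^*$ by showing $\bar\uB_\lambda$ is bounded and that its limit satisfies $\grad G=0$. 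Since $w_t$ and $u_\lambda$ both range continuously over $[0,\infty)$, the intermediate value theorem furnishes $\lambda(t)$ with $u_{\lambda(t)}=w_t$ \emph{exactly}; then $\|\wB_t-\uB_{\lambda(t)}\|=\|\bar\wB_t-\bar\uB_{\lambda(t)}\|\to 0$ via the triangle inequality through $\bar\wB^*$. The role of \Cref{assump:full-rank-support} is thus to make the perpendicular problem admit a finite minimizer that both paths locate, not to force two residual vectors to agree on a common subspace as in your sketch.
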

For logistic regression with linearly separable data under \Cref{assump:full-rank-support}, \Cref{thm:path:point-wise-distance} shows that the $\ell_2$-distance between the GD and $\ell_2$-regularization paths tends to zero, despite that both paths diverge to infinity in their norm.
Note that this implies their angle converges to zero, and is more precise than the relative norm bound from \Cref{thm:path:global-angle}.

However, this sharp asymptotic connection is strongly tied to \Cref{assump:full-rank-support}. 
Surprisingly, when \Cref{assump:full-rank-support} fails to hold, the $\ell_2$-distance between the GD and $\ell_2$-regularization paths could tend to infinity instead. 
This is shown in the following theorem. The proof is deferred to \Cref{sec:proof:path:counter-example}.

\begin{theorem}[A counter example]\label{thm:path:counter-example}
Consider the following dataset
    \begin{align*}
        \xB_1 := (
            \gamma, 
            0)^\top
        ,\ y_1 := 1,\ \  
        \xB_2 :=(
            \gamma, 
            \gamma_2
      )^\top,\ y_2:=1,
\end{align*}
where $ 0< \gamma_2 < \gamma<1$.
Then $(\xB_i, y_i)_{i=1,2}$ is linearly separable but violates \Cref{assump:full-rank-support}.
Let $(\wB_t)_{t\ge 0}$ and $(\uB_{\lambda})_{\lambda \ge 0}$ be the GD and $\ell_2$-regularization paths respectively for logistic regression with $(\xB_i, y_i)_{i=1,2}$.
Then $\|\wB_t\| \to\infty$ as $t\to \infty$.
Moreover, for \emph{every} map $\lambda: \Rbb_{\ge 0} \to \Rbb_{\ge 0}$, we have 
\begin{align*}
    \|\wB_t - \uB_{\lambda(t)}  \| \gtrsim \ln\ln (\|\wB_t\|) \to \infty.
\end{align*}
\end{theorem}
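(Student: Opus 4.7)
The plan is to parametrize each path as a curve in the $(w_1,w_2)$-plane by its first coordinate, extract an asymptotic invariant for each curve, and show that the two invariants force a vertical separation of $\Theta(\ln\ln w_1)$ which no choice of $\lambda(t)$ can close. Before that, I would dispose of the two preliminary claims: linear separability is immediate since $\wB = (1/\gamma,0)^\top$ attains unit margin at both points; for the support-vector claim, the dual from \Cref{prop:implicit-bias-asymp} has unconstrained stationarity $\gamma^2(\beta_1+\beta_2)=1$ and $\gamma^2\beta_1 + (\gamma^2+\gamma_2^2)\beta_2 = 1$, which forces $\beta_2 = 0$, so $\Scal = \{1\}$ while $\rank\{\xB_1,\xB_2\}=2$, violating \Cref{assump:full-rank-support}.

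For the GD path, let $g_t := \gamma w_{t,1}$ and $h_t := g_t + \gamma_2 w_{t,2}$. Separability forces the empirical risk to zero, so $g_t, h_t \to \infty$ and $\sigma(-g_t),\sigma(-h_t) \to 0$ with $\sigma(-z) = e^{-z}(1+O(e^{-z}))$. The coordinate updates read
\[
    w_{t+1,1} - w_{t,1} = \tfrac{\eta\gamma}{2}\bigl(\sigma(-g_t)+\sigma(-h_t)\bigr), \qquad
    w_{t+1,2} - w_{t,2} = \tfrac{\eta\gamma_2}{2}\sigma(-h_t).
\]
Expanding $e^{\gamma_2 w_{t+1,2}} - e^{\gamma_2 w_{t,2}} = \gamma_2(w_{t+1,2}-w_{t,2})e^{\gamma_2 w_{t,2}} + O\bigl((w_{t+1,2}-w_{t,2})^2 e^{\gamma_2 w_{t,2}}\bigr)$ and substituting $\sigma(-h_t) \approx e^{-g_t} e^{-\gamma_2 w_{t,2}}$, I would telescope and take the ratio against $w_{t+1,1}-w_{t,1} \approx (\eta\gamma/2)e^{-g_t}$ to obtain the asymptotic invariant
\[
    e^{\gamma_2 w_{t,2}} = \tfrac{\gamma_2^2}{\gamma}\, w_{t,1} + O(1),\qquad\text{i.e.,}\qquad
    \gamma_2\, w_2^{\mathrm{GD}}(s) = \ln s + \ln(\gamma_2^2/\gamma) + o(1),
\]
with $s := w_{t,1} \to \infty$.

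For the $\ell_2$-regularization path, the KKT conditions
\[
    \lambda u_1 = \tfrac{\gamma}{2}\bigl(\sigma(-\gamma u_1) + \sigma(-\gamma u_1 - \gamma_2 u_2)\bigr), \qquad
    \lambda u_2 = \tfrac{\gamma_2}{2}\sigma(-\gamma u_1 - \gamma_2 u_2),
\]
together with $u_1,u_2 \to \infty$ as $\lambda \to 0$, divide to give $u_2 e^{\gamma_2 u_2} = (\gamma_2/\gamma)\, u_1\,(1+o(1))$. Inverting via the Lambert $W$-function with $W(z) = \ln z - \ln\ln z + o(1)$ yields $\gamma_2\, u_2^{\mathrm{reg}}(s) = \ln s - \ln\ln s + O(1)$. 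Subtracting the two expansions,
\[
    \gamma_2\bigl(w_2^{\mathrm{GD}}(s) - u_2^{\mathrm{reg}}(s)\bigr) = \ln\ln s + O(1).
\]
For any $\uB_{\lambda(t)} = (s, u_2^{\mathrm{reg}}(s))$ and $d := s - w_{t,1}$, using $(u_2^{\mathrm{reg}})'(s) = 1/(\gamma_2 s) + o(1/s)$,
\[
    \|\wB_t - \uB_{\lambda(t)}\|^2 \ge d^2 + \bigl(\gamma_2^{-1}\ln\ln w_{t,1} + O(1) - d/(\gamma_2 w_{t,1})\bigr)^2,
\]
whose minimum is attained at $d_\star = O(\ln\ln w_{t,1}/w_{t,1})$ with value $\Theta\bigl((\ln\ln w_{t,1})^2\bigr)$. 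Since $w_{t,2}$ is only logarithmic in $w_{t,1}$, $\|\wB_t\| \eqsim w_{t,1}$, so $\ln\ln w_{t,1} \eqsim \ln\ln\|\wB_t\|$ and the claim follows.

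The main technical obstacle is turning the GD invariant into a rigorous statement with genuinely $o(1)$ error. This requires summing (i) the discretization remainder $O((w_{t+1,2}-w_{t,2})^2 e^{\gamma_2 w_{t,2}})$ and (ii) the sigmoid-tail slack $\sigma(-z) - e^{-z} = O(e^{-2z})$ multiplied by $e^{\gamma_2 w_{t,2}}$, against the $\eta e^{-g_t}$ increments, and using the standard scalar bound $g_t \gtrsim \ln t$ to show both sums are $O(1)$. Once this invariant is established, the Lambert-$W$ inversion for the regularization path and the one-variable minimization in the comparison step are routine, and the $\ln\ln$ gap is structurally robust: it reflects the presence (resp.\ absence) of the $-\ln\ln s$ second-order term in the Lambert-$W$ expansion of the regularization-path invariant (resp.\ GD invariant).
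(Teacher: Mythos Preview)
Your approach is correct and arrives at the same pair of asymptotic invariants as the paper---$\gamma_2 w_2^{\mathrm{GD}}(s) = \ln s + O(1)$ versus $\gamma_2 u_2^{\mathrm{reg}}(s) = \ln s - \ln\ln s + O(1)$---but by a genuinely different route. The paper first replaces logistic loss by exponential loss and GD by gradient flow (arguing these share the same asymptotics), and then observes that $\ln\eRisk(w^{(1)},w^{(2)}) = -\gamma w^{(1)} + \ln(1+e^{-\gamma_2 w^{(2)}}) - \ln 2$ \emph{decouples the two coordinates}; since GF on $\ln\eRisk$ traces the same curve as GF on $\eRisk$ (only reparametrized in time), and likewise for the $\ell_2$-regularization path, both systems collapse into two independent scalar problems solvable essentially in closed form. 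Your approach instead attacks the coupled logistic-loss, discrete-GD system directly and extracts the GD invariant by telescoping and error summation; this is heavier but more faithful to the theorem as stated. You also carry out the minimization over $\lambda(t)$ explicitly and verify the preliminary claims on separability and \Cref{assump:full-rank-support}, both of which the paper leaves implicit.

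One small slip to watch when you execute the GD-invariant step: in the first-coordinate increment $w_{t+1,1}-w_{t,1} = \tfrac{\eta\gamma}{2}(\sigma(-g_t)+\sigma(-h_t))$ you drop the $\sigma(-h_t)$ term. That term contributes an additional telescoped error of order $\sum_t e^{-h_t} \eqsim w_{t,2} = \Theta(\ln w_{t,1})$, so the additive remainder in $e^{\gamma_2 w_{t,2}} = (\gamma_2^2/\gamma)\,w_{t,1} + \text{error}$ is $O(\ln w_{t,1})$ rather than $O(1)$; your listed error sources (i) and (ii) really are $O(1)$, but this third one is not. It is harmless for the argument: after taking logarithms the remainder is still $o(1)$, and the $\ln\ln$ separation and your minimization over $d$ go through unchanged.
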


This simple yet strong counter-example suggests that the $\ell_2$-distance between the GD and $\ell_2$-regularization path can diverge to infinity when \Cref{assump:full-rank-support} fails to hold. 

\paragraph{Future directions.}
We conjecture that for logistic regression with linearly separable data,
the limit of the $\ell_2$-distance between the GD and $\ell_2$-regularized paths is \emph{either zero or infinity}, and the phase transition is determined by a certain geometric property of the dataset (for example, \Cref{assump:full-rank-support}). 
The reasoning behind this conjecture is as follows. Note that \Cref{assump:full-rank-support} implies that the dataset projected perpendicular to the max-margin direction (called ``projected dataset'') is strictly nonseparable \citep[Lemma 3.1]{wu2023implicit}. This is the main property used in \Cref{thm:path:point-wise-distance}. Moreover, in \Cref{thm:path:counter-example}, the ``projected dataset'' is nonseparable but with margin zero---we conjecture this property is sufficient for \Cref{thm:path:counter-example} to hold. Now for a generic separable dataset, we check the ``projected dataset'':
if it is strictly nonseparable, \Cref{thm:path:point-wise-distance} holds;
if it is nonseparable but with margin zero, we conjecture  \Cref{thm:path:counter-example} holds; otherwise, it is separable (with positive margin), we decompose the dataset recursively. This is the reasoning behind our conjecture.

It also remains unclear to what extent early stopping replicates the effects of explicit regularization for logistic regression. Specifically, is there a logistic regression example such that early-stopped GD has a better calibration/logistic risk rate than $\ell_2$-regularization or vice-versa?
This is left for future investigation, as our current bounds are not sharp enough to yield a concrete answer.

\section{Related Works}
We now discuss additional related works.

\paragraph{Benign overfitting in logistic regression.} 
A line of work shows the benign overfitting of the asymptotic GD (or the maximum $\ell_2$-margin estimator) in overparameterized logistic regression under a variety of assumptions \citep{montanari2019generalization,chatterji2021finite,cao2021risk,wang2022binary,muthukumar2021classification,shamir2023implicit}. 
Our results are not a violation of theirs, instead, we show an additional regularization of early-stopping, which brings statistical advantages of early-stopped GD over asymptotic GD such as calibration and a smaller sample complexity.

\paragraph{M-estimators for logistic regression.}
In the classical finite $d$-dimensional setting, the sample complexity of the \emph{empirical risk minimizer} (ERM) for logistic regression is well-studied \citep{ostrovskii2021finite,kuchelmeister2024finite,hsu2024sample,chardon2024finite}, where the minimax rate is known to be $\Ocal(d/n)$.
Different from theirs, we focus on an overparameterized regime, where the ERM of logistic regression does not even exist. When specialized to their setting, our \Cref{thm:gd:fast-upper-bound} recovers the comparable $\tilde{\Ocal}(d/n)$ rate.

In the nonparametric setting, the works by \citep{bach2010self,marteau2019beyond} provided logistic risk bounds for $\ell_2$-regularized ERM. 
\citet{bach2010self} only considered a fixed design setting, whereas \citet{marteau2019beyond} required that $\|\wB^*\|<\infty$. 
Different from theirs, we aim to understand the benefits of the implicit regularization of early-stopping, instead of that of explicit $\ell_2$-regularization.
Moreover, we show that early-stopped GD achieves a vanishing excess logistic risk as long as $\|\wB^*\|_{\SigmaB}<\infty$, without assuming a finite $\|\wB^*\|$. In the regimes where our results are directly comparable, however, our risk bounds might be less tight than theirs. We leave it as a future work to improve our current bounds.

The work by \citet{bach2014adaptivity} considered one-pass SGD for logistic regression assuming strong convexity around the true model parameter.
This strong convexity assumption, however, is prohibitive in our high-dimensional settings. 

There is a line of works \citep{sur2019modern,candes2020phase} focused on the existence of ERM for logistic regression in a propotional limit setting (assuming that $n,d\to\infty$ in a fixed ratio, see also \citep{chardon2024finite} in the finite-dimensional setting).
This is quite apart from our focus, where ERM never exists due to overparameterization.

\paragraph{Separable distribution.}
There are logistic risk bounds of early-stopped GD (and one-pass SGD) developed in the \emph{noiseless} cases, assuming a separable population distribution \citep{ji2018risk,shamir2021gradient,telgarsky2022stochastic,schliserman2024tight}. 
These results do not imply any benefits of early stopping, as their setting is noiseless. 
In comparison, we consider overparameterized logistic regression with a strictly non-separable population distribution, where the risk of overfitting is prominent.
In this case, our results suggest that early stopping plays a significant role in preventing overfitting.

\paragraph{Early stopping for classification.}  
In the boosting literature, an early work by \citet{zhang2005boosting}
showed that boosting methods (that can be interpreted as coordinate descent) with early stopping
are consistent in the classification sense;
related refined studies for boosting with the squared loss with
early stopping were also provided by \citet{buhlmann2003boosting}.  
The paper is also notable for giving the first proof of boosting methods converging to the maximum margin solution \citep[Appendix D]{zhang2005boosting}, which was later
refined with rates by \citep{telgarsky2013margins}. 
Their results can be converted to GD.
In particular, related concepts were used to prove consistency
of early-stopped GD for shallow networks in the lazy regime \citep{ji2021early}.
In contrast with the present work that focuses on high-dimensional cases, the preceding works only deal with finite-dimensional settings.  Moreover, none of those works provide lower bounds for interpolating estimators and tight links to the regularization path which are provided in the present work.

\paragraph{Classification calibration.}  \Cref{prop:risk-properties}
captures a very nice consequence of logistic loss minimization:
\emph{calibration} and \emph{classification-calibration}, respectively
recovery of the optimal conditional probability model and of the
optimal classifier.  For more general convex losses, the ability
to construct a general conditional probability model was
developed by \citet{zhang_consistency} as a conceptual tool in establishing
classification calibration, but without explicitly controlling
calibration error. A further abstract treatment of classification
calibratoin was later presented by \citet{bartlett_jordan_mcauliffe}.
The refined statistical rates, separations, and early-stopping
consequences studied in the present work were not considered in those
works.

\section{Conclusion}
We show the benefits of early stopping in GD for overparameterized and well-specified logistic regression. We show that for every well-specified logistic regression problem, early-stopped GD is calibrated while asymptotic GD is not. 
Furthermore, we show that early-stopped GD achieves a small excess zero-one error with only a polynomial number of samples, in contrast to interpolating estimators, including asymptotic GD, which require an exponential number of samples to achieve the same.  
Finally, we establish nonasymptotic bounds on the differences between the GD and the $\ell_2$-regularization paths.
Altogether, we underscore the statistical benefits of early stopping, partially explained by its connection with $\ell_2$-regularization.

\section*{Acknowledgments}
We gratefully acknowledge the support of the NSF for FODSI through grant DMS-2023505, of the NSF and the Simons Foundation for the Collaboration on the Theoretical Foundations of Deep Learning through awards DMS-2031883 and \#814639, of the NSF through grants DMS-2209975 and DMS-2413265, and of the ONR through MURI award N000142112431.
The authors are also grateful to the Simons Institute for hosting them during
parts of this work.

\section*{Impact Statement}

This paper presents work whose goal is to advance the field of 
Machine Learning. There are many potential societal consequences 
of our work, none of which we feel must be specifically highlighted here.

\bibliography{ref}

\begin{thebibliography}{57}
\providecommand{\natexlab}[1]{#1}
\providecommand{\url}[1]{\texttt{#1}}
\expandafter\ifx\csname urlstyle\endcsname\relax
  \providecommand{\doi}[1]{doi: #1}\else
  \providecommand{\doi}{doi: \begingroup \urlstyle{rm}\Url}\fi

\bibitem[Abramowitz \& Stegun(1965)Abramowitz and
  Stegun]{abramowitz1965handbook}
Abramowitz, M. and Stegun, I.
\newblock \emph{Handbook of Mathematical Functions: With Formulas, Graphs, and
  Mathematical Tables}.
\newblock Applied mathematics series. Dover Publications, 1965.
\newblock ISBN 9780486612720.

\bibitem[Ali et~al.(2019)Ali, Kolter, and Tibshirani]{ali2019continuous}
Ali, A., Kolter, J.~Z., and Tibshirani, R.~J.
\newblock A continuous-time view of early stopping for least squares
  regression.
\newblock In \emph{The 22nd international conference on artificial intelligence
  and statistics}, pp.\  1370--1378. PMLR, 2019.

\bibitem[Bach(2010)]{bach2010self}
Bach, F.
\newblock Self-concordant analysis for logistic regression.
\newblock \emph{Electronic Journal of Statistics}, 4:\penalty0 384--414, 2010.

\bibitem[Bach(2014)]{bach2014adaptivity}
Bach, F.
\newblock Adaptivity of averaged stochastic gradient descent to local strong
  convexity for logistic regression.
\newblock \emph{The Journal of Machine Learning Research}, 15\penalty0
  (1):\penalty0 595--627, 2014.

\bibitem[Bartlett \& Shawe-Taylor(1999)Bartlett and
  Shawe-Taylor]{bartlett1999generalization}
Bartlett, P. and Shawe-Taylor, J.
\newblock Generalization performance of support vector machines and other
  pattern classifiers.
\newblock \emph{Advances in Kernel methods—support vector learning}, pp.\
  43--54, 1999.

\bibitem[Bartlett \& Mendelson(2002)Bartlett and
  Mendelson]{bartlett2002rademacher}
Bartlett, P.~L. and Mendelson, S.
\newblock Rademacher and gaussian complexities: Risk bounds and structural
  results.
\newblock \emph{Journal of Machine Learning Research}, 3\penalty0
  (Nov):\penalty0 463--482, 2002.

\bibitem[Bartlett et~al.(2005)Bartlett, Bousquet, and
  Mendelson]{bartlett2005local}
Bartlett, P.~L., Bousquet, O., and Mendelson, S.
\newblock Local rademacher complexities.
\newblock \emph{Annals of Statistics}, pp.\  1497--1537, 2005.

\bibitem[Bartlett et~al.(2006)Bartlett, Jordan, and
  McAuliffe]{bartlett_jordan_mcauliffe}
Bartlett, P.~L., Jordan, M.~I., and McAuliffe, J.~D.
\newblock Convexity, classification, and risk bounds.
\newblock \emph{Journal of the American Statistical Association}, 101\penalty0
  (473):\penalty0 138--156, 2006.

\bibitem[Bartlett et~al.(2020)Bartlett, Long, Lugosi, and
  Tsigler]{bartlett2020benign}
Bartlett, P.~L., Long, P.~M., Lugosi, G., and Tsigler, A.
\newblock Benign overfitting in linear regression.
\newblock \emph{Proceedings of the National Academy of Sciences}, 117\penalty0
  (48):\penalty0 30063--30070, 2020.

\bibitem[Bartlett et~al.(2021)Bartlett, Montanari, and
  Rakhlin]{bartlett2021deep}
Bartlett, P.~L., Montanari, A., and Rakhlin, A.
\newblock Deep learning: a statistical viewpoint.
\newblock \emph{Acta numerica}, 30:\penalty0 87--201, 2021.

\bibitem[B{\"u}hlmann \& Yu(2003)B{\"u}hlmann and Yu]{buhlmann2003boosting}
B{\"u}hlmann, P. and Yu, B.
\newblock Boosting with the l 2 loss: regression and classification.
\newblock \emph{Journal of the American Statistical Association}, 98\penalty0
  (462):\penalty0 324--339, 2003.

\bibitem[Cand{\`e}s \& Sur(2020)Cand{\`e}s and Sur]{candes2020phase}
Cand{\`e}s, E.~J. and Sur, P.
\newblock The phase transition for the existence of the maximum likelihood
  estimate in high-dimensional logistic regression.
\newblock \emph{The Annals of Statistics}, 48\penalty0 (1):\penalty0 27--42,
  2020.

\bibitem[Cao et~al.(2021)Cao, Gu, and Belkin]{cao2021risk}
Cao, Y., Gu, Q., and Belkin, M.
\newblock Risk bounds for over-parameterized maximum margin classification on
  sub-gaussian mixtures.
\newblock \emph{Advances in Neural Information Processing Systems},
  34:\penalty0 8407--8418, 2021.

\bibitem[Caponnetto \& De~Vito(2007)Caponnetto and
  De~Vito]{caponnetto2007optimal}
Caponnetto, A. and De~Vito, E.
\newblock Optimal rates for the regularized least-squares algorithm.
\newblock \emph{Foundations of Computational Mathematics}, 7:\penalty0
  331--368, 2007.

\bibitem[Chardon et~al.(2024)Chardon, Lerasle, and Mourtada]{chardon2024finite}
Chardon, H., Lerasle, M., and Mourtada, J.
\newblock Finite-sample performance of the maximum likelihood estimator in
  logistic regression.
\newblock \emph{arXiv preprint arXiv:2411.02137}, 2024.

\bibitem[Chatterji \& Long(2021)Chatterji and Long]{chatterji2021finite}
Chatterji, N.~S. and Long, P.~M.
\newblock Finite-sample analysis of interpolating linear classifiers in the
  overparameterized regime.
\newblock \emph{Journal of Machine Learning Research}, 22\penalty0
  (129):\penalty0 1--30, 2021.

\bibitem[Dieuleveut \& Bach(2016)Dieuleveut and
  Bach]{dieuleveut2016nonparametric}
Dieuleveut, A. and Bach, F.
\newblock {Nonparametric stochastic approximation with large step-sizes}.
\newblock \emph{The Annals of Statistics}, 44\penalty0 (4):\penalty0 1363 --
  1399, 2016.
\newblock \doi{10.1214/15-AOS1391}.

\bibitem[Foster \& Vohra(1998)Foster and Vohra]{foster1998asymptotic}
Foster, D.~P. and Vohra, R.~V.
\newblock Asymptotic calibration.
\newblock \emph{Biometrika}, 85\penalty0 (2):\penalty0 379--390, 1998.

\bibitem[Hsu \& Mazumdar(2024)Hsu and Mazumdar]{hsu2024sample}
Hsu, D. and Mazumdar, A.
\newblock On the sample complexity of parameter estimation in logistic
  regression with normal design.
\newblock In Agrawal, S. and Roth, A. (eds.), \emph{Proceedings of Thirty
  Seventh Conference on Learning Theory}, volume 247 of \emph{Proceedings of
  Machine Learning Research}, pp.\  2418--2437. PMLR, 30 Jun--03 Jul 2024.

\bibitem[Hsu et~al.(2021)Hsu, Muthukumar, and Xu]{hsu2021proliferation}
Hsu, D., Muthukumar, V., and Xu, J.
\newblock On the proliferation of support vectors in high dimensions.
\newblock In \emph{International Conference on Artificial Intelligence and
  Statistics}, pp.\  91--99. PMLR, 2021.

\bibitem[Ji \& Telgarsky(2018)Ji and Telgarsky]{ji2018risk}
Ji, Z. and Telgarsky, M.
\newblock Risk and parameter convergence of logistic regression.
\newblock \emph{arXiv preprint arXiv:1803.07300}, 2018.

\bibitem[Ji \& Telgarsky(2019)Ji and Telgarsky]{ji2019polylogarithmic}
Ji, Z. and Telgarsky, M.
\newblock Polylogarithmic width suffices for gradient descent to achieve
  arbitrarily small test error with shallow {ReLU} networks.
\newblock In \emph{International Conference on Learning Representations}, 2019.

\bibitem[Ji \& Telgarsky(2021)Ji and Telgarsky]{ji2021characterizing}
Ji, Z. and Telgarsky, M.
\newblock Characterizing the implicit bias via a primal-dual analysis.
\newblock In \emph{Algorithmic Learning Theory}, pp.\  772--804. PMLR, 2021.

\bibitem[Ji et~al.(2020)Ji, Dud{\'\i}k, Schapire, and
  Telgarsky]{ji2020gradient}
Ji, Z., Dud{\'\i}k, M., Schapire, R.~E., and Telgarsky, M.
\newblock Gradient descent follows the regularization path for general losses.
\newblock In \emph{Conference on Learning Theory}, pp.\  2109--2136. PMLR,
  2020.

\bibitem[Ji et~al.(2021)Ji, Li, and Telgarsky]{ji2021early}
Ji, Z., Li, J., and Telgarsky, M.
\newblock Early-stopped neural networks are consistent.
\newblock \emph{Advances in Neural Information Processing Systems},
  34:\penalty0 1805--1817, 2021.

\bibitem[Kakade et~al.(2008)Kakade, Sridharan, and
  Tewari]{kakade2008complexity}
Kakade, S.~M., Sridharan, K., and Tewari, A.
\newblock On the complexity of linear prediction: Risk bounds, margin bounds,
  and regularization.
\newblock \emph{Advances in neural information processing systems}, 21, 2008.

\bibitem[Kuchelmeister \& van~de Geer(2024)Kuchelmeister and van~de
  Geer]{kuchelmeister2024finite}
Kuchelmeister, F. and van~de Geer, S.
\newblock Finite sample rates for logistic regression with small noise or few
  samples.
\newblock \emph{Sankhya A}, pp.\  1--70, 2024.

\bibitem[Lin \& Rosasco(2017)Lin and Rosasco]{lin2017optimal}
Lin, J. and Rosasco, L.
\newblock Optimal rates for multi-pass stochastic gradient methods.
\newblock \emph{Journal of Machine Learning Research}, 18\penalty0
  (97):\penalty0 1--47, 2017.

\bibitem[Marteau-Ferey et~al.(2019)Marteau-Ferey, Ostrovskii, Bach, and
  Rudi]{marteau2019beyond}
Marteau-Ferey, U., Ostrovskii, D., Bach, F., and Rudi, A.
\newblock Beyond least-squares: Fast rates for regularized empirical risk
  minimization through self-concordance.
\newblock In \emph{Conference on learning theory}, pp.\  2294--2340. PMLR,
  2019.

\bibitem[Mohri et~al.(2018)Mohri, Rostamizadeh, and
  Talwalkar]{mohri2018foundations}
Mohri, M., Rostamizadeh, A., and Talwalkar, A.
\newblock \emph{Foundations of machine learning}.
\newblock MIT press, 2018.

\bibitem[Montanari et~al.(2019)Montanari, Ruan, Sohn, and
  Yan]{montanari2019generalization}
Montanari, A., Ruan, F., Sohn, Y., and Yan, J.
\newblock The generalization error of max-margin linear classifiers: Benign
  overfitting and high dimensional asymptotics in the overparametrized regime.
\newblock \emph{arXiv preprint arXiv:1911.01544}, 2019.

\bibitem[Muthukumar et~al.(2021)Muthukumar, Narang, Subramanian, Belkin, Hsu,
  and Sahai]{muthukumar2021classification}
Muthukumar, V., Narang, A., Subramanian, V., Belkin, M., Hsu, D., and Sahai, A.
\newblock Classification vs regression in overparameterized regimes: Does the
  loss function matter?
\newblock \emph{Journal of Machine Learning Research}, 22\penalty0
  (222):\penalty0 1--69, 2021.

\bibitem[Neyshabur et~al.(2017)Neyshabur, Bhojanapalli, McAllester, and
  Srebro]{neyshabur2017exploring}
Neyshabur, B., Bhojanapalli, S., McAllester, D., and Srebro, N.
\newblock Exploring generalization in deep learning.
\newblock \emph{Advances in neural information processing systems}, 30, 2017.

\bibitem[Ostrovskii \& Bach(2021)Ostrovskii and Bach]{ostrovskii2021finite}
Ostrovskii, D.~M. and Bach, F.
\newblock Finite-sample analysis of m-estimators using self-concordance.
\newblock \emph{Electronic Journal of Statistics}, 15:\penalty0 326--391, 2021.

\bibitem[Rosset et~al.(2004)Rosset, Zhu, and Hastie]{rosset2004boosting}
Rosset, S., Zhu, J., and Hastie, T.
\newblock Boosting as a regularized path to a maximum margin classifier.
\newblock \emph{The Journal of Machine Learning Research}, 5:\penalty0
  941--973, 2004.

\bibitem[Schliserman \& Koren(2024)Schliserman and Koren]{schliserman2024tight}
Schliserman, M. and Koren, T.
\newblock Tight risk bounds for gradient descent on separable data.
\newblock \emph{Advances in Neural Information Processing Systems}, 36, 2024.

\bibitem[Shamir(2021)]{shamir2021gradient}
Shamir, O.
\newblock Gradient methods never overfit on separable data.
\newblock \emph{Journal of Machine Learning Research}, 22\penalty0
  (85):\penalty0 1--20, 2021.

\bibitem[Shamir(2023)]{shamir2023implicit}
Shamir, O.
\newblock The implicit bias of benign overfitting.
\newblock \emph{Journal of Machine Learning Research}, 24\penalty0
  (113):\penalty0 1--40, 2023.

\bibitem[Sonthalia et~al.(2024)Sonthalia, Lok, and
  Rebrova]{sonthalia2024regularization}
Sonthalia, R., Lok, J., and Rebrova, E.
\newblock On regularization via early stopping for least squares regression.
\newblock \emph{arXiv preprint arXiv:2406.04425}, 2024.

\bibitem[Soudry et~al.(2018)Soudry, Hoffer, Nacson, Gunasekar, and
  Srebro]{soudry2018implicit}
Soudry, D., Hoffer, E., Nacson, M.~S., Gunasekar, S., and Srebro, N.
\newblock The implicit bias of gradient descent on separable data.
\newblock \emph{The Journal of Machine Learning Research}, 19\penalty0
  (1):\penalty0 2822--2878, 2018.

\bibitem[Suggala et~al.(2018)Suggala, Prasad, and
  Ravikumar]{suggala2018connecting}
Suggala, A., Prasad, A., and Ravikumar, P.~K.
\newblock Connecting optimization and regularization paths.
\newblock \emph{Advances in Neural Information Processing Systems}, 31, 2018.

\bibitem[Sur \& Cand{\`e}s(2019)Sur and Cand{\`e}s]{sur2019modern}
Sur, P. and Cand{\`e}s, E.~J.
\newblock A modern maximum-likelihood theory for high-dimensional logistic
  regression.
\newblock \emph{Proceedings of the National Academy of Sciences}, 116\penalty0
  (29):\penalty0 14516--14525, 2019.

\bibitem[Telgarsky(2013)]{telgarsky2013margins}
Telgarsky, M.
\newblock Margins, shrinkage, and boosting.
\newblock In \emph{International Conference on Machine Learning}, pp.\
  307--315. PMLR, 2013.

\bibitem[Telgarsky(2022)]{telgarsky2022stochastic}
Telgarsky, M.
\newblock Stochastic linear optimization never overfits with
  quadratically-bounded losses on general data.
\newblock In \emph{Conference on Learning Theory}, pp.\  5453--5488. PMLR,
  2022.

\bibitem[Tsigler \& Bartlett(2023)Tsigler and Bartlett]{tsigler2023benign}
Tsigler, A. and Bartlett, P.~L.
\newblock Benign overfitting in ridge regression.
\newblock \emph{Journal of Machine Learning Research}, 24\penalty0
  (123):\penalty0 1--76, 2023.

\bibitem[Wang \& Thrampoulidis(2022)Wang and Thrampoulidis]{wang2022binary}
Wang, K. and Thrampoulidis, C.
\newblock Binary classification of gaussian mixtures: Abundance of support
  vectors, benign overfitting, and regularization.
\newblock \emph{SIAM Journal on Mathematics of Data Science}, 4\penalty0
  (1):\penalty0 260--284, 2022.

\bibitem[Wu et~al.(2022{\natexlab{a}})Wu, Zou, Braverman, Gu, and
  Kakade]{wu2022last}
Wu, J., Zou, D., Braverman, V., Gu, Q., and Kakade, S.
\newblock Last iterate risk bounds of sgd with decaying stepsize for
  overparameterized linear regression.
\newblock In \emph{International Conference on Machine Learning}, pp.\
  24280--24314. PMLR, 2022{\natexlab{a}}.

\bibitem[Wu et~al.(2022{\natexlab{b}})Wu, Zou, Braverman, Gu, and
  Kakade]{wu2022power}
Wu, J., Zou, D., Braverman, V., Gu, Q., and Kakade, S.~M.
\newblock The power and limitation of pretraining-finetuning for linear
  regression under covariate shift.
\newblock \emph{The 36th Conference on Neural Information Processing Systems},
  2022{\natexlab{b}}.

\bibitem[Wu et~al.(2023)Wu, Braverman, and Lee]{wu2023implicit}
Wu, J., Braverman, V., and Lee, J.~D.
\newblock Implicit bias of gradient descent for logistic regression at the edge
  of stability.
\newblock In \emph{Thirty-seventh Conference on Neural Information Processing
  Systems}, 2023.

\bibitem[Wu et~al.(2024)Wu, Bartlett, Telgarsky, and Yu]{wu2024large}
Wu, J., Bartlett, P.~L., Telgarsky, M., and Yu, B.
\newblock Large stepsize gradient descent for logistic loss: Non-monotonicity
  of the loss improves optimization efficiency.
\newblock \emph{Conference on Learning Theory}, 2024.

\bibitem[Yao et~al.(2007)Yao, Rosasco, and Caponnetto]{yao2007early}
Yao, Y., Rosasco, L., and Caponnetto, A.
\newblock On early stopping in gradient descent learning.
\newblock \emph{Constructive Approximation}, 26\penalty0 (2):\penalty0
  289--315, 2007.

\bibitem[Zhang et~al.(2021)Zhang, Bengio, Hardt, Recht, and
  Vinyals]{zhang2021understanding}
Zhang, C., Bengio, S., Hardt, M., Recht, B., and Vinyals, O.
\newblock Understanding deep learning (still) requires rethinking
  generalization.
\newblock \emph{Communications of the ACM}, 64\penalty0 (3):\penalty0 107--115,
  2021.

\bibitem[Zhang(2004)]{zhang_consistency}
Zhang, T.
\newblock {Statistical behavior and consistency of classification methods based
  on convex risk minimization}.
\newblock \emph{The Annals of Statistics}, 32\penalty0 (1):\penalty0 56 -- 85,
  2004.
\newblock \doi{10.1214/aos/1079120130}.

\bibitem[Zhang \& Yu(2005)Zhang and Yu]{zhang2005boosting}
Zhang, T. and Yu, B.
\newblock Boosting with early stopping: Convergence and consistency.
\newblock \emph{Annals of Statistics}, pp.\  1538--1579, 2005.

\bibitem[Zou et~al.(2021)Zou, Wu, Braverman, Gu, Foster, and
  Kakade]{zou2021benefits}
Zou, D., Wu, J., Braverman, V., Gu, Q., Foster, D.~P., and Kakade, S.
\newblock The benefits of implicit regularization from sgd in least squares
  problems.
\newblock \emph{Advances in Neural Information Processing Systems},
  34:\penalty0 5456--5468, 2021.

\bibitem[Zou et~al.(2022)Zou, Wu, Braverman, Gu, and Kakade]{zou2022risk}
Zou, D., Wu, J., Braverman, V., Gu, Q., and Kakade, S.
\newblock Risk bounds of multi-pass sgd for least squares in the interpolation
  regime.
\newblock \emph{Advances in Neural Information Processing Systems},
  35:\penalty0 12909--12920, 2022.

\bibitem[Zou et~al.(2023)Zou, Wu, Braverman, Gu, and Kakade]{zou2023benign}
Zou, D., Wu, J., Braverman, V., Gu, Q., and Kakade, S.~M.
\newblock Benign overfitting of constant-stepsize sgd for linear regression.
\newblock \emph{Journal of Machine Learning Research}, 24\penalty0
  (326):\penalty0 1--58, 2023.

\end{thebibliography}
\bibliographystyle{icml2025}

\newpage
\appendix
\onecolumn

\section{Basic Results}
\subsection{Proof of \texorpdfstring{\Cref{prop:risk-properties}}{Proposition 2.1}}\label{sec:proof:risk-properties}

\begin{proof}[Proof of \Cref{prop:risk-properties}]
We first compute the logistic loss. Define 
\begin{align*}
    p_{\xB}(\wB) := \frac{1}{1+\exp(-\xB^\top \wB)},\quad p^*_{\xB} := \frac{1}{1+\exp(-\xB^\top \wB^*)}.
\end{align*}
Then under \Cref{assump:data-distribution} we have 
\begin{align*}
\risk(\wB) &= \Ebb_{\xB, y} \ln(1+\exp(-y \xB^\top \wB))  \\
&= - \Ebb_{\xB} \Big( p^*_{\xB} \ln p_{\xB}(\wB) + (1- p^*_{\xB} )\ln \big( 1- p_{\xB}(\wB) \big)  \Big)\\
&= \Ebb_{\xB} \big(  H(p^*_{\xB}) + \KL{p^*_{\xB}}{p_{\xB}(\wB)} \big),
\end{align*}
where the first term is the entropy of a Bernoulli distribution with a head probability of $p^*_{\xB}$, and the second term is the KL divergence between two Bernoulli distributions with head probabilities of $p^*_{\xB}$ and $p_{\xB}(\wB)$, respectively.
It is then clear that $\wB^*$ is the unique minimizer of $\risk(\wB)$.

We then compute the zero-one error under \Cref{assump:data-distribution}. A similar calculation can be found in, for example, Lemma 4.5 in \citep{mohri2018foundations}.
Note that $\error(0)= 1$.
If $\wB^*=0$, then for every $\wB$,
\begin{align*}
 \error(\wB) = \Ebb_{\xB,y}\ind{y \xB^\top \wB \le 0} 
 = \Ebb_{\xB} 0.5 \big( \ind{\xB^\top \wB \le 0} + \ind{\xB^\top \wB \ge 0}  \big)
 \ge 1 = \error(\wB^*),
\end{align*}
so $\wB^* \in \arg\min\error(\cdot)$.
If $\wB^*\ne 0$, we have $\xB^\top\wB^*=0$ is measure zero. Then we have
\begin{align*}
     \error(\wB^*) = \Ebb_{\xB} \big( p_{\xB}^* \ind{\xB^\top \wB^* \le 0} + (1-p_{\xB}^*) \ind{\xB^\top \wB^* \ge 0} \big) 
     = \Ebb_{\xB} \min\{p_{\xB}^*, 1-p_{\xB}^*\} 
     \le 0.5 < \error (0).
\end{align*}
It remains to check that $\error(\wB^*) \le \error(\wB)$ for all $\wB \ne 0$.
When $\wB^*$ and $\wB$ are both non-zero, we have $\xB^\top\wB^*=0$ and $\xB^\top\wB=0$ are measure zero, 
then we have 
\begin{align*}
    \error(\wB) -\error(\wB^*)
    &= \Ebb_{\xB, y} \ind{y \xB^\top \wB \le 0}-\ind{y \xB^\top \wB^* \le 0} \\
    &= \Ebb_{\xB} p^*_{\xB}\big(  \ind{\xB^\top \wB \le 0} - \ind{\xB^\top \wB^*\le 0} \big) +  (1-p^*_{\xB}) \big( \ind{\xB^\top \wB \ge 0} -\ind{\xB^\top \wB^* \ge 0} \big) \\
    &= \Ebb_{\xB} p^*_{\xB}\big(  \ind{\xB^\top \wB \le 0} - \ind{\xB^\top \wB^*\le 0} \big) +  (1-p^*_{\xB}) \big( \ind{\xB^\top \wB^* < 0} -\ind{\xB^\top \wB < 0} \big) \\ 
    &= \Ebb_{\xB}(2p^*_{\xB}-1) \big( \ind{\xB^\top \wB \le 0} -\ind{\xB^\top \wB^* \le 0} \big) \\
    &= \Ebb_{\xB}(2p^*_{\xB}-1)  \ind{\xB^\top \wB \le 0}\ind{\xB^\top \wB^* > 0} + (1-2p^*_{\xB}) \ind{\xB^\top \wB > 0} \ind{\xB^\top \wB^* \le 0}  \\
    &= \Ebb_{\xB}| 2p^*_{\xB}-1 | \ind{ \xB^\top \wB \xB^\top \wB^*\le 0},
\end{align*}
which is always non-negative. Therefore we have $\wB^* \in \arg\min\error(\wB)$. We complete the proof of the first claim.

The second claim follows from the following. The calculation is similar to the proof of Theorem 4.7 in \citep{mohri2018foundations}.
\begin{align*}
\error(\wB) - \error(\wB^*)    
&= 2 \Ebb_{\xB}|p^*_{\xB} - 1/2| \ind{ \xB^\top \wB \xB^\top \wB^*\le 0}\\
    &\le 2 \Ebb_{\xB}  | p^*_{\xB} - p_{\xB}(\wB) |\ind{ \xB^\top \wB \xB^\top \wB^*\le 0} \\
    &\le 2 \Ebb_{\xB}  | p^*_{\xB} - p_{\xB}(\wB) | \\
    &\le 2 \sqrt{\Ebb_{\xB}  | p^*_{\xB} - p_{\xB}(\wB) |^2 } \\
    &\le 2 \sqrt{\frac{1}{2} \Ebb_{\xB}  \KL{  p^*_{\xB}}{ p_{\xB}(\wB) } }\\
    &= \sqrt{2}\cdot \sqrt{L(\wB) - L(\wB^*)},
\end{align*}
where the last inequality is by Pinsker's inequality.
We complete the proof of the second claim.

We now prove the third claim under \Cref{assump:data-distribution}. 
Notice that
\begin{align*}
    \risk(\wB)
    = \Ebb \ln(1+\exp(-y\xB^\top \wB))  
    \ge \Ebb \ln(2) \ind{y \xB^\top \wB\le 0}  
    = \ln (2) \cdot \error(\wB),\quad \text{for all}\ \wB.
\end{align*}
Therefore a lower bound on $\error(\wB^*)$ implies a lower bound on $\risk(\wB^*)$.
We lower bound $\error(\wB^*)$ by
\begin{align*}
    \error(\wB^*) &= \Ebb\ind{y \xB^\top \wB^* \le 0} \\ 
    &= \Ebb \ind{\xB^\top\wB^* \le 0 } \frac{1}{1+\exp(-\xB^\top \wB^*)}+\ind{\xB^\top\wB^* \ge 0} \frac{1}{1+\exp(\xB^\top \wB^*)} \\
    &= 2 \Ebb_{g\sim \Ncal(0,1)}\frac{1}{1+\exp(g \|\wB^*\|_{\SigmaB})}\ind{g\ge 0} \\
    &= \frac{\sqrt{2}}{\pi} \int_{0}^\infty \frac{\exp(-g^2/2)}{1+\exp(g\|\wB^*\|_{\SigmaB})}\dif g \\
    &\ge \frac{\sqrt{2}}{2\pi} \int_{0}^\infty \exp(-g^2/2)\exp(-g\|\wB^*\|_{\SigmaB})\dif g \\
    &= \frac{\sqrt{2}}{2\pi} \cdot \exp\big(\|\wB^*\|_{\SigmaB}^2/2\big) \int_{0}^\infty\exp\big(-(g+\|\wB^*\|_{\SigmaB})^2/2\big)\dif g \\
    &\ge \frac{\sqrt{2}}{2\pi}\cdot  \frac{\sqrt{2}}{\|\wB^*\|_{\SigmaB}/\sqrt{2} + \sqrt{\|\wB^*\|^2_{\SigmaB}/2+2}} \\
&\ge \frac{1}{\sqrt{2}\pi(\|\wB^*\|_{\SigmaB} + 1)},
\end{align*}
where we use the following error bounds \citep{abramowitz1965handbook},
\begin{align*}
    \frac{1}{x+\sqrt{x^2+2}}\le e^{x^2}\int_{x}^{\infty}e^{-t^2}\dif t \le \frac{1}{x+\sqrt{x^2+4/\pi}},\quad x\ge 0.
\end{align*}
This completes the proof.
\end{proof}

\subsection{Proof of \texorpdfstring{\Cref{lemma:implicit-regularization}}{Lemma 3.3}}\label{sec:proof:implicit-regularization}

\begin{proof}[Proof of \Cref{lemma:implicit-regularization}]
For $\eta\le 1/\beta$, we have the descent lemma, that is,
\begin{align*}
    \eRisk(\wB_{t+1}) 
    \le \eRisk (\wB_t) - \eta \|\grad \eRisk (\wB_t)\|^2+ \frac{\eta^2 \beta}{2} \|\grad \eRisk (\wB_t)\|^2 
    \le \eRisk (\wB_t) - \frac{\eta}{2} \|\grad \eRisk (\wB_t)\|^2.
\end{align*}
Then for the quadratic potential centered at $\uB$, we have 
\begin{align*}
    \|\wB_{t+1}-\uB\|^2 
    &= \|\wB_{t}-\uB\|^2 + 2\eta \la\grad \eRisk (\wB_t), \uB - \wB_t \ra + \eta^2 \|\grad \eRisk(\wB_t)\|^2  \\
    &\le \|\wB_{t}-\uB\|^2 + 2\eta \big(\eRisk(\uB) - \eRisk (\wB_t)\big) + 2\eta \big( \eRisk (\wB_t)-\eRisk (\wB_{t+1})\big)  \\
    &=  \|\wB_{t}-\uB\|^2 + 2\eta \big(\eRisk(\uB) - \eRisk (\wB_{t+1})\big),
\end{align*}
where the inequality is due to the convexity and the descent lemma.
Telescoping the sum and rearranging, we have 
\begin{align*}
    \frac{\|\wB_t - \uB\|^2}{2\eta t} + \frac{1}{t}\sum_{k=1}^{t}\eRisk(\wB_k) \le \eRisk(\uB) + \frac{\|\wB_0 - \uB\|^2}{2\eta t}.
\end{align*}
Using the descent lemma and the initial condition $\wB_0=0$, we have 
\begin{align*}
    \frac{\|\wB_t - \uB\|^2}{2\eta t} + \eRisk(\wB_t) \le \eRisk(\uB) + \frac{\|\uB\|^2}{2\eta t}.
\end{align*}
This completes the proof.
\end{proof}

\section{Upper Bounds for Early-Stopped GD}
\subsection{Proof of \texorpdfstring{\Cref{thm:gd:upper-bound}}{Theorem 3.1}}\label{sec:proof:gd:upper-bound}
\begin{lemma}\label{lemma:norm-bound}
Let  
\(\beta:= C_0 \big( 1+ \tr(\SigmaB) + \lambda_1 \ln(1/\delta)/n \big)\),
where $C_0>1$ is a sufficiently large constant.
Assume that $\eta \le 1/\beta$ and $t$ is such that $\eRisk(\wB^*_{0:k})\le \eRisk(\wB_{t-1})$.
Then with probability at least $1-\delta$, we have $\|\wB_t-\wB^*_{0:k}\|\le 1+\|\wB_{0:k}\| $.
\end{lemma}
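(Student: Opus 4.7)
The plan is to combine three ingredients: (i) $\beta$-smoothness of $\eRisk$, which makes the descent lemma available and in particular forces the empirical risk to decrease monotonically; (ii) Lemma \ref{lemma:implicit-regularization} applied at step $t-1$ with comparator $\uB=\wB^*_{0:k}$, which directly controls $\|\wB_{t-1}-\wB^*_{0:k}\|$; and (iii) a one-step bound on $\|\wB_t-\wB_{t-1}\|=\eta\|\grad\eRisk(\wB_{t-1})\|$, again from the descent lemma. A triangle inequality then yields the desired bound.

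\smallskip
\textbf{Establishing smoothness (the main technical obstacle).} The Hessian of the logistic empirical risk is
\[
\grad^2 \eRisk(\wB) \;=\; \frac{1}{n}\sum_{i=1}^n \sigma'(y_i\xB_i^\top\wB)\,\xB_i\xB_i^\top \;\preceq\; \frac{1}{4n}\sum_{i=1}^n \xB_i\xB_i^\top,
\]
since $\sigma'=\sigma(1-\sigma)\le 1/4$. Under \Cref{assump:data-distribution} the $\xB_i$ are Gaussian with covariance $\SigmaB$, so by standard Gaussian covariance concentration (Bernstein for quadratic forms of subGaussians, or the Hanson–Wright inequality), with probability at least $1-\delta$,
\[
\Big\|\tfrac{1}{n}\textstyle\sum_i \xB_i\xB_i^\top\Big\|_{\mathrm{op}} \;\lesssim\; \lambda_1 + \tfrac{\tr(\SigmaB)}{n} + \tfrac{\lambda_1\ln(1/\delta)}{n} \;\le\; C_0'\Big(1+\tr(\SigmaB)+\tfrac{\lambda_1\ln(1/\delta)}{n}\Big).
\]
Choosing $C_0$ large enough, this shows $\eRisk$ is $\beta$-smooth on the event of interest with $\beta=C_0\bigl(1+\tr(\SigmaB)+\lambda_1\ln(1/\delta)/n\bigr)$. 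Since $\eta\le 1/\beta$, the descent lemma $\eRisk(\wB_{s+1})\le\eRisk(\wB_s)-\frac{\eta}{2}\|\grad\eRisk(\wB_s)\|^2$ holds for every $s$, and in particular $\eRisk$ is monotonically nonincreasing along the GD trajectory. This is the only place randomness enters; everything that follows is deterministic on this event.

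\smallskip
\textbf{Controlling $\wB_{t-1}$.} Apply \Cref{lemma:implicit-regularization} at step $t-1$ with $\uB=\wB^*_{0:k}$:
\[
\frac{\|\wB_{t-1}-\wB^*_{0:k}\|^2}{2\eta(t-1)} + \eRisk(\wB_{t-1}) \;\le\; \eRisk(\wB^*_{0:k}) + \frac{\|\wB^*_{0:k}\|^2}{2\eta(t-1)}.
\]
The hypothesis $\eRisk(\wB^*_{0:k})\le \eRisk(\wB_{t-1})$ makes the two risk terms cancel in the useful direction, leaving $\|\wB_{t-1}-\wB^*_{0:k}\|\le \|\wB^*_{0:k}\|$.

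\smallskip
\textbf{Controlling the last gradient step, and concluding.} Using the descent lemma and $\eRisk(\wB_t)\ge 0$,
\[
\tfrac{\eta}{2}\|\grad\eRisk(\wB_{t-1})\|^2 \;\le\; \eRisk(\wB_{t-1})-\eRisk(\wB_t) \;\le\; \eRisk(\mathbf{0}) \;=\; \ln 2,
\]
where monotonicity of $\eRisk(\wB_s)$ was used to replace $\eRisk(\wB_{t-1})$ by $\eRisk(\wB_0)=\ln 2$. Hence $\eta\|\grad\eRisk(\wB_{t-1})\|\le \sqrt{2\eta\ln 2}\le \sqrt{2\ln 2/C_0}\le 1$ once $C_0$ is chosen large enough. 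Putting things together by the triangle inequality,
\[
\|\wB_t-\wB^*_{0:k}\| \;\le\; \|\wB_{t-1}-\wB^*_{0:k}\| + \eta\|\grad\eRisk(\wB_{t-1})\| \;\le\; \|\wB^*_{0:k}\| + 1,
\]
which is the claim (reading the displayed $\|\wB_{0:k}\|$ in the statement as $\|\wB^*_{0:k}\|$). The only nontrivial ingredient is the Gaussian-covariance concentration that underlies the smoothness constant $\beta$; once this is in hand, the rest is the standard potential-function argument already packaged in \Cref{lemma:implicit-regularization} plus a one-step bound.
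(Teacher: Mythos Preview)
Your proof is correct and follows the paper's three-step skeleton: establish $\beta$-smoothness via concentration, apply \Cref{lemma:implicit-regularization} at step $t-1$ with comparator $\wB^*_{0:k}$ to get $\|\wB_{t-1}-\wB^*_{0:k}\|\le\|\wB^*_{0:k}\|$, then control the single GD step and conclude by the triangle inequality. The only noteworthy difference is how the single step $\|\wB_t-\wB_{t-1}\|=\eta\|\grad\eRisk(\wB_{t-1})\|$ is bounded. The paper proves a global Lipschitz bound $\|\grad\eRisk(\wB)\|\le\sqrt{\beta}$ (via $\frac{1}{n}\sum_i\|\xB_i\|\le\sqrt{\frac{1}{n}\sum_i\|\xB_i\|^2}\le\sqrt{\beta}$, the last step coming from a scalar Bernstein bound on $\sum_i\|\xB_i\|^2$) and concludes $\eta\sqrt{\beta}\le 1/\sqrt{\beta}\le 1$. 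You instead extract the bound directly from the descent lemma together with $\eRisk(\wB_0)=\ln 2$ and $\eRisk\ge 0$, getting $\eta\|\grad\eRisk(\wB_{t-1})\|\le\sqrt{2\eta\ln 2}\le\sqrt{2\ln 2/C_0}\le 1$. Your route is a touch cleaner at this step, since it needs only smoothness rather than a separate uniform gradient bound; conversely, the paper's concentration step (bounding the scalar $\frac{1}{n}\sum_i\|\xB_i\|^2$) is more elementary than the operator-norm sample-covariance concentration you invoke for smoothness.
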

\begin{proof}[Proof of \Cref{lemma:norm-bound}]
We first show that the following holds with probability at least $1-\delta$:
\begin{align*}
        \big\|\grad \eRisk(\wB) \big\| \le \sqrt{\beta},\quad 
        \big\|\grad^2 \eRisk(\wB) \big\| \le \beta.
\end{align*}
This is because 
\begin{align*}
    \big\|\grad \eRisk(\wB)\big\| = \bigg\| \frac{1}{n}\sum_{i=1}^n \ell'(y_i \xB_i^\top \wB) y_i \xB_i\bigg\| 
    \le \frac{1}{n} \sum_{i=1}^n \|\xB_i\| 
    \le \sqrt{\frac{1}{n} \sum_{i=1}^n \|\xB_i\|^2},
\end{align*}
and 
\begin{align*}
    \big\|\grad^2 \eRisk(\wB) \big\| = \bigg\|\frac{1}{n}\sum_{i=1}^n \ell''(y_i \xB_i^\top \wB) \xB_i \xB_i^\top \bigg\| 
    \le \frac{1}{n} \sum_{i=1}^n \|\xB_i\|^2.
\end{align*}
So it remains to bound $\sum_{i=1}^n \|\xB_i\|^2$.
Let $z_{ij}$'s be independent Gaussian random variables, then by \Cref{assump:data-distribution} and Bernstein's inequality, we have the following with probability at least $1-\delta$:
\begin{align*}
    \sum_{i=1}^n \|\xB_i\|^2 &= \sum_{i=1}^n \sum_{j}\lambda_j z_{ij}^2
    \le n \tr(\Sigma) + C_1\bigg( \sqrt{n \sum_{j}\lambda_j^2 \ln(1/\delta)} + \lambda_1 \ln(1/\delta) \bigg) \\
    &\le C_0 \big( n \tr(\Sigma) +\lambda_1 \ln(1/\delta) \big) \le \beta,
\end{align*}
where $C_0, C_1>1$ are constants.

So far we have shown $\eRisk$ is $\beta$-smooth and $\sqrt{\beta}$-Lipschitz for $\beta>1$. 
By the stopping criterion, we have $\eRisk(\wB^*_{0:k})\le \eRisk(\wB_{t-1})$.
Then by applying \Cref{lemma:implicit-regularization} to $\wB_{t-1}$ and $\uB=\wB^*_{0:k}$, we have 
\begin{align*}
  &  \|\wB_{t-1}-\wB^*_{0:k}\|^2 \le 2\eta (t-1) \big(\eRisk(\wB^*_{0:k}) - \eRisk(\wB_{t-1}) \big) + \|\wB^*_{0:k}\|^2 \le \|\wB^*_{0:k}\|^2 \\
    \quad \Rightarrow \quad &
    \|\wB_{t-1} - \wB^*_{0:k}\|\le \|\wB^*_{0:k}\|.
\end{align*}
Then by the Lipschitzness, we get 
\begin{align*}
   \|\wB_{t} - \wB^*_{0:k}\|\le \|\wB_{t} - \wB_{t-1}\| + \|\wB_{t-1} - \wB^*_{0:k}\|\le \eta \sqrt{\beta} + \|\wB^*_{0:k}\|
   \le \frac{1}{\sqrt{\beta}} + \|\wB^*_{0:k}\| \le 1 + \|\wB^*_{0:k}\|,
\end{align*}
where we use $\big\|\grad \eRisk(\wB)\big\|\le \sqrt{\beta}$, $\eta\le 1/\beta$, and $\beta\ge 1$. This completes the proof.
\end{proof}

\begin{lemma}\label{lemma:loss-tail}
Let $\wB^*\in \arg\min \risk(\wB)$, then for every $\wB$, we have 
\begin{align*}
\risk(\wB) \le \risk(\wB^*) + \frac{1}{2}\|\wB - \wB^*\|_{\SigmaB}^2.   
\end{align*}
\end{lemma}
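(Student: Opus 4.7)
The plan is to reduce the claim to a pointwise second-order Taylor bound on the logistic loss and then take an expectation, using first-order optimality of $\wB^*$ to kill the linear term. The key elementary fact is that the scalar logistic loss $\ell(t)=\ln(1+e^{-t})$ satisfies $\ell''(t)=\sigma(t)(1-\sigma(t))\le 1/4\le 1$, so by Taylor's theorem with remainder,
\[
\ell(a)\le \ell(b)+\ell'(b)(a-b)+\tfrac{1}{2}(a-b)^2 \quad\text{for all } a,b\in\Rbb.
\]
I will apply this pointwise with $a=y\xB^\top\wB$ and $b=y\xB^\top\wB^*$ and then take the expectation over $(\xB,y)$.

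After taking the expectation, the zeroth-order term is exactly $\risk(\wB^*)$. The first-order term is
\[
\Ebb\big[\ell'(y\xB^\top\wB^*)\,y\,\xB^\top(\wB-\wB^*)\big]
=\big\la\grad\risk(\wB^*),\,\wB-\wB^*\big\ra,
\]
which vanishes because by \Cref{prop:risk-properties}(a), $\wB^*=\arg\min\risk(\cdot)$ is an unconstrained minimizer of the convex, differentiable population risk, so $\grad\risk(\wB^*)=0$. Finally, for the quadratic remainder, I use $y^2=1$ and the definition of $\SigmaB$ to write
\[
\Ebb\big[(y\xB^\top(\wB-\wB^*))^2\big]
=\Ebb\big[\big(\xB^\top(\wB-\wB^*)\big)^2\big]
=(\wB-\wB^*)^\top\SigmaB(\wB-\wB^*)
=\|\wB-\wB^*\|_{\SigmaB}^2.
\]
Combining these three pieces gives the claimed inequality $\risk(\wB)\le\risk(\wB^*)+\tfrac{1}{2}\|\wB-\wB^*\|_{\SigmaB}^2$ (in fact with constant $1/8$, since $\ell''\le 1/4$, but the stated $1/2$ suffices and keeps the constants loose).

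There is no real obstacle here: the only thing to be slightly careful about is the exchange of expectation and differentiation needed to conclude $\grad\risk(\wB^*)=0$, which is justified because $\ell'$ is bounded by $1$ and $\Ebb\|\xB\|=\tr(\SigmaB)^{1/2}<\infty$ by \Cref{assump:data-distribution}, so dominated convergence applies to both the gradient identity and the second-moment identity above. The proof is otherwise a direct two-line computation once the Taylor inequality for $\ell$ is in hand.
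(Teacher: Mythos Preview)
Your proof is correct and follows essentially the same approach as the paper: both use the first-order optimality $\grad\risk(\wB^*)=0$ together with the bound $\ell''\le 1$ to control the second-order remainder by $\tfrac{1}{2}\|\wB-\wB^*\|_{\SigmaB}^2$. The only cosmetic difference is that you apply the scalar Taylor inequality pointwise before taking expectation, whereas the paper bounds the Hessian $\grad^2\risk(\wB)\preceq\SigmaB$ directly and then invokes the mean-value (midpoint) form of Taylor at the level of $\risk$; these are equivalent. (Minor nit: $\Ebb\|\xB\|\le\tr(\SigmaB)^{1/2}$, not equality, but the inequality is what you need for dominated convergence.)
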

\begin{proof}[Proof of \Cref{lemma:loss-tail}]
Notice that 
\begin{align*}
    \grad^2 \risk(\wB) = \Ebb \ell''(y \xB^\top \wB) \xB \xB^\top 
    = \Ebb \frac{\xB \xB^\top}{\big(1+\exp(\xB^\top \wB)\big)\big(1+\exp(-\xB^\top \wB)\big)} \preceq \Ebb \xB\xB^\top = \SigmaB.
\end{align*}
Moreover, we have $\grad \risk(\wB^*) = 0$.
Then by the midpoint theorem, there exists a $\vB$ such that
\begin{align*}
    \risk(\wB) 
    - \risk(\wB^*) = \la \grad \risk(\wB^*), \wB-\wB^*\ra + \frac{1}{2} (\wB-\wB^*)^\top \grad^2 \risk(\vB)(\wB-\wB^*) 
    \le  \frac{1}{2}\|\wB-\wB^*\|^2_{\SigmaB}.
\end{align*}
This completes the proof.
\end{proof}

\begin{lemma}\label{lemma:rad-concentration}
Let $C_1>1$ be a sufficiently large constant. 
Then with probability at least $1-\delta$, 
\begin{align*}
\sup_{\|\wB\|\le W}    |\risk(\wB)-\eRisk(\wB)| \le C_1 W \sqrt{\frac{ (1+\tr(\SigmaB))\ln(n/\delta)\ln(1/\delta)}{n}}.
\end{align*}
\end{lemma}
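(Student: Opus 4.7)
The plan is to bound $\sup_{\|\wB\|\le W}|\risk(\wB)-\eRisk(\wB)|$ by the standard two-step Rademacher complexity argument: first concentrate the supremum around its expectation via a bounded-differences inequality, then bound the expectation itself via symmetrization plus Ledoux--Talagrand contraction (using that $\ell$ is $1$-Lipschitz). The principal difficulty is that the logistic loss is \emph{unbounded} on the ball $\{\|\wB\|\le W\}$, so McDiarmid does not apply off the shelf. I handle this by a preliminary truncation of the feature norms, which introduces the extra $\ln(n/\delta)$ factor appearing in the claim.

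First I would control $\max_i\|\xB_i\|$. Since $\xB_i\sim\Ncal(0,\SigmaB)$, Bernstein/Hanson--Wright applied to $\|\xB_i\|^2=\sum_j \lambda_j z_{ij}^2$ plus a union bound over $i\in[n]$ yields an event $\mathcal{E}$, of probability at least $1-\delta/2$, on which
\begin{align*}
R^2 := \max_{i\in[n]}\|\xB_i\|^2 \lesssim \tr(\SigmaB)+\lambda_1\ln(n/\delta) \lesssim (1+\tr(\SigmaB))\ln(n/\delta).
\end{align*}
On $\mathcal{E}$, for every $\|\wB\|\le W$ and every sample, $|\ell(y_i\xB_i^\top\wB)|\le \ln 2 + WR$. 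Hence changing a single $(\xB_i,y_i)$ alters $\sup_{\|\wB\|\le W}|\risk(\wB)-\eRisk(\wB)|$ by at most $2(\ln 2+WR)/n$, and McDiarmid's inequality applied conditionally on $\mathcal{E}$ gives, with probability at least $1-\delta/2$,
\begin{align*}
\sup_{\|\wB\|\le W}|\risk(\wB)-\eRisk(\wB)| \le \Ebb\!\sup_{\|\wB\|\le W}|\risk(\wB)-\eRisk(\wB)| + C(\ln 2+WR)\sqrt{\tfrac{\ln(1/\delta)}{n}}.
\end{align*}

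To bound the expected supremum I use standard symmetrization,
\begin{align*}
\Ebb\sup_{\|\wB\|\le W}|\risk(\wB)-\eRisk(\wB)| \le 2\,\Ebb\sup_{\|\wB\|\le W}\Big|\tfrac{1}{n}\textstyle\sum_i \epsilon_i \ell(y_i\xB_i^\top\wB)\Big|,
\end{align*}
and then peel off the constant $\ell(0)=\ln 2$ so that $\tilde\ell(t):=\ell(t)-\ln 2$ satisfies $\tilde\ell(0)=0$ and is $1$-Lipschitz (the leftover constant contributes only $O(1/\sqrt{n})$ via $\Ebb|\tfrac{1}{n}\sum_i\epsilon_i|$). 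The Ledoux--Talagrand contraction principle then gives, up to an absolute constant,
\begin{align*}
\Ebb\sup_{\|\wB\|\le W}\Big|\tfrac{1}{n}\textstyle\sum_i \epsilon_i\tilde\ell(y_i\xB_i^\top\wB)\Big| \lesssim \Ebb\sup_{\|\wB\|\le W}\Big|\tfrac{1}{n}\textstyle\sum_i \epsilon_i y_i\xB_i^\top\wB\Big| = W\,\Ebb\Big\|\tfrac{1}{n}\textstyle\sum_i\epsilon_i y_i\xB_i\Big\|.
\end{align*}
Since $\epsilon_i y_i\xB_i\stackrel{d}{=}\xB_i$ is mean zero with covariance $\SigmaB$, Jensen's inequality yields $\Ebb\|\tfrac{1}{n}\sum_i\epsilon_i y_i\xB_i\|\le \sqrt{\tr(\SigmaB)/n}$, so the expected sup is $\lesssim W\sqrt{\tr(\SigmaB)/n}$.

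Combining the two displays on $\mathcal{E}$ and using $R\lesssim\sqrt{(1+\tr(\SigmaB))\ln(n/\delta)}$,
\begin{align*}
\sup_{\|\wB\|\le W}|\risk(\wB)-\eRisk(\wB)| &\lesssim W\sqrt{\tfrac{\tr(\SigmaB)}{n}} + W\sqrt{\tfrac{(1+\tr(\SigmaB))\ln(n/\delta)\ln(1/\delta)}{n}} \\
&\lesssim W\sqrt{\tfrac{(1+\tr(\SigmaB))\ln(n/\delta)\ln(1/\delta)}{n}},
\end{align*}
which is exactly the claimed bound (after a union bound over $\mathcal{E}$ and the McDiarmid event, each of probability $1-\delta/2$). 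The main obstacle is the unboundedness of $\ell$, handled by the truncation $\mathcal{E}$; once truncated, the remaining ingredients (symmetrization, contraction, Jensen's inequality for the Gaussian-weighted Rademacher sum) are routine.
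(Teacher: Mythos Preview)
Your argument is essentially correct and follows the same high-level strategy as the paper (truncate the feature norms, then invoke a Rademacher-complexity bound for the linear class with a $1$-Lipschitz loss). There is one imprecision you should tighten: when you apply McDiarmid ``conditionally on $\mathcal{E}$'', the deviation is from the \emph{conditional} expectation $\Ebb[\,\sup_{\|\wB\|\le W}|\risk-\eRisk|\mid\mathcal{E}\,]$, not the unconditional one you subsequently bound. This is harmless because $\mathcal{E}=\bigcap_i\{\|\xB_i\|\le R_{\mathrm{det}}\}$ is a product event (so the samples remain independent after conditioning and McDiarmid is legitimate) and because the supremum is nonnegative, giving $\Ebb[\,\cdot\mid\mathcal{E}\,]\le \Ebb[\,\cdot\,]/\Pr(\mathcal{E})\le 2\,\Ebb[\,\cdot\,]$; but you should say this explicitly.

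The paper executes the truncation differently: it introduces the truncated variable $\tilde\xB:=\xB\,\ind{\|\xB\|\le X}$ and applies a black-box Rademacher bound (Corollary~4 of \citep{kakade2008complexity}) directly to the bounded pair $(\tilde\xB,y)$, then separately (i) bounds $\sup_{\|\wB\|\le W}|\risk(\wB)-\Ebb\ell(y\tilde\xB^\top\wB)|\le W/n$ by integrating the Gaussian tail, and (ii) notes that with probability $1-\delta$ all $\xB_i=\tilde\xB_i$ so the empirical risks coincide. Your route avoids the extra population-truncation step (i) by conditioning and keeping the true $\risk$ throughout, at the cost of the conditional-versus-unconditional bookkeeping above; the paper's route avoids that bookkeeping but needs the additional tail-integration lemma. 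Both yield the stated bound up to constants.
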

\begin{proof}[Proof of \Cref{lemma:rad-concentration}]
This is by standard Rademacher complexity arguments for $1$-Lipschitz loss and linear function class \citep{bartlett2002rademacher,kakade2008complexity}.

Similarly to the proof of \Cref{thm:gd:fast-upper-bound}, let $\tilde\xB:= \xB\ind{\|\xB\|\le X}$ where $X$ is a constant to be determined. 
Then for $\tilde\xB$ and hypothesis class $\{\wB: \|\wB\|\le W\}$, the loss $\ell(y\tilde\xB^\top \wB)$ is bounded by $|y\tilde\xB^\top \wB|\le WX$. By Corollary 4 in \citep{kakade2008complexity}, the following holds with probability at least $1-\delta$:
for all $\wB$ such that $\|\wB\|\le W$,
\begin{align*}
    \bigg| \Ebb\ell(y\tilde\xB^\top \wB) - \frac{1}{n}\sum_{i=1}^n \ell(y_i\tilde\xB_i^\top\wB) \bigg| 
    \le  2XW\sqrt{\frac{1}{n}} + XW\sqrt{\frac{\ln(1/(2\delta))}{2n}}.
\end{align*}
For
\[
X^2 = C\big(1+\tr(\SigmaB)\big)\ln(n /\delta),
\]
by \Cref{lemma:loss-small-prob}, we have 
\[
\sup_{\|\wB\|\le W} \big| L(\wB) - \Ebb\ell(y\tilde\xB^\top \wB) | \le \frac{W}{n}.
\]
Moreover, we have with probability $1-\delta$, $\xB_i=\tilde\xB_i$ for $i=1,\dots,n$, which implies \[\eRisk(\wB) = \frac{1}{n}\sum_{i=1}^n \ell(y_i\tilde\xB_i^\top\wB) .\]
Putting things together with union bound, with probability at least $1-\delta$, we have 
\begin{align*}
\sup_{\|\wB\|\le W} \big(   L(\wB) - \eRisk(\wB) \big) 
&\le \frac{W}{n} +  2XW\sqrt{\frac{1}{n}} + XW\sqrt{\frac{\ln(1/(3\delta))}{2n}} \\
&\le C_1 W \sqrt{\frac{\big(1+\tr(\SigmaB)\big)\ln(n/\delta)\ln(1/\delta)}{n}},
\end{align*}
where $C_1 >1$ is a constant.
\end{proof}

\begin{proof}[Proof of \Cref{thm:gd:upper-bound}]
By the stopping condition and \Cref{lemma:norm-bound}, with probability at least $1-\delta$, we have 
\begin{align*}
\eRisk(\wB_t)\le \eRisk(\wB^*_{0:k}),\quad \|\wB_t-\wB^*_{0:k}\|\le 1 + \|\wB^*_{0:k}\|.
\end{align*}
Let $W:= 1+2\|\wB^*_{0:k}\|$.
Then by \Cref{lemma:loss-tail,lemma:rad-concentration}, with probability at least $1-\delta$, we have
\begin{align*}
&\lefteqn{\risk(\wB_t)-\risk(\wB^*) }\\
&= \risk(\wB_t)-\eRisk(\wB_t) + \eRisk(\wB_t)-\eRisk(\wB^*_{0:k}) + \eRisk(\wB^*_{0:k})-\risk(\wB^*_{0:k}) + \risk(\wB^*_{0:k}) -  \risk(\wB^*) \\  
&\le C_1 W \sqrt{\frac{ \big(1+\tr(\SigmaB)\big) \ln(n/\delta)\ln(1/\delta)}{n}} + 0 + C_1 W \sqrt{\frac{ \big(1+\tr(\SigmaB)\big) \ln(n/\delta)\ln(1/\delta)}{n}} + \frac{1}{2} \|\wB^*_{k:\infty}\|^2_{\SigmaB} \\ 
&\le C (1+\|\wB^*_{0:k}\|) \sqrt{\frac{ \big(1+\tr(\SigmaB)\big) \ln(n/\delta)\ln(1/\delta)}{n}}  + \frac{1}{2} \|\wB^*_{k:\infty}\|^2_{\SigmaB}.
\end{align*}
This completes the proof.
\end{proof}

\subsection{Proof of \texorpdfstring{\Cref{thm:gd:fast-upper-bound}}{Theorem 3.2}}\label{sec:proof:gd:fast-upper-bound}
Suppose that \Cref{assump:data-distribution} holds throughout this subsection.
In this subsection, we define
\begin{align*}
    W& := 1+2\|\wB^*\|,\\
X^2 &: = 2\tr(\SigmaB) + 2C(1+\lambda_1) \ln\Big(4C_1 \big(1+\lambda_1^{3/2} W^3\big) n \sqrt{\tr(\SigmaB)}/\delta\Big) \\
    L &:= WX,\\ 
    B &:= \frac{4C_1(1+\lambda_1^{3/2} W^3)}{W^2 X^2} \lesssim \sqrt{\lambda_1} W,
\end{align*}
where $C_0, C_1 > 1$ are two sufficiently large constants.
We aim to use tools from \citep{bartlett2005local}. To this end, consider the following random variables, function class, and loss function:
\begin{equation*}
    \tilde\xB := \xB\ind{\|\xB\|\le X},\quad 
    \Fcal := \bigg\{\tilde\xB\mapsto \frac{\wB^\top \tilde\xB}{WX}: \|\wB\| \le W \bigg\},\quad 
    \tilde\ell : t\mapsto  \ell(WX t).
\end{equation*}
It is clear that $\|\tilde\xB\|\le X$ and $f(\tilde\xB) \in [-1,1]$ for every $f\in\Fcal$.
Recall 
that Rademacher complexity is defined as 
\begin{align*}
    \rad_n \Fcal := \sup_{f \in \Fcal} \frac{1}{n}\sum_{i=1}^n \sigma_i f(\tilde\xB_i),
\end{align*}
where $\sigma_i$'s are independent Rademacher random variables.
The following lemma is Corollary 5.3 in \citep{bartlett2005local} restated in our context.
\begin{lemma}[Corollary 5.3 in \citep{bartlett2005local}]\label{thm:risk-local-rad}
Let $\Fcal$ be a class of functions with ranges in $[-1, 1]$ and let $\tilde\ell$ be a loss function satisfying
\begin{enumerate}[leftmargin=*]
    \item There exists $f^*\in\Fcal$ such that $\Ebb \tilde\ell\big(y f^*(\tilde\xB) \big) = \inf_{f\in\Fcal}\Ebb \tilde\ell\big(y f(\tilde\xB) \big)$.
    \item There exists $L$ such that $\tilde\ell$ is $L$-Lipschitz.
    \item There exists $B\ge 1$ such that for every $f\in\Fcal$,
    \[
    \Ebb \big(f(\tilde\xB) - f^*(\tilde\xB) \big)^2 
\le B \Big(\Ebb \tilde\ell\big(y f(\tilde\xB)\big) -\Ebb \tilde\ell\big(y f^*(\tilde\xB)\big) \Big).
\]
\end{enumerate}
Let $\hat f \in \Fcal$ be such that 
\[
\frac{1}{n}\sum_{i=1}^n \tilde\ell(y_i \hat f(\tilde\xB_i)\big) \le \frac{1}{n}\sum_{i=1}^n \tilde\ell(y_i f^*(\tilde\xB_i)\big).
\]
Let $\phi$ be a sub-root function for which 
\begin{align*}
    \psi(r) \ge BL \Ebb \rad_n\{ f\in\Fcal: L^2 \Ebb \big(f(\tilde\xB) - f^*(\tilde\xB) \big)^2 \le r  \}.
\end{align*}
Then for any $r\ge \psi(r)$, with probability at least $1-\delta$, 
\begin{align*}
    \Ebb \tilde\ell\big(y \hat f (\tilde x)\big) -  \Ebb \tilde\ell\big(y  f^* (\tilde x)\big) \le 705 \frac{r}{B} + \frac{(11L + 27B)\ln(1/\delta)}{n}.
\end{align*}
\end{lemma}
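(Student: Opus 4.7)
The stated result is Corollary~5.3 of \citet{bartlett2005local} restated in our notation, so the plan is to follow the standard local Rademacher complexity argument. My plan is four linked steps: (i) lift to an excess-loss class, (ii) convert the third hypothesis into a Bernstein-type variance-mean inequality, (iii) apply Bousquet's version of Talagrand's inequality on variance balls and pass from losses back to $\Fcal$ via the Ledoux--Talagrand contraction principle, and (iv) close a peeling argument around a fixed point of $\psi$.

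First I would lift to the excess-loss class $\Gcal := \{g_f(\tilde\xB, y) := \tilde\ell(yf(\tilde\xB)) - \tilde\ell(yf^*(\tilde\xB)) : f \in \Fcal\}$. Since $|f|, |f^*| \le 1$ and $\tilde\ell$ is $L$-Lipschitz, each $g \in \Gcal$ is bounded pointwise by $2L$. Combining Lipschitzness with the third hypothesis of the lemma yields
\[
\Ebb g_f^2 \le L^2\, \Ebb (f - f^*)^2 \le BL^2\, \Ebb g_f,
\]
which is the engine that converts a $1/\sqrt{n}$-type deviation into the fast rate $r/B$.

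Next, for each radius $r > 0$ I apply Bousquet's inequality to the variance ball $\Gcal(r) := \{g \in \Gcal : \Ebb g^2 \le r\}$: with probability at least $1 - \delta$, uniformly over $g \in \Gcal(r)$,
\[
\Bigl| \Ebb g - \frac{1}{n}\sum_{i=1}^n g(\tilde\xB_i, y_i) \Bigr| \lesssim \Ebb \rad_n \Gcal(r) + \sqrt{r \ln(1/\delta)/n} + L \ln(1/\delta)/n.
\]
The Ledoux--Talagrand contraction principle for the $L$-Lipschitz $\tilde\ell$ then gives $\Ebb \rad_n \Gcal(r) \le 2L\, \Ebb \rad_n\{f \in \Fcal : L^2 \Ebb(f - f^*)^2 \le r\}$, which by the hypothesis on $\psi$ is at most $2\psi(r)/B$.

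Finally, I run the peeling argument at a fixed point $r^* \ge \psi(r^*)$: partition $\Fcal$ into geometric variance shells $\{f : \rho^{k-1} r^* \le \Ebb g_f^2 \le \rho^k r^*\}$ for some $\rho > 1$, apply the preceding uniform bound shell by shell, and exploit the sub-root property that $r \mapsto \psi(r)/\sqrt{r}$ is non-increasing to control the shell-wise deviation by $\rho^{k/2}\sqrt{\psi(r^*)/B}$. Because $\hat f$ satisfies $\frac{1}{n}\sum_i g_{\hat f}(\tilde\xB_i, y_i) \le 0$, substituting into the shell that contains $\hat f$ and invoking the variance-mean inequality yields a self-bounded inequality of the form $\Ebb g_{\hat f} \lesssim \sqrt{(r^*/B)\, \Ebb g_{\hat f}} + (L + B)\ln(1/\delta)/n$; solving as a quadratic in $\sqrt{\Ebb g_{\hat f}}$ produces the advertised bound $705\, r/B + (11L + 27B)\ln(1/\delta)/n$. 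The main obstacle is the peeling bookkeeping needed to pin down the precise constants $705$, $11$, and $27$; since these numerical constants play no role in any downstream use in this paper, the concrete plan is to invoke \citet{bartlett2005local} directly after verifying that our $\Fcal$, $\tilde\ell$, $f^*$, and $\hat f$ satisfy their three hypotheses.
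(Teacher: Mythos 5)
Your proposal is correct and matches the paper's treatment: the paper offers no proof of this lemma at all, simply invoking Corollary~5.3 of \citet{bartlett2005local} and verifying its hypotheses downstream, which is exactly your concluding plan. Your sketch of the underlying argument (excess-loss lifting, the Bernstein condition $\Ebb g_f^2 \le BL^2 \Ebb g_f$, contraction, Bousquet's inequality, and peeling at a fixed point of the sub-root function) is a faithful outline of how that cited result is established.
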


The following lemma provides a classical upper bound on the Rademacher complexity for the linear function class.
\begin{lemma}[Theorem 6.5 in \citep{bartlett2005local}]\label{lemma:local-rad}
We have
\begin{align*}
     \Ebb \rad_n \{f \in \Fcal: 
     L^2 \Ebb (f(\tilde\xB) - f^*(\tilde\xB))^2 \le r \} 
& \le  
     \sqrt{\frac{2k}{nL^2}\cdot r + \frac{\sum_{i>k}\lambda_i}{nX^2} },
\end{align*}
where $k$ is an arbitrary index.
\end{lemma}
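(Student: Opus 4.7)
The plan is to apply the classical local Rademacher complexity bound for a bounded linear function class, in the spirit of Theorem~6.5 of \citep{bartlett2005local}, via three standard steps: shift invariance to center at $f^*$, orthogonal decomposition along the leading eigendirections of $\tilde\SigmaB$, and Jensen's inequality to combine a Cauchy--Schwarz bound in the $\tilde\SigmaB$-inner product (for the top-$k$ part) with a Euclidean one (for the tail).

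First, by shift invariance (since $\rad_n\Fcal = \rad_n(\Fcal - f^*)$; the discrepancy $\tfrac{1}{n}\sum_i\sigma_i f^*(\tilde\xB_i)$ has zero Rademacher expectation), I work with the shifted class of $h(\tilde\xB) = \uB^\top\tilde\xB$ where $\uB = \vB - \vB^*$, $\|\vB\|,\|\vB^*\|\le 1/X$, and $\uB^\top\tilde\SigmaB\uB\le r/L^2$, writing $\tilde\SigmaB := \Ebb\tilde\xB\tilde\xB^\top$. The triangle inequality gives $\|\uB\|\le 2/X$. Setting $S := \sum_{i=1}^n \sigma_i\tilde\xB_i$ and letting $\tilde\PB_k$ be the projection onto the top-$k$ eigenspace of $\tilde\SigmaB$, I split $\uB^\top S = \uB^\top\tilde\PB_k S + \uB^\top(\IB-\tilde\PB_k)S$. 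Using $\tilde\PB_k\tilde\SigmaB\tilde\PB_k \preceq \tilde\SigmaB$ (which follows since $\tilde\PB_k$ commutes with $\tilde\SigmaB$), Cauchy--Schwarz in the $\tilde\SigmaB$-inner product yields
\[
(\uB^\top\tilde\PB_k S)^2 \le (\uB^\top\tilde\SigmaB\uB)\cdot\bigl(S^\top\tilde\PB_k\tilde\SigmaB^{-1}\tilde\PB_k S\bigr)\le \frac{r}{L^2}\cdot S^\top\tilde\PB_k\tilde\SigmaB^{-1}\tilde\PB_k S,
\]
while Euclidean Cauchy--Schwarz gives $(\uB^\top(\IB-\tilde\PB_k)S)^2 \le (4/X^2)\|(\IB-\tilde\PB_k)S\|^2$. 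Combining via $(a+b)^2\le 2(a^2+b^2)$ turns this into a single quadratic bound on $(\uB^\top S)^2$ whose RHS no longer depends on $\uB$.

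Next, I take expectations. Using $\Ebb_\sigma\sigma_i\sigma_j = \delta_{ij}$ and $\Ebb_{\tilde\xB}\tilde\xB_i\tilde\xB_i^\top = \tilde\SigmaB$, one has $\Ebb\, S^\top M S = n\tr(M\tilde\SigmaB)$ for any symmetric $M$. Substituting $M = \tilde\PB_k\tilde\SigmaB^{-1}\tilde\PB_k$ gives $n\tr(\tilde\PB_k) = nk$, and substituting $M = \IB-\tilde\PB_k$ gives $n\sum_{i>k}\tilde\lambda_i\le n\sum_{i>k}\lambda_i$ (the inequality uses $\tilde\SigmaB\preceq\SigmaB$, so $\tilde\lambda_i\le\lambda_i$). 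Jensen's inequality $\Ebb\sup_\uB|\uB^\top S|\le\sqrt{\Ebb\sup_\uB(\uB^\top S)^2}$ and division by $n$ then deliver a bound of the claimed form $\sqrt{\frac{2kr}{nL^2} + \frac{C\sum_{i>k}\lambda_i}{nX^2}}$ for a universal constant $C$.

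The main obstacle is pinning down the precise constants; the triangle bound $\|\uB\|\le 2/X$ combined with the factor $2$ from $(a+b)^2\le 2(a^2+b^2)$ naively produces an extra multiplicative constant on the tail term. The standard remedy is the reduction underlying Theorem~6.5 of \citep{bartlett2005local} (passing to $\mathrm{star}(\Hcal_r, 0)$ and controlling parameter norms coherently with the variance constraint), which tightens the tail constant to the stated value. The rest of the argument is routine linear algebra.
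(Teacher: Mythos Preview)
Your argument is correct up to constants and takes a genuinely different route from the paper. Rather than splitting into top-$k$ and tail eigenspaces and applying Cauchy--Schwarz twice, the paper (working in the eigenbasis of $\SigmaB$) packages the two constraints $\|\wB\|\le W$ and $\|\wB-\wB^*\|_\SigmaB^2\le 2rW^2X^2/L^2$ into a single ellipsoid via the entrywise maximum of $W^{-2}\IB$ and $L^2(2rW^2X^2)^{-1}\SigmaB$, dualizes the sup to the inverse-metric norm of $\frac{1}{n}\sum_i\sigma_i\tilde\xB_i$, applies Jensen, and evaluates the resulting trace $\sum_i\min\{2rW^2X^2/L^2,\ W^2\lambda_i\}$; splitting this last sum at index $k$ yields the stated bound with the exact constants in one stroke. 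Your eigenspace decomposition is equally rigorous but, as you correctly diagnose, incurs a factor $8$ on the tail term ($4$ from $\|\uB\|\le 2/X$ and $2$ from $(a+b)^2\le 2(a^2+b^2)$). One small correction: the tightening is not really achieved via the star-hull construction---that device in \citep{bartlett2005local} is there to make the sub-root fixed-point machinery go through---but precisely by this single-ellipsoid/entrywise-min trick, which avoids both the triangle inequality on $\|\uB\|$ and the $(a+b)^2$ loss simultaneously. Apart from that attribution, your proposal is complete.
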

\begin{proof}[Proof of \Cref{lemma:local-rad}]
The proof is an adaptation of the proof of Theorem 6.5 in \citep{bartlett2005local} in our context. We include it here for completeness. 
Assume $\SigmaB$ is diagonal without loss of generality. 
Let 
\[
\tilde\SigmaB := \max\bigg\{ \frac{1}{W^2}\IB, \ \frac{L^2}{2r W^2 X^2 }\SigmaB \bigg\},
\]
where $\max\{\cdot, \cdot\}$ is applied entry wise.
Recall that $0.5\SigmaB\preceq \Ebb\tilde\xB\tilde\xB^\top\preceq\SigmaB$ by \Cref{lemma:norm-tail}.
Then by definition, we have 
\begin{align*}
    & \Ebb \rad_n \{f \in \Fcal: 
     L^2 \Ebb (f(\tilde\xB) - f^*(\tilde\xB))^2 \le r \} \\
     &\le  \Ebb \sup_{\|\wB\|\le W, \frac{L^2}{W^2X^2} \|\wB - \wB^*\|^2_{\SigmaB} \le 2r}\bigg\la \frac{1}{n}\sum_{i=1}^n \sigma_i\tilde\xB_i, \frac{\wB}{WX}\bigg\ra 
     = \frac{1}{WX}  \Ebb \sup_{ \|\wB\|_{\tilde\SigmaB}\le 1 } \bigg\la \frac{1}{n}\sum_{i=1}^n \sigma_i\tilde\xB_i, \wB\bigg\ra \\
     &\le \frac{1}{WX}  \Ebb \bigg\|\frac{1}{n}\sum_{i=1}^n \sigma_i\tilde\xB_i\bigg\|_{\tilde\SigmaB^{-1}} 
     \le \frac{1}{WX}\sqrt{ \Ebb \bigg\|\frac{1}{n}\sum_{i=1}^n \sigma_i\tilde\xB_i\bigg\|^2_{\tilde\SigmaB^{-1}} }
    = \frac{1}{WX}\sqrt{ \frac{1}{n} \Ebb \big\|\tilde\xB\big\|^2_{\tilde\SigmaB^{-1}} }\\
     &\le  \frac{1}{WX}\sqrt{\frac{1}{n}\big\la \SigmaB, \tilde\SigmaB^{-1} \big\ra  } 
=  \frac{1}{WX}\sqrt{\frac{1}{n}\sum_{i}\min\bigg\{ \frac{2rW^2 X^2}{L^2},\ W^2 \lambda_i \bigg\}  } 
     \le \sqrt{\frac{2k}{nL^2}\cdot r + \frac{\sum_{i>k}\lambda_i}{nX^2} },
\end{align*}
where $k$ is an arbitrary index. This completes the proof.
\end{proof}

The following lemma establishes several basic effects of clipping the random variable $\xB$.
\begin{lemma}\label{lemma:norm-tail}
There exists constant $C>1$ such that for every $X^2 \ge 2\tr(\SigmaB) + C(1+\lambda_1)t$, we have 
\begin{align*}
    \Pr\big(\|\xB\| \ge X \big) \le \exp(-t),\quad t>1.
\end{align*}
In addition, for every $C_1>1$, by setting $t \ge 2 \ln(4C_1 (1+\lambda_1^{3/2} W^3))$ we have
\begin{align*}
    \Ebb\xB\xB^\top\ind{\|\xB\|> X} \preceq \frac{1}{2C_1(1+\lambda_1^{3/2} W^3)} \SigmaB \preceq \frac{1}{2}\SigmaB.
\end{align*}
In particular, this implies $\Ebb \tilde\xB\tilde\xB^\top = \SigmaB - \Ebb\xB\xB^\top\ind{\|\xB\|> X} \ge 0.5\SigmaB$.
\end{lemma}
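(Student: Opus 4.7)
The plan is to prove the two parts separately. The tail bound on $\|\xB\|$ is a standard Bernstein estimate for Gaussian quadratic forms, and the PSD bound on the truncated second moment will follow by combining that tail bound with a Cauchy--Schwarz step exploiting the Gaussian fourth moment.

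For the first claim, I would diagonalize $\SigmaB = \sum_i \lambda_i \uB_i \uB_i^\top$ and write $\|\xB\|^2 = \sum_i \lambda_i z_i^2$ with $z_i$ independent $\Ncal(0,1)$ random variables. Each $\lambda_i(z_i^2 - 1)$ is sub-exponential with $\psi_1$-norm proportional to $\lambda_i$, so Bernstein's inequality for independent sub-exponentials gives
\begin{align*}
\Pr\Bigl(\|\xB\|^2 - \tr(\SigmaB) \ge c_1 \sqrt{\|\SigmaB\|_F^2 \, t} + c_2 \lambda_1 t\Bigr) \le e^{-t}
\end{align*}
for absolute constants $c_1, c_2 > 0$. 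Using $\|\SigmaB\|_F^2 \le \lambda_1 \tr(\SigmaB)$ together with Young's inequality $c_1 \sqrt{ab} \le \tfrac{1}{2} a + \tfrac{c_1^2}{2} b$ (applied to $a = \tr(\SigmaB)$, $b = \lambda_1 t$), the square-root term is absorbed into $\tfrac{1}{2}\tr(\SigmaB) + \tfrac{c_1^2}{2}\lambda_1 t$, giving $\|\xB\|^2 \le 2\tr(\SigmaB) + C(1+\lambda_1)t$ on the high-probability event for a universal $C$ and every $t > 1$. Picking $X^2$ at least the right-hand side yields the asserted tail bound.

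For the second claim, I would peel off the indicator via Cauchy--Schwarz: for any $\vB \in \Hbb$,
\begin{align*}
\vB^\top \Ebb\bigl[\xB \xB^\top \ind{\|\xB\| > X}\bigr] \vB
= \Ebb (\xB^\top \vB)^2 \ind{\|\xB\| > X}
\le \sqrt{\Ebb (\xB^\top \vB)^4} \cdot \sqrt{\Pr(\|\xB\| > X)}.
\end{align*}
Because $\xB^\top \vB \sim \Ncal(0, \|\vB\|_{\SigmaB}^2)$, the Gaussian fourth moment is $\Ebb (\xB^\top \vB)^4 = 3\|\vB\|_{\SigmaB}^4$, and the first part gives $\Pr(\|\xB\| > X) \le e^{-t}$. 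The right-hand side is therefore at most $\sqrt{3}\, e^{-t/2} \|\vB\|_{\SigmaB}^2$. Choosing $t \ge 2\ln\bigl(4C_1(1+\lambda_1^{3/2} W^3)\bigr)$ forces the scalar prefactor below $\tfrac{1}{2C_1(1+\lambda_1^{3/2} W^3)}$, establishing the first PSD inequality. The second PSD inequality is immediate from $C_1(1+\lambda_1^{3/2} W^3) \ge 1$, and the in-particular statement follows from $\SigmaB = \Ebb \tilde\xB \tilde\xB^\top + \Ebb \xB \xB^\top \ind{\|\xB\| > X}$.

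The main obstacle is just careful bookkeeping: ensuring that the Bernstein constants and the Young's inequality step combine so that the universal constant $C$ in the tail bound does not secretly hide a dependence on $\tr(\SigmaB)$ or $\lambda_1$, and that the exponent $e^{-t/2}$ from Cauchy--Schwarz lines up with the logarithm in the condition on $t$. Beyond that, the proof is standard Gaussian concentration and requires no genuinely new idea.
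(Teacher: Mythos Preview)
Your proposal is correct and follows essentially the same approach as the paper: Bernstein's inequality for the Gaussian quadratic form $\|\xB\|^2$ to get the tail bound, then Cauchy--Schwarz combined with the Gaussian fourth moment $\Ebb(\xB^\top\vB)^4 = 3\|\vB\|_{\SigmaB}^4$ and the tail estimate to control the truncated second moment. Your write-up is in fact slightly more explicit than the paper's, which simply states the Bernstein bound and says ``adjusting the constant'' where you spell out the Young's inequality step absorbing $\sqrt{\|\SigmaB\|_F^2\, t}$ into $\tr(\SigmaB)$ and $\lambda_1 t$.
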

\begin{proof}[Proof of \Cref{lemma:norm-tail}]
By Bernstein's inequality, there is a constant $c>0$ such that
\begin{align*}
    \Pr\big(\|\xB\|^2 - \tr(\SigmaB)    > z \big) \le \exp\bigg( -c
\min\bigg\{\frac{z^2}{\tr(\SigmaB)},\ \frac{z}{\lambda_1}\bigg\} \bigg).
\end{align*}
We then obtain the first claim by setting $z = X^2 - \tr(\SigmaB)$ and adjusting the constant.

To prove the second claim, for any unit vector $\uB$, we have
\begin{align*}
 \Ebb \big(\xB^\top\uB\big)^2 \ind{\|\xB\|\ge X} 
 &\le \sqrt{\Ebb \big(\xB^\top\uB\big)^4} \cdot \sqrt{\Ebb\ind{\|\xB\|\ge X} }\\
 &\le \sqrt{3} \Ebb \big(\xB^\top\uB\big)^2 \cdot \sqrt{\Pr(\|\xB\|\ge X)} \\
 &=  \uB^\top \SigmaB \uB \cdot \sqrt{3}\exp(-t/2).
\end{align*}
The above implies
\begin{align*}
    \Ebb \xB\xB^\top \ind{\|\xB\|\ge X}  \preceq \sqrt{3}\exp(-t/2)\SigmaB.
\end{align*}
Setting $t \ge 2 \ln(4C_1 (1+\lambda_1 W^3))$ completes the proof.
\end{proof}

The following lemma is from \citep{chardon2024finite}.
\begin{lemma}\label{lemma:population-hessian}
There exists a constant $C_1>0$ such that for every $\wB$, we have
\begin{align*}
    \Ebb \frac{1}{1+\exp(\xB^\top \wB)} \frac{1}{1+\exp(-\xB^\top\wB)} \xB \xB^\top \succeq \frac{1}{C_1(1+\|\wB\|^3_{\SigmaB})} \SigmaB.
\end{align*}    
\end{lemma}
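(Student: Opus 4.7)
The plan is to reduce the PSD matrix inequality to a one-dimensional Gaussian calculation by exploiting the Gaussian structure of $\xB$. Write $H(\wB)$ for the left-hand side matrix and fix an arbitrary direction $\vB$ with $\|\vB\|_{\SigmaB}=1$; since $H(\wB)$ and $\SigmaB$ are supported on the same subspace (the range of $\SigmaB$), it suffices to show that
\begin{align*}
\vB^\top H(\wB)\vB \;\gtrsim\; \frac{1}{1+\|\wB\|_{\SigmaB}^3}.
\end{align*}

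Set $W:=\|\wB\|_{\SigmaB}$. The pair $(a,b):=(\xB^\top\wB/W,\;\xB^\top\vB)$ is jointly centered Gaussian with unit variances and correlation $\rho:=\vB^\top\SigmaB\wB/W\in[-1,1]$. Decomposing $b=\rho a+\sqrt{1-\rho^2}\,c$ with $c\sim\Ncal(0,1)$ independent of $a$, the cross term in $b^2$ vanishes under expectation, so
\begin{align*}
\vB^\top H(\wB)\vB \;=\; \Ebb\sigma'(Wa)\,b^2 \;=\; \rho^2\,\Ebb\sigma'(Wa)\,a^2 \;+\; (1-\rho^2)\,\Ebb\sigma'(Wa),
\end{align*}
where $\sigma'(t):=1/\bigl((1+e^t)(1+e^{-t})\bigr)\ge \tfrac{1}{4}\ind{|t|\le 1}$.

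The rest is a routine one-dimensional Gaussian integral estimate. Restricting the integration to $\{|Wa|\le 1\}$ and using $\sigma'\ge 1/4$ there yields, in both the small-$W$ and large-$W$ regimes,
\begin{align*}
\Ebb\sigma'(Wa)\;\gtrsim\;\frac{1}{1+W},\qquad \Ebb\sigma'(Wa)\,a^2\;\gtrsim\;\frac{1}{(1+W)^3}.
\end{align*}
Combining and dividing into the cases $\rho^2\ge 1/2$ and $1-\rho^2\ge 1/2$,
\begin{align*}
\vB^\top H(\wB)\vB \;\gtrsim\; \frac{\rho^2+(1-\rho^2)(1+W)^2}{(1+W)^3} \;\gtrsim\; \frac{1}{(1+W)^3} \;\asymp\; \frac{1}{1+\|\wB\|_{\SigmaB}^3}.
\end{align*}

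The obstacle is minor and purely computational: one must track the two regimes $W\lesssim 1$ and $W\gtrsim 1$ carefully in the scalar Gaussian integrals, but no substantive new idea is needed beyond the Gaussian conditioning step. The cubic dependence on $\|\wB\|_{\SigmaB}$ is tight in this approach because, when $|\rho|$ is near one, the integrand $\sigma'(Wa)\,b^2$ is supported essentially on $|a|\lesssim 1/W$ where $b^2\approx a^2\lesssim 1/W^2$, contributing an extra factor of $W^{-2}$ on top of the $W^{-1}$ mass of the interval.
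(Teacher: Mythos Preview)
Your proposal is correct and follows essentially the same route as the paper: both reduce the matrix inequality to the two scalar Gaussian integrals $\Ebb\,\sigma'(Wg)$ and $\Ebb\,\sigma'(Wg)\,g^2$ by separating the component of the test direction along $\SigmaB^{1/2}\wB$ from the orthogonal component (the paper whitens and invokes rotational invariance, citing Lemma~25 of \citet{chardon2024finite} for the scalar bounds, while you phrase the same split via the correlation $\rho$ and sketch the integrals directly). One cosmetic slip: $\sigma'(1)\approx 0.197<1/4$, so replace $\tfrac14\ind{|t|\le 1}$ by $\sigma'(1)\ind{|t|\le 1}$; the argument is otherwise unchanged.
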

\begin{proof}[Proof of \Cref{lemma:population-hessian}]
Let $s(t) := 1/(1+e^{-t})$. Define
\[\beta := \|\wB\|_{\SigmaB},\quad 
\uB_1 := \frac{1}{\beta}\SigmaB^{1/2}\wB,\quad 
\zB := \SigmaB^{-1/2}\xB\sim \Ncal(0,1).\]
Then we only need to verify that
\begin{align*}
    \Ebb s'(\beta \zB^\top \uB_1) \zB\zB^\top \ge \frac{1}{C (1+\beta^3)}.
\end{align*}
Hitting the left-hand side of the above with $\uB_1$, we get 
\begin{align*}
    \Ebb s'(\beta \zB^\top \uB_1) (\zB^\top \uB_1)^2 
    = \Ebb_{g\sim \Ncal(0,1)} s'(\beta g) g^2  
    \ge \sqrt{\frac{2}{\pi}}\frac{2^3}{3}\min\bigg\{\frac{1}{4e^4\beta^3}, \frac{s'(2)}{e^2}\bigg\} \ge \frac{1}{C_1 (1+\beta^3)},
\end{align*}
where the first inequality is by Lemma 25 in \citet{chardon2024finite}.
Hitting that again with a unit vector $\uB_2$ such that $\uB_1^\top\uB_2=0$, we get 
\begin{align*}
    \Ebb s'(\beta \zB^\top \uB_1) (\zB^\top \uB_2)^2 
= \Ebb_{g\sim \Ncal(0,1)} s'(\beta g) 
    \ge \sqrt{\frac{2}{\pi}}\frac{2^2}{2}\min\bigg\{\frac{1}{4e^4\beta}, \frac{s'(2)}{e^2}\bigg\} \ge \frac{1}{C_1(0.5+\beta)} \ge \frac{1}{C_1(1+\beta^3)},
\end{align*}
where the first inequality is again by Lemma 25 in \citet{chardon2024finite}.
Together, these two lower bounds and the properties of Gaussian complete the proof.
\end{proof}

The following lemma verifies a key condition in \Cref{thm:risk-local-rad}.
\begin{lemma}\label{lemma:population-convexity}
Let 
\(X^2\ge 2\tr(\SigmaB) + 2C(1+\lambda_1)\ln \big(4C_1 (1+\lambda_1^{3/2} W^3)\big)\)
and $ B \ge 4C_1(1+\lambda_1^{3/2} W^3) / (W^2 X^2)$.
Then for every $\|\wB\|\le W$, we have
\begin{align*}
    \Ebb \big(\tilde\xB^\top \wB - \tilde\xB^\top \wB^* \big)^2 \le W^2 X^2 B \big( \Ebb \ell(y \tilde\xB^\top \wB)-\Ebb \ell(y \tilde\xB^\top \wB^*) \big).
\end{align*}
\end{lemma}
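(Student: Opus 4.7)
The plan is to leverage the well-specification assumption to show that $\wB^*$ is not only the minimizer of the population risk $L$ but also a critical point---hence the global minimizer---of the truncated expected loss $\tilde L(\wB) := \Ebb\ell(y\tilde\xB^\top\wB)$. Once this is in hand, Taylor's theorem combined with the Hessian lower bound from \Cref{lemma:population-hessian} will yield a strong-convexity-style inequality of exactly the required shape.

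The first key step is to verify that $\nabla\tilde L(\wB^*) = 0$. Writing $\nabla\tilde L(\wB^*) = \Ebb[\ell'(y\xB^\top\wB^*)\,y\,\xB\,\ind{\|\xB\|\le X}]$ and conditioning on $\xB$, the expression factors as $\Ebb\bigl[\ind{\|\xB\|\le X}\xB \cdot \Ebb_{y\mid\xB}[\ell'(y\xB^\top\wB^*)y]\bigr]$; a direct calculation under the logistic conditional of \Cref{assump:data-distribution} shows that the inner $y$-conditional expectation vanishes pointwise in $\xB$, since it is the score of the well-specified likelihood at the true parameter. Hence $\nabla\tilde L(\wB^*) = 0$ and, by convexity of $\tilde L$, $\wB^*$ minimizes it.

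The second step is a Hessian lower bound for $\tilde L$ uniformly over $\|\vB\|\le W$. Writing $\nabla^2\tilde L(\vB) = \nabla^2 L(\vB) - \Ebb[\ell''(y\xB^\top\vB)\xB\xB^\top\ind{\|\xB\|>X}]$, \Cref{lemma:population-hessian} yields $\nabla^2 L(\vB) \succeq \SigmaB/(C_1(1+\lambda_1^{3/2}W^3))$, using $\|\vB\|_{\SigmaB}\le\sqrt{\lambda_1}W$. The truncation term is at most $(1/4)\Ebb[\xB\xB^\top\ind{\|\xB\|>X}]$, which by \Cref{lemma:norm-tail} and the hypothesis on $X^2$ is dominated by $\SigmaB/(8C_1(1+\lambda_1^{3/2}W^3))$. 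Subtraction leaves $\nabla^2\tilde L(\vB) \succeq \SigmaB/(2C_1(1+\lambda_1^{3/2}W^3))$ along the entire segment from $\wB^*$ (which satisfies $\|\wB^*\|=(W-1)/2<W$) to $\wB$.

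The remaining assembly is direct. Taylor's theorem with integral remainder and $\nabla\tilde L(\wB^*)=0$ gives $\tilde L(\wB) - \tilde L(\wB^*) \ge \|\wB-\wB^*\|_{\SigmaB}^2/(4C_1(1+\lambda_1^{3/2}W^3))$, while the trivial inequality $\Ebb(\tilde\xB^\top(\wB-\wB^*))^2 \le \Ebb(\xB^\top(\wB-\wB^*))^2=\|\wB-\wB^*\|_{\SigmaB}^2$ handles the left-hand side; the hypothesis $W^2 X^2 B \ge 4C_1(1+\lambda_1^{3/2}W^3)$ closes the chain. The main obstacle is Step 1: without recognizing that $\wB^*$ is critical for $\tilde L$, one would be left with a linear Taylor term $\la\nabla\tilde L(\wB^*),\wB-\wB^*\ra$ scaling only linearly in $\|\wB-\wB^*\|_{\SigmaB}$, which cannot be absorbed into the desired quadratic right-hand side for small $\|\wB-\wB^*\|$. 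It is exactly well-specification---together with the fact that the truncation indicator $\ind{\|\xB\|\le X}$ depends only on $\xB$---that kills this term via the tower property.
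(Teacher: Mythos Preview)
Your proposal is correct and follows essentially the same approach as the paper's proof: both exploit that $\wB^*$ is a critical point of the truncated risk (the paper phrases this as ``$\wB^*$ minimizes $\Ebb_y\ell(y\tilde\xB^\top\wB)$ conditional on $\tilde\xB$'', which is the same observation you make via the tower property and the vanishing score), then lower-bound the Hessian of $\tilde L$ over $\|\vB\|\le W$ by combining \Cref{lemma:population-hessian} with the tail bound from \Cref{lemma:norm-tail}, and finally close via a second-order Taylor expansion together with $\Ebb\tilde\xB\tilde\xB^\top\preceq\SigmaB$. The only cosmetic differences are that the paper uses the mean-value (``midpoint'') form of Taylor's theorem rather than the integral remainder, and does not invoke the extra factor $1/4$ from $\ell''\le 1/4$ in the truncation term.
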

\begin{proof}[Proof of \Cref{lemma:population-convexity}]
Since $\Ebb\tilde\xB\tilde\xB^\top \preceq \SigmaB$, it suffices to show that 
\begin{align*}
     \|\wB - \wB^*\|^2_{\SigmaB} \le W^2 X^2 B \big( \Ebb \ell(y \tilde\xB^\top \wB)-\Ebb \ell(y \tilde\xB^\top \wB^*) \big).
\end{align*}
Recall that $\wB^*$ is the minimizer of $\Ebb_y \ell(y \tilde\xB^\top \wB)$ (where the expectation is conditional on $\tilde\xB$, see the proof of \Cref{prop:risk-properties} in \Cref{sec:proof:risk-properties}). By the midpoint theorem, there exists $\vB$ between $\wB$ and $\wB^*$ (thus $\|\vB\|\le W$)
such that 
\begin{align*}
    \Ebb \ell(y \tilde\xB^\top \wB)-\Ebb \ell(y \tilde\xB^\top \wB^*) 
    &= \frac{1}{2} \Ebb \ell''(y \tilde\xB^\top \vB) \big\la \tilde\xB^{\otimes 2}, (\wB - \wB^*)^{\otimes 2} \big\ra.
\end{align*}
Thus it suffices to show 
\begin{align*}
\text{for every $\vB$ such that $\|\vB\|\le W$},\quad 
    \Ebb \ell''(y \tilde\xB^\top \vB)  \tilde\xB \tilde\xB^\top \succeq \frac{2}{W^2X^2 B}\SigmaB.
\end{align*}
This is because
\begin{align*}
\Ebb \ell''(y \tilde\xB^\top \vB)  \tilde\xB \tilde\xB^\top 
   &=  \Ebb \frac{1}{1+\exp(\tilde\xB^\top \vB)} \frac{1}{1+\exp(-\tilde\xB^\top \vB)} \tilde\xB \tilde\xB^\top \\
    &= \Ebb \frac{1}{1+\exp(\xB^\top \vB)} \frac{1}{1+\exp(-\xB^\top \vB)} \xB \xB^\top\ind{\|\xB\|\le X} \\
    &\succeq \Ebb \frac{1}{1+\exp(\xB^\top \vB)} \frac{1}{1+\exp(-\xB^\top \vB)} \xB \xB^\top - \Ebb \xB \xB^\top\ind{\|\xB\|> X}.
\end{align*}
By \Cref{lemma:population-hessian}, we have 
\begin{align*}
    \Ebb \frac{1}{1+\exp(\xB^\top \vB)} \frac{1}{1+\exp(-\xB^\top \vB)} \xB \xB^\top 
    \succeq \frac{1}{C_1(1+\|\vB\|^3_{\SigmaB})} \SigmaB
    \succeq \frac{1}{C_1(1+\lambda_1^{3/2} W^3)} \SigmaB,
\end{align*}
where $C_1>1$ is a constant.
By \Cref{lemma:norm-tail}, we have 
\begin{align*}
    \Ebb \xB \xB^\top\ind{\|\xB\|> X} \preceq \frac{1}{2C_1(1+\lambda_1^{3/2} W^3)} \SigmaB.
\end{align*}
So we have 
\begin{align*}
    \Ebb \ell''(y \tilde\xB^\top \vB)  \tilde\xB \tilde\xB^\top \succeq \frac{1}{2C_1(1+\lambda_1^{3/2} W^3)} \SigmaB.
\end{align*}
We complete the proof by noting that $W^2 X^2 B \ge 4C_1(1+\lambda_1^{3/2} W^3)$.
\end{proof}

The following lemma controls the effect of clipping on population risk.
\begin{lemma}\label{lemma:loss-small-prob}
Let $X^2 \ge  2\tr(\SigmaB) + C(1+\lambda_1)\cdot 2 \ln\big(n \sqrt{\tr(\SigmaB)}  \big)$.
Then for every $\wB$ such that $\|\wB\|\le W$, we have
\begin{align*}
   \big| L(\wB) - \Ebb \ell(y \tilde\xB^\top \wB) \big|\le  \frac{W}{n}.
\end{align*}
\end{lemma}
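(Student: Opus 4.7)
The plan is to decompose the difference $L(\wB) - \Ebb\ell(y\tilde\xB^\top\wB)$ by splitting on the clipping event $\{\|\xB\|>X\}$, and then control the tail contribution using Lipschitzness of $\ell$ together with the subGaussian tail bound from \Cref{lemma:norm-tail}.

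\paragraph{Step 1: decomposition.}
By the definition $\tilde\xB = \xB\ind{\|\xB\|\le X}$, on the event $\{\|\xB\|\le X\}$ we have $\tilde\xB=\xB$, while on $\{\|\xB\|>X\}$ we have $\tilde\xB=0$ and hence $\ell(y\tilde\xB^\top\wB)=\ell(0)=\ln 2$. Therefore
\begin{align*}
    L(\wB)-\Ebb\ell(y\tilde\xB^\top\wB)
    = \Ebb\big[\ell(y\xB^\top\wB)-\ell(0)\big]\ind{\|\xB\|>X}.
\end{align*}

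\paragraph{Step 2: pointwise Lipschitz bound on the integrand.}
Since $\ell$ is $1$-Lipschitz, for each $\xB$ with $\|\wB\|\le W$,
\begin{align*}
    \big|\ell(y\xB^\top\wB)-\ell(0)\big|\le |y\xB^\top\wB|=|\xB^\top\wB|\le \|\xB\|\cdot \|\wB\|\le W\|\xB\|.
\end{align*}
Combining with Step 1 and Cauchy--Schwarz gives
\begin{align*}
    \big|L(\wB)-\Ebb\ell(y\tilde\xB^\top\wB)\big|
    \le W\,\Ebb\|\xB\|\ind{\|\xB\|>X}
    \le W\sqrt{\Ebb\|\xB\|^2}\cdot\sqrt{\Pr(\|\xB\|>X)}
    = W\sqrt{\tr(\SigmaB)}\cdot\sqrt{\Pr(\|\xB\|>X)}.
\end{align*}

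\paragraph{Step 3: apply the tail bound.}
The hypothesis on $X^2$ matches the form required in \Cref{lemma:norm-tail} with $t=2\ln\!\big(n\sqrt{\tr(\SigmaB)}\big)$, so that
\begin{align*}
    \Pr(\|\xB\|>X)\le \exp(-t)=\frac{1}{n^2\tr(\SigmaB)}.
\end{align*}
Plugging this into the bound from Step 2 yields
\begin{align*}
    \big|L(\wB)-\Ebb\ell(y\tilde\xB^\top\wB)\big|
    \le W\sqrt{\tr(\SigmaB)}\cdot\frac{1}{n\sqrt{\tr(\SigmaB)}}
    =\frac{W}{n},
\end{align*}
which is the desired estimate.

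\paragraph{Main obstacle.}
There is no real obstacle here beyond bookkeeping: the lemma is a routine truncation argument. The only mild subtlety is choosing the Lipschitz bound around $\ell(0)$ rather than trying to bound $\ell(y\xB^\top\wB)$ by itself (which is not integrable uniformly over $\wB$), and matching the parameter $t$ in \Cref{lemma:norm-tail} to the assumed threshold on $X^2$ so that $\Pr(\|\xB\|>X)\lesssim 1/(n^2\tr(\SigmaB))$, giving exactly the $W/n$ rate after Cauchy--Schwarz.
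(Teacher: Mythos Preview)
Your proof is correct and follows essentially the same route as the paper: split on the clipping event, use the $1$-Lipschitzness of $\ell$ to bound the integrand by $W\|\xB\|\ind{\|\xB\|>X}$, apply Cauchy--Schwarz, and invoke \Cref{lemma:norm-tail} with $t=2\ln\!\big(n\sqrt{\tr(\SigmaB)}\big)$ to get the $W/n$ bound. The only cosmetic difference is that the paper writes the Lipschitz step as $\big|\ell(y\xB^\top\wB)-\ell\big(y\xB^\top\wB\,\ind{\|\xB\|\le X}\big)\big|\le |y\xB^\top\wB|\ind{\|\xB\|>X}$ directly, whereas you make the intermediate value $\ell(0)$ explicit; these are the same computation.
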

\begin{proof}[Proof of \Cref{lemma:loss-small-prob}]
By definition and $1$-Lipschitzness of $\ell$, we have 
\begin{align*}
\big| L(\wB) - \Ebb \ell(y \tilde\xB^\top \wB) \big|
&= \big| \Ebb\ell(y \xB^\top \wB) - \ell \big(y \xB^\top \wB \ind{\|\xB\|\le X}\big)  \big|\\
&\le \Ebb\big| y \xB^\top \wB\ind{\|\xB\|> X} \big|\\
&\le W \Ebb  \|\xB\|\ind{\|\xB\|> X} \\
    &\le  W\sqrt{\Ebb \|\xB\|^2} \sqrt{\Pr(\|\xB\|>X)} \\
    &\le  W\sqrt{\tr(\SigmaB)}\exp(-t/2)
    \le \frac{W}{n},
\end{align*}
where we use \Cref{lemma:norm-tail} and the choice of $X$.
This completes the proof.
\end{proof}

The following lemma shows the early-stopped GD has a small norm.
\begin{lemma}\label{lemma:fast-norm-bound}
Let  
\(\beta:= C_0 \big( 1+ \tr(\SigmaB) + \lambda_1 \ln(1/\delta)/n \big)\),
where $C_0>1$ is a sufficiently large constant.
Assume that $\eta \le 1/\beta$ and $t$ is such that $\eRisk(\wB^*)\le \eRisk(\wB_{t-1})$.
Then with probability at least $1-\delta$, we have $\|\wB_t-\wB^*\|\le 1+\|\wB\| $.
\end{lemma}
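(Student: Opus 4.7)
The plan is to reproduce the argument used earlier in \Cref{lemma:norm-bound} almost verbatim, replacing the comparator $\wB^*_{0:k}$ by $\wB^*$. The stopping condition $\eRisk(\wB^*) \le \eRisk(\wB_{t-1})$ plays exactly the same role here as $\eRisk(\wB^*_{0:k}) \le \eRisk(\wB_{t-1})$ did there, and the choice of $\beta$ is identical. So the proof will have three pieces: (i) establishing the empirical smoothness and Lipschitz constants with probability $1-\delta$, (ii) using the implicit bias lemma to control $\|\wB_{t-1} - \wB^*\|$, and (iii) passing from step $t-1$ to step $t$ via one gradient step.

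For step (i), I would bound $\sum_{i=1}^n \|\xB_i\|^2$ by Bernstein's inequality applied to the chi-square-type sum $\sum_{i,j} \lambda_j z_{ij}^2$ with $z_{ij} \sim \Ncal(0,1)$ independent, obtaining with probability at least $1-\delta$
\begin{equation*}
\sum_{i=1}^n \|\xB_i\|^2 \lesssim n \tr(\SigmaB) + \lambda_1 \ln(1/\delta) \le \beta,
\end{equation*}
provided $C_0$ is large enough. Since $\ell'$ is bounded by $1$ and $\ell''$ by $1/4$, this yields $\|\grad \eRisk(\wB)\| \le \sqrt{\beta}$ (Lipschitzness) and $\|\grad^2 \eRisk(\wB)\| \le \beta$ (smoothness) uniformly in $\wB$.

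For step (ii), note that $\eRisk$ is convex and $\beta$-smooth on this event, so \Cref{lemma:implicit-regularization} applied at time $t-1$ with comparator $\uB = \wB^*$ and stepsize $\eta \le 1/\beta$ gives
\begin{equation*}
\frac{\|\wB_{t-1} - \wB^*\|^2}{2\eta(t-1)} + \eRisk(\wB_{t-1}) \le \eRisk(\wB^*) + \frac{\|\wB^*\|^2}{2\eta(t-1)}.
\end{equation*}
Combining this with the stopping criterion $\eRisk(\wB^*) - \eRisk(\wB_{t-1}) \le 0$ yields $\|\wB_{t-1} - \wB^*\| \le \|\wB^*\|$. For step (iii), the GD update combined with Lipschitzness and $\beta \ge 1$ gives
\begin{equation*}
\|\wB_t - \wB_{t-1}\| = \eta \|\grad \eRisk(\wB_{t-1})\| \le \eta \sqrt{\beta} \le 1/\sqrt{\beta} \le 1,
\end{equation*}
and the triangle inequality finishes the proof: $\|\wB_t - \wB^*\| \le 1 + \|\wB^*\|$.

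There is essentially no obstacle here, since the structure is identical to \Cref{lemma:norm-bound}; the only mild subtlety is that the stopping criterion is phrased one-sided ($\eRisk(\wB^*)\le\eRisk(\wB_{t-1})$ is used, while $\eRisk(\wB_t)\le\eRisk(\wB^*)$ is needed separately in \Cref{thm:gd:fast-upper-bound}), and one must verify that monotone decrease of $\eRisk$ along iterates (guaranteed by the descent lemma at $\eta \le 1/\beta$) ensures such a stopping time exists provided $\eRisk(\wB_0) = \ln 2 > \eRisk(\wB^*)$ is consistent with the iterates eventually dropping past $\eRisk(\wB^*)$; if GD converges below $\eRisk(\wB^*)$ in finitely many steps, the first such $t$ is what we use, and otherwise the bound is vacuous or handled by taking $t=\infty$ along a subsequence. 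This bookkeeping aside, the proof is a direct transcription of \Cref{lemma:norm-bound}.
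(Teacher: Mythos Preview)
Your proposal is correct and follows exactly the paper's approach: the paper's proof of \Cref{lemma:fast-norm-bound} simply states ``This is the same as the proof of \Cref{lemma:norm-bound},'' and you have faithfully transcribed that argument with the comparator $\wB^*_{0:k}$ replaced by $\wB^*$. The only cosmetic slip is that your displayed bound on $\sum_{i=1}^n\|\xB_i\|^2$ should be $\le n\beta$ rather than $\le \beta$ (so that $\tfrac{1}{n}\sum_i\|\xB_i\|^2\le\beta$ gives the stated smoothness and Lipschitz constants), but this matches the paper's own informal notation and does not affect the argument.
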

\begin{proof}[Proof of \Cref{lemma:fast-norm-bound}]
This is the same as the proof of \Cref{lemma:norm-bound}.
\end{proof}

We are now ready to proof \Cref{thm:gd:fast-upper-bound} using \Cref{thm:risk-local-rad}.
\begin{proof}[Proof of \Cref{thm:gd:fast-upper-bound}]
It is clear that 
$\tilde \ell$ is $L$-Lipschitz and that $f^*:= \la \wB^*, \cdot \ra / (WX)$ 
satisfies 
\[f^*\in \arg\inf_{f\in\Fcal} \Ebb \tilde\ell \big(yf(\tilde\xB)\big).\]
Moreover, for every $f = \la\wB,\cdot \ra/(WX) \in\Fcal$, by \Cref{lemma:population-convexity}, we have
    \begin{align*}
    \Ebb \big(f(\tilde\xB) - f^*(\tilde\xB) \big)^2  
    &= \frac{1}{W^2X^2}\Ebb (y\xB^\top \wB - y\xB^\top \wB^*)^2 \\
&\le B \Big(\Ebb \ell\big(y \tilde\xB^\top \wB \big) -\Ebb \ell\big(y \tilde\xB^\top \wB^*\big) \Big) 
= B \Big(\Ebb \tilde\ell\big(y f(\tilde\xB)\big) -\Ebb \tilde\ell\big(y f^*(\tilde\xB)\big) \Big).
    \end{align*}
So far, we have verified the conditions on $\Fcal$ and $\tilde\ell$ for applying \Cref{thm:risk-local-rad}.
 
Consider the function 
\[\hat f:= \la \wB_t, \cdot \ra / (WX).\]
By \Cref{lemma:fast-norm-bound} we have $\|\wB_t\|\le W$, thus $\hat f\in \Fcal$.
Moreover, by the choice of $X$ and \Cref{lemma:norm-tail}, with probability $1-\delta$, we have $\xB_i = \tilde\xB_i$ for all $i=1,\dots,n$.
Thus the stopping criterion implies 
\begin{align*}
    \frac{1}{n}\sum_{i=1}^n\tilde\ell\big(y_i \hat f(\tilde\xB_i) \big)
    = \frac{1}{n}\sum_{i=1}^n \ell\big(y_i \xB_i^\top \wB_t \big)
    = \eRisk(\wB_t) \le \eRisk(\wB^*)
    = \frac{1}{n}\sum_{i=1}^n\tilde\ell\big(y_i f^*(\tilde\xB_i) \big).
\end{align*}
So we can apply \Cref{thm:risk-local-rad} to $\hat f$.
With probability at least $1-\delta$, we have 
\begin{align*}
    \Ebb \tilde\ell\big(y \hat f (\tilde \xB)\big) -  \Ebb \tilde\ell\big(y  f^* (\tilde\xB)\big) \le 705 \frac{r}{B} + \frac{(11L + 27B)\ln(1/\delta)}{n},
\end{align*}
where, by \Cref{thm:risk-local-rad,lemma:local-rad}, $r$ is such that 
\begin{align*}
    r \ge \psi(r):= BL \sqrt{\frac{2k}{nL^2}\cdot r + \frac{\sum_{i>k}\lambda_i}{nX^2} }.
\end{align*}
Choosing 
\begin{align*}
    r = \frac{4k B^2}{n} + \sqrt{\frac{2B^2L^2\sum_{i>k}\lambda_i}{nX^2} }
\end{align*}
we have
\begin{align*}
    \Ebb \tilde\ell\big(y \hat f (\tilde \xB)\big) -  \Ebb \tilde\ell\big(y  f^* (\tilde\xB)\big)
    &\lesssim \frac{r}{B} + \frac{(L+B)\ln(1/\delta)}{n} \\
    &\lesssim  \frac{B k}{n} + \frac{L}{X} \sqrt{\frac{\sum_{i>k}\lambda_i}{n}} + \frac{(L+B)\ln(1/\delta)}{n} .
\end{align*}
Then applying \Cref{lemma:loss-small-prob}, we get
\begin{align*}
    &\lefteqn{L(\wB_t) - L(\wB^*)
    \le  \Ebb \tilde\ell\big(y \hat f (\tilde \xB)\big) -  \Ebb \tilde\ell\big(y  f^* (\tilde\xB)\big) + \frac{2W}{n}} \\
    &\lesssim  \frac{B k}{n} + \frac{L}{X} \sqrt{\frac{\sum_{i>k}\lambda_i}{n}} + \frac{(L+B)\ln(1/\delta)}{n}+ \frac{2W}{n} \\
    &\lesssim \max\{\|\wB^*\|, 1\} \Bigg( \frac{k\lambda_1^{1/2}}{n} + \sqrt{\frac{\sum_{i>k}\lambda_i}{n}} + \frac{ \ln (1/\delta) \sqrt{ \big(1+\tr(\SigmaB)\big)\ln\Big(\big(1+\tr(\SigmaB)\|\wB^*\|\big) n /\delta\Big) }}{n} \Bigg)\\
    &\lesssim \|\wB^*\| \bigg( \frac{k}{n} + \sqrt{\frac{\sum_{i>k}\lambda_i}{n}} + \frac{ \ln (1/\delta) \sqrt{ \tr(\SigmaB)\ln\big(n \|\wB^*\|\tr(\SigmaB) /\delta\big) }}{n} \bigg),
\end{align*}
where we assume $\|\wB^*\|\gtrsim 1$, $\lambda_1 \lesssim 1$, and $\tr(\SigmaB)\gtrsim 1$.
This completes the proof.
\end{proof}

\section{Lower Bounds for Interpolating Estimators}
\subsection{Proof of \texorpdfstring{\Cref{thm:logistic:lower-bound}}{Theorem 4.1}}\label{sec:proof:logistic:lower-bound}
\begin{proof}[Proof of \Cref{thm:logistic:lower-bound}]
Consider a sequence of estimators $(\wB_t)_{t\ge 0}$ such that
\begin{align*}
    \|\wB_t\|\to \infty,\quad 
    \wB_t / \|\wB_t\| \to \tilde\wB,
\end{align*}
where $\tilde\wB$ is a fixed unit vector.
Fix a small constant $\gamma>0$. Define an event 
\begin{align*}
    \Fcal := \{\xB: |\xB^\top \wB^*|\le 10 \|\wB^*\|_{\SigmaB},\  \|\xB\|\le 10\sqrt{\tr(\SigmaB)},\ |\xB^\top \tilde\wB|\ge \gamma \}.
\end{align*}
Since $\xB \sim \Ncal(0, \SigmaB)$ and $ \|\tilde\wB\|_{\SigmaB} > 0$, we have $\Pr(\Fcal)>0$.
Let $t_0$ be such that
\begin{align*}
    \bigg\|\frac{\wB_t}{ \|\wB_t\|} - \tilde\wB\bigg\| \le \frac{\gamma}{20\sqrt{\tr(\SigmaB)}},\quad \text{for every $t\ge t_0$}.
\end{align*}
Then for every $\xB\in\Fcal$ and $t\ge t_0$, we have
\begin{align*}
     \frac{|\xB^\top \wB_t |}{\|\wB_t\|} \ge |\xB^\top\tilde\wB| - \|\xB\|\cdot\bigg\| \frac{\wB_t}{ \|\wB_t\|} - \tilde\wB\bigg\| \ge \gamma -  10\sqrt{\tr(\SigmaB)} \cdot \frac{\gamma}{20\sqrt{\tr(\SigmaB)}} \ge  \frac{\gamma}{2}.
\end{align*}
Then the population risk of $\wB_t$ is
\begin{align*}
    \risk(\wB_t) 
    &= \Ebb_{\xB} \Ebb_{y} \ln(1+\exp(-y\xB^\top \wB_t)) \\
    &= \Ebb \sum_{y\in\{\pm 1\}} \frac{1}{1+\exp(-y \xB^\top \wB^*)} \ln(1+\exp(-y\xB^\top \wB_t)) \\
    &\ge \Ebb \sum_{y\in\{\pm 1\}} \frac{1}{1+\exp(-y \xB^\top \wB^*)} \ln(1+\exp(-y\xB^\top \wB_t))\ind{\xB \in \Fcal} \\
    &\ge \Ebb \frac{\ln(1+\exp(|\xB^\top \wB_t|))}{1+\exp(|\xB^\top \wB^*|)}  \ind{\xB \in \Fcal}\\
    &\ge \Ebb \frac{ \|\wB_t\|\gamma / 2 }{1+\exp(10\|\wB^*\|_{\SigmaB})} \ind{\xB \in \Fcal} \\
    &\ge \frac{\|\wB_t\|\gamma / 2}{1+\exp(10\|\wB^*\|)}\Pr(\Fcal)  \to \infty,
\end{align*}
where the last inequality is because $\|\wB_t\|\to \infty$.

Now we consider the calibration error. Under event $\Fcal$, we have 
\begin{align*}
    p(\wB^*; \xB) = \frac{1}{1+\exp(-\xB^\top\wB^*)} 
& \in\bigg[ \frac{1}{1+\exp(10\|\wB^*\|_{\SigmaB})}, \ 
         \frac{1}{1+\exp(-10\|\wB^*\|_{\SigmaB})}\bigg] \\
         &\in \bigg[ \exp(-10\|\wB^*\|_{\SigmaB}), \ 
         1-\exp(-10\|\wB^*\|_{\SigmaB})\bigg].
\end{align*}
Moreover, under event $\Fcal$, for $t>t_0$, we have
\begin{align*}
    p(\wB_t; \xB) = \frac{1}{1+\exp(-\xB^\top\wB_t)} 
    \begin{dcases}
        \le \frac{1}{1+\exp(\gamma \|\wB_t\|/2)} \le \exp(-\gamma \|\wB_t\|/2)  & \xB^\top\wB_t<0, \\
        \ge \frac{1}{1+\exp(-\gamma \|\wB_t\|/2)} \ge 1- \exp(-\gamma \|\wB_t\|/2)  & \xB^\top\wB_t>0.
    \end{dcases}
\end{align*}
These together imply that 
\begin{align*}
    | p(\wB_t; \xB) - p(\wB^*; \xB) | &\ge \exp(-10\|\wB^*\|_{\SigmaB}) -\exp(-\gamma \|\wB_t\|/2) \to \exp(-10\|\wB^*\|_{\SigmaB}).
\end{align*}
Then for the calibration error, we have 
\begin{align*}
    \calibration(\wB_t) &= \Ebb |p(\wB_t; \xB) - p(\wB^*;\xB)|^2 \\
    &\ge \Ebb |p(\wB_t; \xB) - p(\wB^*;\xB)|^2\ind{\xB\in\Fcal} \\
    &\ge \big(\exp(-10\|\wB^*\|_{\SigmaB}) -\exp(-\gamma \|\wB_t\|/2) \big)^2 \Pr(\Fcal) \\
    & \to \exp(-20\|\wB^*\|_{\SigmaB})\Pr(\Fcal) > 0.
\end{align*}
This completes the proof.
\end{proof}

\subsection{Proof of \texorpdfstring{\Cref{thm:zero-one:lower-bound}}{Theorem 4.2}}\label{sec:proof:zero-one:lower-bound}

\begin{lemma}\label{lemma:zero-one:excess-error-and-angle}
For a non-zero vector $\wB$, let $\theta$ be the angle between $\SigmaB^{1/2}\wB$ and $\SigmaB^{1/2}\wB^*$. 
Then
\begin{align*}
\error(\wB) - \error(\wB^*)
\ge \begin{dcases}
        \frac{1}{4\sqrt{2\pi}\|\wB^*\|_{\SigmaB}}& \frac{\pi}{2}< \theta \le \pi, \\
        \frac{\|\wB^*\|_{\SigmaB}}{48\pi \max\{1, \|\wB^*\|_{\SigmaB}^3\} }(1-\cos(\theta)) & 0\le \theta \le \frac{\pi}{2}.
    \end{dcases}
\end{align*}
\end{lemma}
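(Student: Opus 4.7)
The plan is to reduce the excess zero-one error to a two-dimensional Gaussian integral involving $\tanh$, and then lower-bound this integral by choosing an appropriate truncation tailored to each case. First I would change variables: setting $\beta := \|\wB^*\|_{\SigmaB}$ and the unit vectors $\uB^* := \SigmaB^{1/2}\wB^*/\beta$ and $\uB := \SigmaB^{1/2}\wB/\|\wB\|_{\SigmaB}$ (so that $\uB^\top \uB^* = \cos\theta$), and introducing $g_1 := (\SigmaB^{-1/2}\xB)^\top \uB^*$ and $g_2 := (\SigmaB^{-1/2}\xB)^\top \uB$, the pair $(g_1, g_2)$ is bivariate standard Gaussian with correlation $\cos\theta$, and the sign of $\xB^\top\wB \cdot \xB^\top\wB^*$ equals that of $g_1 g_2$. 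Combining the identity $|2p^*_\xB - 1| = |\tanh(\xB^\top\wB^*/2)| = |\tanh(\beta g_1/2)|$ with the formula derived in the proof of \Cref{prop:risk-properties} gives $\error(\wB) - \error(\wB^*) = \Ebb\bigl[|\tanh(\beta g_1/2)|\,\ind{g_1 g_2 \le 0}\bigr]$. Using $g_2 \mid g_1 \sim \Ncal(\cos\theta\, g_1,\, \sin^2\theta)$, the joint symmetry under $(g_1, g_2) \mapsto (-g_1, -g_2)$, and integrating out $g_2$ yields the clean form
\[
\error(\wB) - \error(\wB^*) \;=\; 2\int_0^\infty \tanh(\beta g/2)\,\phi(g)\,\Phi(-\cot\theta\,g)\,\dif g.
\]

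For the case $\theta \in (\pi/2, \pi]$, note $\cot\theta \le 0$, so $\Phi(-\cot\theta\,g) \ge 1/2$ for every $g \ge 0$, and the problem reduces to lower bounding $\int_0^\infty \tanh(\beta g/2)\phi(g)\,\dif g$. I would finish by combining (i) the elementary bound $\tanh(x) \ge 1 - 2e^{-2x}$ for $x \ge 0$ together with the Mills ratio bound $1 - \Phi(\beta) \le \phi(\beta)/\beta$, which yields a bound of the form $1/2 - O(1/\beta)$, with (ii) a direct truncation to $g \in [0, 2/\beta]$ using $\tanh(\beta g/2) \ge \beta g/4$, which yields a bound of order $\beta$; matching the two regimes produces the claimed $1/(4\sqrt{2\pi}\|\wB^*\|_{\SigmaB})$ lower bound.

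For the case $\theta \in [0, \pi/2]$, the idea is to restrict the integral to $g \in [0, M]$ with the specific choice $M := \min\{\tan\theta,\,2/\beta\}$. This truncation is crafted so that both (i) $\cot\theta\,g \le \cot\theta\,M \le 1$, giving $\Phi(-\cot\theta\,g) \ge \Phi(-1) > 1/8$, and (ii) $\beta g/2 \le 1$, giving $\tanh(\beta g/2) \ge \beta g/4$ by concavity of $\tanh$ on $[0,1]$. Combined with $\int_0^M g\,\phi(g)\,\dif g = (1 - e^{-M^2/2})/\sqrt{2\pi}$, this produces $\error(\wB) - \error(\wB^*) \gtrsim \beta(1 - e^{-M^2/2})/\sqrt{2\pi}$. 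Splitting into the two subcases $M = \tan\theta$ and $M = 2/\beta$, applying the identity $\tan^2\theta \ge 1 - \cos^2\theta \ge 1 - \cos\theta$ (valid for $\theta \in [0, \pi/2]$) together with $1 - e^{-x} \ge \min\{x/2,\,1/2\}$, yields $\gtrsim \beta(1-\cos\theta)$ in the first subcase and $\gtrsim 1/\beta \gtrsim (1-\cos\theta)/\beta^2$ in the second; together these give the stated bound with $\max\{1,\beta^3\}$ in the denominator.

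The hardest part will be the case $\theta \in [0, \pi/2]$: the precise truncation $M = \min\{\tan\theta,\,2/\beta\}$ is forced by the simultaneous need to control both $\cot\theta\,g$ and $\beta g/2$, and the geometric identity $\tan^2\theta \ge 1 - \cos\theta$ is what converts the Gaussian exponent $M^2/2$ back into the target factor $1-\cos\theta$. Matching explicit constants across both subcases so that a single expression with $\max\{1,\beta^3\}$ in the denominator emerges requires careful bookkeeping.
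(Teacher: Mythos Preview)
Your reduction to the one-dimensional integral
\[
\error(\wB)-\error(\wB^*) \;=\; 2\int_0^\infty \tanh(\beta g/2)\,\phi(g)\,\Phi(-\cot\theta\, g)\,\dif g
\]
is correct and is a cleaner starting point than the paper's. The paper instead applies the crude linear bound $s(t)-\tfrac12\ge (t/8)\ind{0<t<1}$ at the outset, which reduces the problem to a two-variable Gaussian integral over the wedge $\{g_1\cos\theta<g_2\sin\theta,\ 0<g_1<1/\beta\}$; it then integrates directly in the first case and passes to polar coordinates in the second. Your truncation $M=\min\{\tan\theta,\,2/\beta\}$ in the case $\theta\in[0,\pi/2]$ is a neat substitute for the polar-coordinate computation and delivers the stated bound with the same $\max\{1,\beta^3\}$ denominator.

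There is, however, a genuine problem in the first case that your ``matching'' step cannot repair. For $\theta\in(\pi/2,\pi]$ and small $\beta:=\|\wB^*\|_{\SigmaB}$, the asserted lower bound $\frac{1}{4\sqrt{2\pi}\beta}$ is simply false: since $\tanh(x)\le x$, your own integral gives
\[
\error(\wB)-\error(\wB^*)\;\le\; 2\int_0^\infty \frac{\beta g}{2}\,\phi(g)\,\dif g \;=\; \frac{\beta}{\sqrt{2\pi}},
\]
which is strictly smaller than $\frac{1}{4\sqrt{2\pi}\beta}$ whenever $\beta<1/2$. Neither your route (i), which yields $1/2-O(1/\beta)$ and is negative for small $\beta$, nor route (ii), which yields $\sim\beta$ for small $\beta$, can therefore recover $1/(4\sqrt{2\pi}\beta)$ uniformly, and no combination of them does. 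This is not your error alone: the paper's proof contains the identical gap, via the step $1-\exp(-1/\|\wB^*\|_{\SigmaB}^2)\ge 1/\|\wB^*\|_{\SigmaB}^2$, which is the wrong direction (one always has $1-e^{-x}\le x$). The lemma is only invoked downstream under the standing hypothesis $\|\wB^*\|_{\SigmaB}\eqsim 1$, so the defect is harmless in context; but as stated, the first-case bound can only be established for $\beta$ bounded away from zero (for instance $\beta\ge\sqrt{2}$ suffices via your route (ii) together with $1-e^{-x}\ge x/2$ on $[0,1]$), and your write-up should make that restriction explicit rather than claiming a universal match.
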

\begin{proof}[Proof of \Cref{lemma:zero-one:excess-error-and-angle}]
Let $s(t)=1/(1+e^{-t})$.
Notice that for $t>0$, 
\begin{align*}
    s(t)-1/2 &= \frac{1-\exp(-t)}{2(1+\exp(-t))}
    \ge \frac{1-\exp(-t)}{4}
    \ge \frac{1-1/(t+1)}{4} 
    \ge \frac{t}{8}\ind{0<t<1}.
\end{align*}
Since $\xB$ and $-\xB$ are identically distributed, we have 
\begin{align*}
    &\lefteqn{\Pr(y\xB^\top\wB\le 0) - \Pr(y \xB^\top \wB^* \le 0)  }\\
    &= 2 \Ebb\ind{\xB^\top \wB<0,\ \xB^\top \wB^* > 0}|s(\xB^\top \wB^*)-1/2| \\
    &= 2 \Ebb\ind{\xB^\top \wB<0,\ \xB^\top \wB^* > 0}\big( s(\xB^\top \wB^*)-1/2 \big) \\
    &\ge \frac{1}{4} \Ebb \xB^\top \wB^* \ind{\xB^\top \wB<0,\ 0< \xB^\top \wB^* <1}.
\end{align*}
Without loss of generality, assume that $\|\wB\|_{\SigmaB}=1$.
We can write $\SigmaB^{1/2} \wB = \SigmaB^{1/2}\wB^*/ \|\wB^*\|_{\SigmaB} \cos\theta - \vB_{\perp} \sin\theta$, where $\vB_{\perp}$ is a unit vector such that $\la \vB_{\perp},  \SigmaB^{1/2}\wB^*\ra = 0$.
Since $\xB\sim\Ncal(0,\SigmaB)$, we have
\begin{align*}
 &\lefteqn{\Pr(y\xB^\top\wB\le 0) - \Pr(y \xB^\top \wB^* \le 0)  }\\
     &\ge \frac{\|\wB^*\|_{\SigmaB}}{4} \Ebb_{g_1, g_2 \sim \Ncal(0,1)} g_1 \ind{g_1\cos(\theta)- g_2\sin(\theta)<0,\ 0<g_1< 1/\|\wB^*\|_{\SigmaB} } \\
    &= \frac{\|\wB^*\|_{\SigmaB}}{8\pi} \underbrace{\int_{0< g_1< 1/\|\wB^*\|_{\SigmaB}, g_1\cos\theta < g_2 \sin \theta} g_1 \exp\big(-(g_1^2  + g_2^2)/2 \big)\dif g_1\dif g_2}_{\diamondsuit} .
\end{align*}
To proceed, we discuss two cases.
\begin{itemize}[leftmargin=*]
    \item 
{If $\pi/2 < \theta \le \pi$}, we have $\cos\theta<0$ and $\sin \theta\ge 0$. So we have 
\begin{align*}
    \diamondsuit &\ge  \int_{0< g_1< 1/\|\wB^*\|_{\SigmaB}, \  0< g_2 } g_1 \exp\big(-(g_1^2  + g_2^2)/2 \big)\dif g_1\dif g_2 \\
    &= \frac{\sqrt{2\pi}}{2} \int_{0< g_1< 1/\|\wB^*\|_{\SigmaB} } g_1 \exp\big(-g_1^2/2 \big)\dif g_1 \\
    &= \sqrt{2\pi} \big( 1- \exp(-1/\|\wB^*\|^2_{\SigmaB})\big) \\
    &\ge \frac{\sqrt{2\pi}}{\|\wB^*\|^2_{\SigmaB}}.
\end{align*}
So we have 
\begin{align*}
\Pr(y\xB^\top\wB\le 0) - \Pr(y \xB^\top \wB^* \le 0)  
 \ge \frac{\|\wB^*\|_{\SigmaB}}{8\pi }\cdot \diamondsuit  \ge \frac{1}{4\sqrt{2\pi}\|\wB^*\|_{\SigmaB}}.
\end{align*}

\item {If $0\le  \theta \le \pi/2$}, we have $\cos\theta\ge 0$. By changing of variables, we get
\begin{align*}
\diamondsuit &= \int_{0< g_1< 1/\|\wB^*\|_{\SigmaB}, \  g_1 < g_2\tan\theta } g_1 \exp\big(-(g_1^2  + g_2^2)/2 \big)\dif g_1\dif g_2 \\
    &= \int_{0< r\sin \psi < 1/\|\wB^*\|_{\SigmaB}, 0<\psi< \theta} r\sin \psi \exp\big(-r^2/2 \big) \cdot r \dif r \dif \psi \\
& \ge \int_{0< \psi <\theta} \sin\psi \bigg( \int_{0< r< 1/\|\wB^*\|_{\SigmaB}} r^2 \exp\big(-\frac{1}{2}r^2 \big)\dif r  \bigg) \dif \psi \\
&\ge \int_{0< \psi <\theta} \sin\psi \Bigg(\int_{0< r< \min\{ 1, 1/\|\wB^*\|_{\SigmaB}\} } r^2 \bigg(1-\frac{1}{2}r^2 \bigg)\dif r \Bigg) \dif \psi \\
     &\ge \int_{0< \psi <\theta} \sin\psi \Bigg( \frac{1}{2}\int_{0< r< \min\{ 1, 1/\|\wB^*\|_{\SigmaB}\} } r^2 \Bigg) \dif \psi \\
    & = \frac{1}{6\max\{1, \|\wB^*\|_{\SigmaB}^3\} } \int_{0< \psi <\theta} \sin\psi\dif \psi \\
    &= \frac{1}{6\max\{1, \|\wB^*\|_{\SigmaB}^3\} } \big( 1-\cos\theta\big).
\end{align*}
So we have 
\begin{align*}
\Pr(y\xB^\top\wB\le 0) - \Pr(y \xB^\top \wB^* \le 0)  
 \ge \frac{\|\wB^*\|_{\SigmaB}}{8\pi }\cdot \diamondsuit  
\ge  \frac{\|\wB^*\|_{\SigmaB}}{48\pi \max\{1, \|\wB^*\|_{\SigmaB}^3\} }(1-\cos(\theta)).
\end{align*}
\end{itemize}
This completes the proof.
\end{proof}

\begin{lemma}\label{lemma:zero-one:failure-prob}
Let $\zB\sim\Ncal(0, \IB)$ and $\Pr(y|\zB)=s(y \zB^\top\vB^*)$.
Then for any unit vector $\vB$, we have
\begin{align*}
     \Pr(y \zB^\top \vB < - 0.5) \ge \frac{0.25}{1+\exp(\|\vB^*\|)}.
\end{align*}
\end{lemma}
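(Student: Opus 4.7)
\textbf{Proof plan for \Cref{lemma:zero-one:failure-prob}.} The strategy is a three-step calculation: (i) use a symmetry/relabeling trick to rewrite the probability so that the label randomness is absorbed into a sigmoid factor, (ii) truncate the sigmoid by restricting to a high-probability event, and (iii) control the remaining joint Gaussian event with a one-line union bound.

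First I would marginalize over $y$ and apply the substitution $\zB\mapsto -\zB$ (which preserves the law of $\zB\sim\Ncal(0,\IB)$) to the negative-label term. Since $\Pr(y=\pm1\mid\zB)=s(\pm\zB^\top\vB^*)$ with $s(t)=1/(1+e^{-t})$, this yields the identity
\begin{align*}
\Pr(y\zB^\top\vB<-0.5)
&=\Ebb\bigl[s(\zB^\top\vB^*)\ind{\zB^\top\vB<-0.5}\bigr]+\Ebb\bigl[s(-\zB^\top\vB^*)\ind{\zB^\top\vB>0.5}\bigr]\\
&=2\,\Ebb\bigl[s(\zB^\top\vB^*)\ind{\zB^\top\vB<-0.5}\bigr].
\end{align*}
This is the step that really uses the logistic conditional model; everything afterwards is elementary.

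Next I would lower bound $s(\zB^\top\vB^*)$ by restricting to the event $A:=\{\zB^\top\vB^*\ge-\|\vB^*\|\}$, on which $s(\zB^\top\vB^*)\ge s(-\|\vB^*\|)=1/(1+\exp(\|\vB^*\|))$. Together with $B:=\{\zB^\top\vB<-0.5\}$, this gives
\begin{align*}
\Pr(y\zB^\top\vB<-0.5)\ge\frac{2}{1+\exp(\|\vB^*\|)}\cdot\Pr(A\cap B).
\end{align*}
Now $\zB^\top\vB^*/\|\vB^*\|\sim\Ncal(0,1)$ and $\zB^\top\vB\sim\Ncal(0,1)$ (since $\|\vB\|=1$), so $\Pr(A)=\Pr(g\ge-1)\ge0.84$ and $\Pr(B)=\Pr(g<-0.5)\ge0.30$; the degenerate case $\vB^*=0$ only makes $\Pr(A)=1$. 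The Bonferroni-type bound $\Pr(A\cap B)\ge\Pr(A)+\Pr(B)-1$ then yields $\Pr(A\cap B)\ge0.14\ge0.125$, and plugging this back gives the claimed $0.25/(1+\exp(\|\vB^*\|))$.

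The only subtlety is whether the union-bound-from-below is tight enough, and indeed the numerics leave a comfortable margin ($0.2996$ versus the target $0.25$), so no more refined Gaussian integration is needed. The main conceptual step is the initial symmetrization; once that identity is in hand, the remaining inequality is a short chain of crude but valid estimates.
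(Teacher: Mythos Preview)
Your proof is correct and follows essentially the same route as the paper. Both arguments marginalize over $y$ using the logistic conditional law, lower-bound the resulting sigmoid factor by restricting to a Gaussian event of the form $\{|\zB^\top\vB^*|\le\|\vB^*\|\}$ (one-sided in your case), and finish with the same Bonferroni inequality $\Pr(A\cap B)\ge\Pr(A)+\Pr(B)-1$. The only cosmetic difference is that you collapse the two label-terms into one via the substitution $\zB\mapsto-\zB$, whereas the paper keeps both and combines them into the two-sided event $\{|\zB^\top\vB|>0.5\}$; the numerical slack is nearly identical in the two versions.
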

\begin{proof}[Proof of \Cref{lemma:zero-one:failure-prob}]
This is by direct calculation.
\begin{align*}
    \Ebb \ind{y \zB^\top \vB < - 0.5 } 
    &= \Ebb \ind{y=1,\ \zB^\top \vB < -0.5} + \ind{y=-1,\ \zB^\top \vB > 0.5} \\
    &= \Ebb s(\zB^\top \vB^*) \ind{ \zB^\top \vB < -0.5} + s(-\zB^\top \vB^*) \ind{ \zB^\top \vB > 0.5} \\
    &\ge\Ebb  \frac{1}{1+\exp(|\zB^\top \vB^*|)} \ind{ |\zB^\top \vB| > 0.5} \\
    &\ge\Ebb \frac{1}{1+\exp(\|\vB^*\|)} \ind{ |\zB^\top \vB| > 0.5,\ | \zB^\top \vB^* | < \|\vB^*\| } \\
    &\ge \frac{1}{1+\exp(\|\vB^*\|)} \big( 1- \Pr(|\zB^\top \vB| \le 0.5) - \Pr(| \zB^\top \vB^* | \ge \|\vB^*\|) \big) \\
    &= \frac{1}{1+\exp(\|\vB^*\|)}\big(1- ( \Phi(0.5)-\Phi(-0.5))- 2 (1-\Phi(1))  \big) \\
    &\ge \frac{0.25}{1+\exp(\|\vB^*\|)},
\end{align*}
where $\Phi$ is the cumulative distribution function of normal distribution. We complete the proof.
\end{proof}

\begin{lemma}\label{lemma:zero-one:const-frac-failure-prob}
Let $(y_i,\zB_i)_{i=1}^n$ be independent copies of $(y,\zB)$. Assume that 
\begin{align*}
    n \ge C \big( 1+\exp(\|\vB^*\|)\big)  k  \ln(k /\delta)
\end{align*} 
for a sufficiently large constant $C>1$.
Let $\Scal:= \{\vB: \|\vB_{0:k}\|=1, \vB_{k:\infty}=0\}$.
Then \begin{align*}
    \Pr\bigg( \text{for every $\vB\in \Scal$},\ \#\{i\in[n]: y_i \zB_i^\top \vB \le -0.4 \}  \ge \frac{n/8}{1+\exp(\|\vB^*\|)} \bigg) \ge 1-\delta.
\end{align*}
\end{lemma}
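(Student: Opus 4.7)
The plan is to combine the pointwise lower bound of \Cref{lemma:zero-one:failure-prob} with multiplicative Chernoff concentration and an $\epsilon$-net argument over the $k$-dimensional sphere $\Scal$. Write $p := 1/(1+\exp(\|\vB^*\|))$ and $N_{\vB}(t) := \#\{i\in[n]: y_i\zB_i^\top \vB \le t\}$, so the target is to show $N_{\vB}(-0.4) \ge np/8$ uniformly in $\vB \in \Scal$ with probability at least $1-\delta$. For a fixed $\vB\in\Scal$, \Cref{lemma:zero-one:failure-prob} yields $\Pr(y\zB^\top\vB < -0.5) \ge p/4$, hence $\Ebb N_{\vB}(-0.5) \ge np/4$. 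A standard multiplicative Chernoff bound (e.g.\ $\Pr(X \le \mu/2) \le \exp(-\mu/8)$ for sums of $[0,1]$-valued independent variables) then yields $\Pr(N_{\vB}(-0.5) < np/8) \le \exp(-cnp)$ for an absolute constant $c>0$.

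To lift this pointwise bound to a uniform one over the infinite set $\Scal$, I would use a covering argument. Every $\vB\in\Scal$ is supported on the first $k$ coordinates, so only the projection $(\zB_i)_{0:k} \sim \Ncal(0,\IB_k)$ enters the inner product $\zB_i^\top\vB$; a $\chi^2$ tail bound together with a union bound over $i\in[n]$ gives $B := \max_{i\in[n]} \|(\zB_i)_{0:k}\| \lesssim \sqrt{k+\ln(n/\delta)}$ with probability at least $1-\delta/3$, and I condition on this event. Let $\mathcal{Q}\subset\Scal$ be an $\ell_2$ $\epsilon$-net of cardinality at most $(3/\epsilon)^k$ with $\epsilon := 0.1/B$. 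Union-bounding the Chernoff estimate over $\mathcal{Q}$, the event ``$N_{\vB'}(-0.5) \ge np/8$ for every $\vB'\in\mathcal{Q}$'' fails with probability at most $(30B)^k \exp(-cnp)$, which is at most $\delta/3$ under the assumption $n \ge C(1+\exp(\|\vB^*\|))k\ln(k/\delta)$ for $C$ taken sufficiently large.

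Finally, to pass from $\mathcal{Q}$ to all of $\Scal$, for each $\vB\in\Scal$ pick a net point $\vB'\in\mathcal{Q}$ with $\|\vB-\vB'\|\le\epsilon$. Since $\vB-\vB'$ is $k$-sparse, $|y_i\zB_i^\top(\vB-\vB')| \le \|(\zB_i)_{0:k}\|\cdot\epsilon \le 0.1$ for every $i$, which gives the pointwise inclusion $\{y_i\zB_i^\top\vB' < -0.5\} \subseteq \{y_i\zB_i^\top\vB \le -0.4\}$ and hence $N_{\vB}(-0.4) \ge N_{\vB'}(-0.5) \ge np/8$ uniformly over $\Scal$. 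The main obstacle is balancing the net radius against the sample complexity: a finer net makes the log-cardinality $k\ln(30B)$ swamp the Chernoff exponent $cnp$, while a coarser net destroys the $0.1$ slack between the thresholds $-0.5$ and $-0.4$; the choice $\epsilon=0.1/B$ with $B\lesssim\sqrt{k+\ln(n/\delta)}$ is exactly tuned so that $k\ln(30B)$ is absorbed into $cnp$ under the hypothesis $n\ge C(1+\exp(\|\vB^*\|))k\ln(k/\delta)$.
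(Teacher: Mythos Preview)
Your proposal is correct and follows essentially the same approach as the paper: a pointwise multiplicative Chernoff bound at threshold $-0.5$ using \Cref{lemma:zero-one:failure-prob}, a union bound over an $\epsilon$-net of the $k$-sphere, a high-probability bound on $\max_i\|(\zB_i)_{0:k}\|$, and the choice $\epsilon = 0.1/B$ to transfer the net guarantee to all of $\Scal$ via the $0.1$ slack between $-0.5$ and $-0.4$. The only cosmetic differences are that you control $B$ by $\sqrt{k+\ln(n/\delta)}$ (a direct $\chi^2$ tail) whereas the paper writes $\sqrt{k\ln(n/\delta)}$, and you split the failure budget into thirds rather than halves; neither affects the argument.
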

\begin{proof}[Proof of \Cref{lemma:zero-one:const-frac-failure-prob}]
For each unit vector $\vB$, define a binary random variable 
\[
\xi(\vB) := \ind{y \zB^\top \vB \le - 0.5} \in \{0, 1\}.
\]
Let $(\xi_i(\vB))_{i=1}^n$ be independent copies of $\xi(\vB)$. By the multiplicative Chernoff bound, we have 
\begin{align*}
    \Pr \bigg(\sum_{i=1}^n \xi_i(\vB) \le  0.5 n  \Ebb \xi(\vB) \bigg) \le \exp\big(- n  \Ebb \xi(\vB) / 8 \big).
\end{align*}
By \Cref{lemma:zero-one:failure-prob}, we have 
$\Ebb \xi(\vB)  \ge 0.25 / \big(1+\exp{\|\vB^*\|}\big)$.
Thus we have 
\begin{align*}
    \Pr \bigg(\sum_{i=1}^n \xi_i(\vB) \le  \frac{n/8}{1+\exp{\|\vB^*\|}}\bigg) 
    \le \exp\bigg(-  \frac{n/32}{1+\exp(\|\vB^*\|)} \bigg).
\end{align*}
Let $\Ccal$ be an $\epsilon$-$\ell_2$-covering of the $k$-dimensional unit sphere $\Scal$. Then $|\Ccal|=\Ocal((1/\epsilon)^k)$. 
By a union bound, we get 
\begin{align*}
  \Pr\bigg( \text{for every $\vB\in\Ccal$},\  \sum_{i=1}^n \xi_i(\vB) \ge \frac{n/8}{1+\exp(\|\vB^*\|)} \bigg) \ge 1- \exp\bigg(- \frac{n/32}{1+\exp(\|\vB^*\|)} + C k \ln(1/\epsilon)\bigg).
\end{align*}
Here $C>1$ is a sufficiently large constant and may vary line by line.
Moreover, with probability at least $1- 0.5 \delta$, we have 
\[
\Pr\bigg(  \max_{i\in[n]}\|\zB^{(i)}_{0:k}\| \le  C \sqrt{k\ln(n/\delta)} \bigg) \ge 1-0.5\delta.\]
Under the joint of the two events, for every $\vB\in\Scal$, there is a $\vB'\in\Ccal$ such that
\begin{align*}
\text{for every $i\in[n]$},\  y_i\zB_i^\top \vB \le y_i\zB_i^\top \vB' + \max_{i\in[n]}\|\zB^{(i)}_{0:k}\| \epsilon \le y_i\zB_i^\top \vB' + C \sqrt{k\ln(n/\delta)} \epsilon \le y_i\zB_i^\top \vB' - 0.1,
\end{align*}
where we set $\epsilon = 0.1/( C \sqrt{k\ln(n/\delta)})$.
Therefore we must have 
\begin{align*}
&\lefteqn{ \Pbb\bigg(    \text{for every $\vB\in\Scal$},\ \sum_{i=1}^n \ind{y_i\zB_i^\top \vB \le -0.4} \ge  \frac{n/8}{1+\exp(\|\vB^*\|)}\bigg)  } \\
& \ge 1- \exp\bigg(- \frac{n/32}{1+\exp(\|\vB^*\|)} + C k \ln(1/\epsilon)\bigg) - 0.5 \delta
\ge 1- \delta,
\end{align*}
where in the last inequality we use the assumption that 
\begin{align*}
    n \ge C \big( 1+ \exp(\|\vB^*\|) \big)  k  \ln(k /\delta)
\end{align*}
for a sufficiently large constant $C>1$. We have completed the proof.
\end{proof}

\begin{lemma}\label{lemma:zero-one:angle-lower-bound}
Assume that $\vB^*$ is $k$-sparse and 
\begin{align*}
    n \ge C \big( 1+\exp(\|\vB^*\|)\big)  k  \ln(k /\delta),\quad 
    d\ge C n\ln(n)\ln(1/\delta),
\end{align*} 
where $C>1$ is a large constant. 
Then with probability at least $1-\delta$, we have 
\begin{align*}
    \text{for every unit $\vB$ such that $\max_{i\in[n]} y_i\zB_i^\top \vB< 0$},\quad 1 -\frac{\la \vB,\vB^*\ra }{\|\vB^*\|} \ge \sqrt{ \frac{1}{C\big( 1+\exp(\|\vB^*\|) \big) }} \cdot \sqrt{\frac{n}{d}}.
\end{align*}
\end{lemma}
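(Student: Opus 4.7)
The plan is to translate the interpolation condition (which, under the stated sign convention, corresponds after a possible sign flip to $\vB$ classifying all training points correctly with positive margin) into angular distance from $\vB^*$ via a "hard points" argument plus a max-margin upper bound for iid Gaussians in the complementary subspace. Decompose $\vB = \vB_{0:k} + \vB_{k:\infty}$, where $\vB_{0:k}$ is the projection onto the $k$-dimensional support of $\vB^*$ and $\vB_{k:\infty}$ is its orthogonal part. Since $\vB^*$ is $k$-sparse, $\la\vB,\vB^*\ra = \la\vB_{0:k},\vB^*\ra$, so by Cauchy--Schwarz $\la\vB,\vB^*\ra/\|\vB^*\|\le \|\vB_{0:k}\|$.

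First I would reduce to the interesting regime. If $\la\vB,\vB^*\ra \le 0$, then the LHS is $\ge 1$ and exceeds the claimed target (since $n\ll d$), so assume $\la\vB,\vB^*\ra > 0$. If $\|\vB_{0:k}\| \le 1/\sqrt{2}$, then $1-\la\vB,\vB^*\ra/\|\vB^*\|\ge 1-1/\sqrt{2}$ again dominates the RHS, so assume $\|\vB_{0:k}\|>1/\sqrt{2}$. It then suffices to upper bound $\|\vB_{0:k}\|$ away from $1$ (equivalently, to lower bound $\|\vB_{k:\infty}\|$).

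Next I would apply \Cref{lemma:zero-one:const-frac-failure-prob} to the unit vector $\tilde\vB = \vB_{0:k}/\|\vB_{0:k}\|\in \Scal$: with probability at least $1-\delta/2$, there is a set $S\subseteq [n]$ with $|S|\ge n/(8(1+e^{\|\vB^*\|}))$ such that $y_i (\zB_i^{(0:k)})^\top \vB_{0:k}\le -0.4\,\|\vB_{0:k}\|$ for all $i\in S$. Combined with the interpolation constraint $y_i \zB_i^\top \vB > 0$, this forces
\[
y_i (\zB_i^{(k:\infty)})^\top \vB_{k:\infty} > 0.4\,\|\vB_{0:k}\| > 0.4/\sqrt{2}, \quad \text{for every } i\in S.
\]
Crucially, since $y_i$ depends only on $\zB_i^{(0:k)}$ (via the support of $\vB^*$) and the last $d-k$ coordinates of $\zB_i$ are independent of this, the vectors $\{y_i\zB_i^{(k:\infty)}:i\in S\}$ are iid $\Ncal(0,\IB_{d-k})$ and independent of the event defining $S$.

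The main technical step is then a uniform, high-probability upper bound on the max margin of these iid Gaussians: via an $\varepsilon$-net on the unit sphere of $\Rbb^{d-k}$ together with standard Gaussian tail bounds and a union bound absorbing the $\binom{n}{|S|}$ possible subsets (which is cheap because $|S|\ge n/8(1+e^{\|\vB^*\|})$, close to $n$), one can show that with probability $\ge 1-\delta/2$,
\[
\sup_{\|u\|=1}\min_{i\in S} y_i(\zB_i^{(k:\infty)})^\top u \;\le\; C\sqrt{(d-k)/|S|}
\]
under the regime $d\gtrsim n\ln n\ln(1/\delta)$ of the theorem. Applying this with $u=\vB_{k:\infty}/\|\vB_{k:\infty}\|$ gives
\[
\|\vB_{k:\infty}\|\;\gtrsim\;\|\vB_{0:k}\|\,\sqrt{|S|/d}\;\gtrsim\;\sqrt{\tfrac{n}{d\,(1+e^{\|\vB^*\|})}},
\]
which, combined with $\|\vB_{0:k}\|^2+\|\vB_{k:\infty}\|^2=1$ and the chain $1-\la\vB,\vB^*\ra/\|\vB^*\|\ge 1-\|\vB_{0:k}\|\ge 1-\sqrt{1-\|\vB_{k:\infty}\|^2}$, yields the claimed bound after a union bound over the two failure events.

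The main obstacle is the uniform max-margin upper bound in the fourth step: it must hold simultaneously over all $\vB$ (hence over all induced subsets $S$), which requires careful handling of the dependence between $S$ and the Gaussian noise. I expect the cleanest way is to absorb the set-entropy by exploiting that $|S|$ is a constant fraction of $n$, so the Gaussian concentration slack over a covering of $\Rbb^{d-k}$ is enough to beat $\binom{n}{|S|}\le 2^n$, exactly when $d$ satisfies the assumed $d\gtrsim n\ln n\ln(1/\delta)$ lower bound. A secondary care point is the translation step: the naive conversion $1-\|\vB_{0:k}\|\ge \|\vB_{k:\infty}\|^2/2$ gives a weaker exponent, and matching the stated $\sqrt{n/d}$ rate may require bounding $1-\la\vB_{0:k},\vB^*/\|\vB^*\|\ra$ directly from the per-point feasibility structure (via a weighted-$\lambda$ dual argument) rather than through $1-\|\vB_{0:k}\|$.
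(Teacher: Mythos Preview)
Your outline matches the paper's argument closely: the same decomposition $\vB=\vB_{0:k}+\vB_{k:\infty}$, the same reduction via Cauchy--Schwarz to $1-\|\vB_{0:k}\|$, the same invocation of \Cref{lemma:zero-one:const-frac-failure-prob} to produce a hard set $\Ical$ (your $S$) of size $\gtrsim n/(1+e^{\|\vB^*\|})$, and the same use of interpolation on $\Ical$ to force a lower bound on $\|\vB_{k:\infty}\|$ via a max-margin upper bound in the complementary coordinates.

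The one place you are working harder than necessary is the max-margin step. The paper does \emph{not} use an $\varepsilon$-net on $\Rbb^{d-k}$ or a union bound over the $\binom{n}{|S|}$ possible subsets. Instead it controls a single event: the operator-norm bound $\ZB_{k:\infty}\ZB_{k:\infty}^{\top}\preceq 2d\,\IB_n$ on the \emph{full} $n\times(d-k)$ Gaussian matrix (here the rows are all $\zB_i^{(k:\infty)}$, $i\in[n]$). Under this event, for \emph{any} subset $\Ical$ and any unit $\uB$ one has $\min_{i\in\Ical} y_i(\zB_i^{(k:\infty)})^\top\uB \le |\Ical|^{-1}\yB_{\Ical}^\top \ZB_{\Ical}\uB \le \sqrt{2d/|\Ical|}$, so the margin bound holds uniformly in $\vB$ with no further union bound. (The paper routes this through the least-norm interpolator and cites \citet{hsu2021proliferation} for ``all support vectors'', but the averaging inequality above already suffices.) Your concern about the dependence between $S$ and the tail coordinates therefore dissolves: the high-probability event is a property of the full matrix and does not see $S$.

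Your secondary worry about the exponent is legitimate and is not something you are missing relative to the paper. The paper also passes through $1-\|\vB_{0:k}\|\ge 1-1/\sqrt{1+c|\Ical|/d}$, and the displayed step to $\gtrsim\sqrt{|\Ical|/d}$ is written exactly as in the lemma statement; the naive expansion indeed gives $\gtrsim |\Ical|/d$. In the downstream \Cref{thm:zero-one:lower-bound} one additionally has $d\le C_2\, n\ln(n)\ln(1/\delta)$, so $n/d$ and $\sqrt{n/d}$ differ only by $\sqrt{\ln n\,\ln(1/\delta)}$ and both yield the stated $1/\sqrt{\ln n\,\ln(1/\delta)}$ conclusion. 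So you should not spend effort on a dual argument to recover $\sqrt{n/d}$; the route through $1-\|\vB_{0:k}\|$ is exactly what the paper does.
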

\begin{proof}[Proof of \Cref{lemma:zero-one:angle-lower-bound}]

Let $\vB$ be an arbitrarily unit vector such that $\max_{i\in[n]} y_i\zB_i^\top \vB < 0$.
Since $\vB^*$ is $k$-sparse, without loss of generality, assume $\vB^*_{k:\infty} = 0$, then
\begin{align*}
    1- \frac{\la \vB, \vB^*\ra }{\|\vB^*\|}
    = 1- \frac{\la \vB_{0:k}, \vB^*_{0:k}\ra }{\|\vB^*_{0:k}\|} 
    \ge  1- \|\vB_{0:k}\|.
\end{align*}
It suffices to establish an upper bound on $\vB^*_{0:k}$.

Define a set of indexes of the data that is significantly incorrectly classified by $\vB_{0:k}$,
\begin{align*}
    \Ical:= \{i\in[n]: y_i\zB_i^\top \vB_{0:k} \le - 0.4 \|\vB_{0:k}\| \}
\end{align*}
Then for each $i\in\Ical$, we must have 
\begin{align*}
    0< y_i\zB_i^\top \vB = y_i \zB^{(i)}_{0:k} \vB_{0:k} + y_i \zB^{(i)}_{k:\infty} \vB_{k:\infty} \le  - 0.4 \|\vB_{0:k}\| + y_i \zB^{(i)}_{k:\infty} \vB_{k:\infty}.
\end{align*}

By \Cref{lemma:zero-one:const-frac-failure-prob}, we have 
\begin{align*}
\Pr \bigg( |\Ical| \ge     \frac{n/8}{1+\exp(\|\vB^*\|)} \bigg) \ge 1-\delta.
\end{align*}

According to \citep{hsu2021proliferation}, we have 
\begin{align*}
    \Pr\bigg(\text{all $\big(y_i, \zB^{(i)}_{k:\infty}\big)_{i \in\Ical}$ is support vector} \bigg) \ge 1-\delta
\end{align*}
assuming that $d\ge C |\Ical|\ln(\Ical) \ln(1/\delta)$.
Under this event, the max-margin direction is given by $ \hat \vB := \ZB_{k:\infty} (\ZB_{k:\infty} \ZB_{k:\infty}^\top)^{-1} \yB$, where $\ZB_{k:\infty} = (\zB^{(i)}_{k:\infty})_{i\in\Ical}$. It is clear that for every $i\in\Ical$, we have $ \zB^{(i)}_{k:\infty} \hat \vB = y_i$.
So we have 
\begin{align*}
    \max_{\|\uB\|=1 }\min_{i\in\Ical} y_i \zB^{(i)}_{k:\infty} \uB 
= \frac{1}{\|\hat \vB\|} 
    = \frac{1}{\sqrt{ \yB^\top (\ZB_{k:\infty} \ZB_{k:\infty}^\top)^{-1} \yB }} 
    \le \sqrt{\frac{\|\ZB_{k:\infty} \ZB_{k:\infty}^\top\|_2}{\|\yB\|^2}} 
    =  \sqrt{\frac{\|\ZB_{k:\infty} \ZB_{k:\infty}^\top\|_2 }{n}}.
\end{align*}
By Hoeffding's inequality, we have 
\begin{align*}
    \Pr\big(\ZB_{k:\infty} \ZB_{k:\infty}^\top \preceq 2 d \IB_n  \big) \ge 1- \exp(-C (d-n)) \ge 1-\delta.
\end{align*}
Under this event, we have $ \max_{\|\uB\|=1 }\min_{i\in\Ical} y_i \zB^{(i)}_{k:\infty} \uB \le \sqrt{2d/|\Ical|}$.
Then we get
\begin{align*}
    0.4 \|\vB_{0:k}\| \le \min_{i\in\Ical} y_i\zB^{(i)}_{k:\infty} \vB_{k:\infty} \le \|\vB_{k:\infty}\|\sqrt{2d/|\Ical|}.
\end{align*}
Since $\|\vB\|=1$, we must have 
\begin{align*}
    \|\vB_{0:k}\| \le \frac{1}{\sqrt{1+0.4 |\Ical|/(2d)}}
\end{align*}

Under the joint of the three events, which happens with probability at least $1-3\delta$, we have
\begin{align*}
    1- \frac{\la \vB, \vB^*\ra }{\|\vB^*\|}
    &= 1- \frac{\la \vB_{0:k}, \vB^*_{0:k}\ra }{\|\vB^*_{0:k}\|} \\ 
    &\ge  1- \|\vB^*_{0:k}\| \\
    &\ge  1- \frac{1}{\sqrt{1+0.4 |\Ical|/(2d)}} \\
    &\ge C \sqrt{\frac{|\Ical|}{d}} \\
    &\ge C \sqrt{ \frac{1}{1+\exp(\|\vB^*\|)} \cdot \frac{n}{d}}.
\end{align*}
Here $C>1$ is a large constant and may vary line by line. We complete the proof by rescaling $\delta$.
\end{proof}

\begin{proof}[Proof of \Cref{thm:zero-one:lower-bound}]
Define 
\[\vB = \SigmaB^{1/2}\wB,\quad \vB^* = \SigmaB^{1/2}\wB^*,\quad \zB = \SigmaB^{-1/2}\xB,\quad 
d:=\rank(\SigmaB).\]
Then $\xB^\top \wB =  \zB^\top \vB$ and $\xB^\top \wB^* =  \zB^\top \vB^*$.
Without loss of generality, let $\|\vB\|=1$.
Let $\theta$ be the angle between $\vB$ and $\vB^*$.
We can apply \Cref{lemma:zero-one:angle-lower-bound} to get 
\begin{align*}
    1- \cos \theta =  1-\frac{\la \vB, \vB^*\ra}{\|\vB^*\|} 
    \gtrsim \sqrt{\frac{n}{d}} \gtrsim \frac{1}{\sqrt{\ln(n)\ln(1/\delta)}}.
\end{align*}
We complete the proof by calling \Cref{lemma:zero-one:excess-error-and-angle}.
\end{proof}

\section{Early Stopping and \texorpdfstring{$\ell_2$}{l2}-Regularization}
\subsection{Proof of \texorpdfstring{\Cref{thm:path:global-angle}}{Theorem 5.1}}\label{sec:proof:path:global-angle}

\begin{proof}[Proof of \Cref{thm:path:global-angle}]
We apply the $\ell_2$-regularized ERM $\uB_{\lambda}$ with $\lambda= 1/(\eta t)$ as a comparator in \Cref{lemma:implicit-regularization}. 
Recall that by the first-order stationary point condition, we have 
\begin{align*}
    -\grad \risk(\uB_\lambda)  = \lambda \uB_\lambda = \frac{1}{\eta t} \uB_\lambda.
\end{align*}
Then we have 
\begin{align*}
    \frac{1}{2}\|\wB_t - \uB_t\|^2 - \frac{1}{2}\|\uB_t\|^2 
    &\le \eta t \big( \eRisk(\uB_{\lambda}) -  \eRisk(\wB_t)\big) \\
    &\le \eta t \la \grad \risk(\uB_t), \uB_t -\wB_t \ra \\
    &= -\la  \uB_t, \uB_t-\wB_t \ra,
\end{align*}
where the two inequalities are by \Cref{lemma:implicit-regularization} and convexity, respectively.
The above is equivalent to 
\begin{equation}\label{eq:opt-vs-regu-path}
   \frac{1}{2} \|\wB_t-\uB_\lambda \|^2 \le \la \uB_\lambda , \wB_t\ra - \frac{1}{2}\|\uB_\lambda \|^2 \quad \Leftrightarrow\quad   \|\wB_t-\uB_\lambda\|^2 \le \frac{1}{2} \|\wB_t\|^2.
\end{equation}

To get the angle bound, we reformulate \Cref{eq:opt-vs-regu-path} as 
\begin{align*}
    2 \la\uB_t,\wB_t \ra \ge \frac{1}{2}\|\wB_t\|^2 + \|\uB_t\|^2. 
\end{align*}
Then we get
\begin{align*}
    \cos(\wB_t, \uB_t) := \frac{\la \uB_t, \wB_t\ra }{\|\uB_t\| \cdot \|\wB_t\|} 
    \ge \frac{\frac{1}{2}\bigg(\frac{1}{2}\|\wB_t\|^2 + \|\uB_t\|^2\bigg)}{\|\uB_t\| \cdot \|\wB_t\|} 
    \ge \frac{1}{\sqrt{2}},
\end{align*}
where we use $(a+b)/2\ge \sqrt{ab}$ for $a,b\ge 0$ in the last inequality.
To get the norm bounds, we use triangle inequalities with \Cref{eq:opt-vs-regu-path} to get 
\begin{align*}
\frac{1}{\sqrt{2}} \|\wB_t\| \ge     \|\wB_t-\uB_\lambda\| \ge \begin{dcases}
    \|\wB_t\| - \|\uB_\lambda\|, \\
    \|\uB_\lambda\| - \|\wB_t\|,
\end{dcases} 
\end{align*}
which implies 
\[
\frac{\sqrt{2}}{\sqrt{2}+1} \|\uB_\lambda\| \le \|\wB_t\|\le \frac{\sqrt{2}}{\sqrt{2}-1}\|\uB_\lambda\|.
\]
We complete the proof.
\end{proof}

\subsection{Proof of \texorpdfstring{\Cref{thm:path:point-wise-distance}}{Theorem 5.2}}\label{sec:proof:path:point-wise-distance}
Recall that the dataset $(\xB_i, y_i)_{i=1}^n$ is linearly separable.
Define the margin and the maximum $\ell_2$-margin direction as
\begin{align*}
    \gamma := \max_{\|\wB\|=1}\min_{i\in[n]} y_i\xB_i^\top \wB > 0,\quad \tilde\wB :=  \arg\max_{\|\wB\|=1}\min_{i\in[n]} y_i\xB_i^\top \wB.
\end{align*}
Both GD and $\ell_2$-regularization paths are rational invariant.
Without loss of generality, we can assume $\tilde\wB = \eB_1$, where $(\eB_i)_{i=1}^d$ denotes the canonical basis \citep{wu2023implicit}.
That is to say, we will use the first coordinate of a vector to refer to its projection along the $\tilde\wB$ direction. 
Under this convention, we use $(w_t, \bar \wB_t)_{t\ge 0}$ and $(u_\nu, \bar \uB_\nu)_{\nu \ge 0}$ to denote the optimization and regularization paths, respectively.
We denote the data by 
\begin{align*}
    \xB_i = (
        x_i,\,
        \bar \xB_i),\quad y_i x_i \ge \gamma,\quad \bar \xB_i \in \Rbb^{d-1}. 
\end{align*}
Let the set of support vectors be
\[
\Scal := \{i\in[n]: y_i\xB_i^\top\tilde\wB=\gamma \}.
\]
By \cite{wu2023implicit}, the dataset $(\bar\xB_i, y_i)_{i\in\Scal}$ is strictly non-separable under \Cref{assump:full-rank-support}. In particular, by Definition 2 in \citep{wu2023implicit}, there exists $b>0$ such that
\begin{equation}\label{eq:path:non-separable}
    \text{for every $\bar\wB\in\Rbb^{d-1}$, there exists $i\in\Scal$ such that}\ y_i\bar\xB_i^\top\bar\wB \le -b \|\bar\wB\|.
\end{equation}
Define 
\begin{equation*}
    G(\bar \wB):= \sum_{i\in \Scal}\exp(- y_i \bar \xB_i^\top \bar \wB).
\end{equation*}
Then $G(\cdot)$ is convex and $G(\bar \wB)\ge 1$.
By \Cref{eq:path:non-separable}, the level set of $G(\bar \wB)$ is compact and $\bar \wB^*:= \arg \min G(\bar \wB)$ is finite.

The following lemma for the limiting GD path is from \citep{wu2023implicit}.
\begin{lemma}
Let $\wB_t := (w_t, \bar \wB_t)$ for $t\ge 0$ be the GD path with any fixed stepsize $\eta>0$.
Then under \Cref{assump:full-rank-support}, we have $w_t$ is increasing and 
\begin{align*}
    \lim_{t\to\infty} w_t = \infty,\quad 
    \lim_{t\to\infty} \bar \wB_t = \bar \wB^*.
\end{align*}
\end{lemma}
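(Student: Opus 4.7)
The plan is to prove the three sub-claims—$w_t$ strictly increasing, $w_t\to\infty$, and $\bar\wB_t\to\bar\wB^*$—in order of increasing difficulty. Monotonicity follows immediately from coordinate-wise inspection of the GD update: since $\tilde\wB=\eB_1$,
\begin{equation*}
w_{t+1}-w_t = -\eta\la \eB_1,\,\grad\eRisk(\wB_t)\ra = \frac{\eta}{n}\sum_{i=1}^n \frac{y_i x_i}{1+\exp(y_i\xB_i^\top \wB_t)},
\end{equation*}
and every term is strictly positive because $y_i x_i\ge\gamma>0$ for all $i$ (from the definition of the maximum $\ell_2$-margin direction). For $w_t\to\infty$, I would invoke \Cref{prop:implicit-bias-asymp}: since $\|\wB_t\|\to\infty$ and $\wB_t/\|\wB_t\|\to\tilde\wB=\eB_1$, we have $w_t = \la\wB_t,\eB_1\ra = \|\wB_t\|\cdot\la\wB_t/\|\wB_t\|,\eB_1\ra\to\infty$.

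The substantive work is $\bar\wB_t\to\bar\wB^*$, and it rests on a separation of timescales. For each $i\notin\Scal$ we have $y_i x_i\ge\gamma'$ for some fixed $\gamma'>\gamma$, so the contribution of non-support vectors to the gradient is of order $e^{-\gamma' w_t}$, exponentially smaller than the order-$e^{-\gamma w_t}$ contribution of support vectors. Using $\tfrac{1}{1+e^z}=e^{-z}(1+o(1))$ as $z\to\infty$, the projection of the negative gradient onto the orthogonal complement of $\tilde\wB$ satisfies, to leading order as $w_t\to\infty$,
\begin{equation*}
-\grad_{\bar\wB}\eRisk(\wB_t) = -\frac{e^{-\gamma w_t}}{n}\,\grad G(\bar\wB_t) + O\bigl(e^{-\gamma' w_t}\bigr),
\end{equation*}
where $G(\bar\wB)=\sum_{i\in\Scal}\exp(-y_i\bar\xB_i^\top\bar\wB)$. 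Thus to leading order the $\bar\wB$-dynamics is a time-rescaled gradient step on the convex function $G$. By \Cref{eq:path:non-separable}, $G$ has compact sublevel sets, and \Cref{assump:full-rank-support} guarantees $\{\bar\xB_i:i\in\Scal\}$ spans $\Rbb^{d-1}$, so $G$ is strictly convex and $\bar\wB^*$ is unique.

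The main obstacle is upgrading this heuristic to a genuine convergence statement, because the effective step-size $\eta e^{-\gamma w_t}/n$ shrinks to zero while the error term must be controlled against a moving target. I would handle it in two coordinated steps. \emph{First}, a Lyapunov argument with $G$ as potential: convexity makes the leading-order step a descent direction, and the error-to-main-term ratio $e^{-(\gamma'-\gamma)w_t}\to 0$ keeps the cumulative error negligible, so $G(\bar\wB_t)$ stays bounded and $\bar\wB_t$ remains in a compact sublevel set of $G$. \emph{Second}, a divergence-of-cumulative-time estimate: the rate $w_t=\Theta(\log t)$ (implicit in the proof of \Cref{prop:implicit-bias-asymp}) yields $\sum_t e^{-\gamma w_t}=\infty$, so enough effective gradient-flow time elapses. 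Combining the two via a standard Lyapunov-plus-compactness argument, every subsequential limit of $\bar\wB_t$ is a stationary point of $G$, which by strict convexity must equal $\bar\wB^*$; strict convexity then upgrades subsequential convergence to full convergence.
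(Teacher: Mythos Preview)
The paper does not prove this lemma; it attributes it to \citep{wu2023implicit}. Your arguments for $w_t$ strictly increasing and $w_t\to\infty$ are correct and self-contained.

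For $\bar\wB_t\to\bar\wB^*$, your separation-of-timescales picture and the choice of $G$ as a Lyapunov potential are the right ideas and match the structure of the cited reference. However, the sketch has a circularity that you should address explicitly. The asymptotic expansion $-\grad_{\bar\wB}\eRisk(\wB_t)=-\tfrac{e^{-\gamma w_t}}{n}\grad G(\bar\wB_t)+O(e^{-\gamma' w_t})$, the second-order Taylor remainder when comparing $G(\bar\wB_{t+1})$ to $G(\bar\wB_t)$, and the claim that the error-to-main ratio $e^{-(\gamma'-\gamma)w_t}$ vanishes, all implicitly assume $\bar\wB_t$ remains bounded---which is precisely what step one is meant to establish. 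One clean way to break the loop: from \Cref{prop:implicit-bias-asymp} you already have $\|\bar\wB_t\|/\|\wB_t\|\to 0$ and $w_t/\|\wB_t\|\to 1$, hence $\|\bar\wB_t\|=o(w_t)$; this crude a priori bound is enough to make every exponential of the form $e^{-cw_t+C\|\bar\wB_t\|}$ decay, after which an honest induction on a fixed sublevel set of $G$ goes through.

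Two smaller points. First, the lemma is stated for arbitrary stepsize $\eta>0$; your descent-on-$G$ argument only kicks in once the effective stepsize $\eta e^{-\gamma w_t}/n$ is below the local smoothness of $G$, so the (finite) transient needs a separate remark. Second, you do not need the $\Theta(\log t)$ rate to get $\sum_t e^{-\gamma w_t}=\infty$: once $\bar\wB_t$ is bounded, the first-coordinate update gives $w_{t+1}-w_t\eqsim e^{-\gamma w_t}$, and telescoping yields $\sum_t e^{-\gamma w_t}\eqsim w_\infty-w_0=\infty$ directly.
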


The following lemma characterizes the limiting $\ell_2$-regularization path.
\begin{lemma}\label{lemma:path:l2-regu}
Let $\uB_{\lambda} = (u_{\lambda}, \bar \uB_{\lambda})$ for $\lambda>0$ be the $\ell_2$-regularization path.
Then under \Cref{assump:full-rank-support}, we have
\begin{align*}
    \lim_{\lambda\to0} u_{\lambda} = \infty,\quad 
    \lim_{\lambda\to0}\bar \uB_{\lambda} = \bar \wB^*.
\end{align*}
\end{lemma}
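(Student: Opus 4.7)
The plan is to mirror the classical implicit-bias analysis of the $\ell_2$-regularization path for separable logistic regression, adapted to the coordinate splitting $\uB_\lambda = (u_\lambda,\bar\uB_\lambda)$. First I would establish direction convergence, namely $\uB_\lambda/\|\uB_\lambda\| \to \tilde\wB = \eB_1$ and $\|\uB_\lambda\| \to \infty$ as $\lambda\downarrow 0$, by testing $\uB_\lambda$'s minimality against scaled max-margin vectors $R\tilde\wB$: divergence in norm comes from sending $R\to\infty$ and using separability to push $\eRisk(R\tilde\wB)\to 0$, while any subsequential limit direction $\wB^\infty\neq\tilde\wB$ would give a strictly smaller data margin on some point, producing a risk that dominates $\eRisk(\|\uB_\lambda\|\tilde\wB)$ and contradicting optimality once $\lambda\|\uB_\lambda\|^2\to 0$. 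In the chosen coordinates this immediately yields $u_\lambda\to\infty$ and $\bar\uB_\lambda/u_\lambda\to 0$.

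The crux is then to bound $\bar\uB_\lambda$. Using the comparator $\uB_0 := u_\lambda\eB_1 + \bar\wB^*$, optimality of $\uB_\lambda$ reads
\[
\eRisk(\uB_\lambda) + \tfrac{\lambda}{2}\|\bar\uB_\lambda\|^2 \;\le\; \eRisk(\uB_0) + \tfrac{\lambda}{2}\|\bar\wB^*\|^2.
\]
For large $u_\lambda$, the two-sided asymptotic $\ell(z) = e^{-z}(1+O(e^{-z}))$, together with the fact that the non-support margins $y_ix_i$ exceed $\gamma$ by a strictly positive gap, gives $\eRisk(\uB_\lambda) = \tfrac{e^{-\gamma u_\lambda}}{n}G(\bar\uB_\lambda)(1+o(1))$ and $\eRisk(\uB_0) = \tfrac{e^{-\gamma u_\lambda}}{n}G(\bar\wB^*)(1+o(1))$, since the contributions of $i\notin\Scal$ are suppressed by a factor $e^{-(y_ix_i-\gamma)u_\lambda}\to 0$. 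The remaining pre-factor $\lambda$ is tamed via the first-coordinate KKT identity $\lambda u_\lambda = (\gamma/n)\,e^{-\gamma u_\lambda}G(\bar\uB_\lambda)(1+o(1))$, which yields $n\lambda e^{\gamma u_\lambda} = o(G(\bar\uB_\lambda))$; rearranging the optimality inequality then bootstraps to $G(\bar\uB_\lambda) \le G(\bar\wB^*)(1+o(1))$, and coercivity of $G$ (a direct consequence of \Cref{eq:path:non-separable}) forces $\bar\uB_\lambda$ to be bounded.

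Finally, with $\bar\uB_\lambda$ bounded, extract any subsequence with $\bar\uB_\lambda\to\bar\wB^\infty$. Multiplying the tail KKT identity $\lambda\bar\uB_\lambda = -(1/n)\sum_i y_i\bar\xB_i/(1+e^{y_i\xB_i^\top \uB_\lambda})$ by $e^{\gamma u_\lambda}$ and passing to the limit gives $\lambda\bar\uB_\lambda e^{\gamma u_\lambda} \to (1/n)\nabla G(\bar\wB^\infty)$, while the first-coordinate KKT gives $\lambda u_\lambda e^{\gamma u_\lambda} \to (\gamma/n)G(\bar\wB^\infty)$. Dividing, $\bar\uB_\lambda/u_\lambda \to \nabla G(\bar\wB^\infty)/(\gamma G(\bar\wB^\infty))$; since the left-hand side vanishes by the first stage and $G\ge 1$, we conclude $\nabla G(\bar\wB^\infty)=0$, hence $\bar\wB^\infty = \bar\wB^*$. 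As every subsequence has the same limit, the full sequence $\bar\uB_\lambda\to\bar\wB^*$ and $u_\lambda\to\infty$. The main obstacle is the boundedness step: making the exponential approximation of $\eRisk$ tight enough that the naively $O(\lambda)$ regularization residual $\frac{\lambda}{2}\|\bar\wB^*\|^2$ is actually absorbed into $o(e^{-\gamma u_\lambda})$, which requires the self-referential bootstrap via the first-coordinate KKT identity linking $\lambda e^{\gamma u_\lambda}$ to $G(\bar\uB_\lambda)/u_\lambda$.
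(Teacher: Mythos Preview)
Your proposal is correct and reaches the same conclusion, but the route differs from the paper's in two notable places.

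\textbf{Divergence $u_\lambda\to\infty$.} The paper obtains this in one line from the first-coordinate KKT lower bound $\lambda u_\lambda \ge \tfrac{\gamma}{2n}e^{-\gamma u_\lambda}$, without invoking direction convergence. You instead appeal to the Rosset-type direction-convergence argument, which is heavier but has the payoff of also giving you $\|\bar\uB_\lambda\|=o(u_\lambda)$, a fact you genuinely need later for the exponential asymptotics.

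\textbf{Boundedness of $\bar\uB_\lambda$.} This is where the two proofs really diverge. The paper takes the inner product of the $\bar\uB$-KKT equation with $\bar\uB_\lambda$, splits the sum by the sign of $y_i\bar\xB_i^\top\bar\uB_\lambda$, and uses the strict non-separability \eqref{eq:path:non-separable} to isolate a single support vector with $y_{i^*}\bar\xB_{i^*}^\top\bar\uB_\lambda\le -b\|\bar\uB_\lambda\|$; a short computation then yields the explicit bound $\|\bar\uB_\lambda\|\le 2n/b$, with no asymptotics or bootstrapping. Your route---compare against $u_\lambda\eB_1+\bar\wB^*$, expand both risks as $\tfrac{1}{n}e^{-\gamma u_\lambda}G(\cdot)(1+o(1))$, then eliminate the $\lambda e^{\gamma u_\lambda}$ factor via the first-coordinate KKT---is more circuitous but delivers the sharper statement $G(\bar\uB_\lambda)\le G(\bar\wB^*)(1+o(1))$. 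Since $G$ is coercive with a unique minimizer, this already forces $\bar\uB_\lambda\to\bar\wB^*$; your Stage~3 is therefore essentially redundant (and has a harmless sign slip: the limit of $\lambda\bar\uB_\lambda e^{\gamma u_\lambda}$ is $-\tfrac{1}{n}\nabla G(\bar\wB^\infty)$, not $+\tfrac{1}{n}\nabla G$).

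\textbf{Final step.} Both proofs multiply the KKT conditions by $e^{\gamma u_\lambda}$ and pass to the limit. The paper first establishes $\lambda e^{\gamma u_\lambda}\to 0$ from an upper bound on $\lambda u_\lambda$ (requiring boundedness), then uses that the left-hand side $\lambda\bar\uB_\lambda e^{\gamma u_\lambda}$ vanishes directly; you instead divide the two KKT limits and use $\bar\uB_\lambda/u_\lambda\to 0$. Both work.

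In short: the paper's argument is more elementary and self-contained (no direction convergence, no asymptotic expansion until the very end, explicit constants); yours leans on standard implicit-bias machinery but extracts a stronger intermediate inequality that makes the endgame nearly immediate.
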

\begin{proof}[Proof of \Cref{lemma:path:l2-regu}]
By the first order condition, we have 
\begin{align*}
\lambda    u_{\lambda} &= -\frac{\dif}{\dif u} L(\uB_{\lambda}) = \frac{1}{n}\sum_{i=1}^n \frac{y_ix_i}{1+\exp(y_i x_i u_{\lambda} + y_i \bar\xB_i^\top \bar\uB_{\lambda} )}, \\
\lambda    \bar\uB_{\lambda} &= -\frac{\dif}{\dif \bar\uB} L(\uB_{\lambda}) = \frac{1}{n}\sum_{i=1}^n \frac{y_i\bar \xB_i}{1+\exp(y_i x_i u_{\lambda} + y_i \bar\xB_i^\top \bar\uB_{\lambda} )}.
\end{align*}

We first show that $u_{\lambda}\to\infty$. Recall that $y_i x_i \ge \gamma$. Then we have 
\begin{align*}
    \lambda    u_{\lambda} &=  \frac{1}{n}\sum_{i=1}^n \frac{y_ix_i}{1+\exp(y_i x_i u_{\lambda} + y_i \bar\xB_i^\top \bar\uB_{\lambda} )} \\
    &\ge  \frac{\gamma}{n}\sum_{i=1}^n \frac{1}{1+\exp(y_i x_i u_{\lambda} + y_i \bar\xB_i^\top \bar\uB_{\lambda} )} \\
    &\ge  \frac{\gamma}{2n}\sum_{i=1}^n \exp(-y_i x_i u_{\lambda} - y_i \bar\xB_i^\top \bar\uB_{\lambda} ) \\
    &\ge \frac{\gamma}{2n}\sum_{i\in\Scal} \exp(-y_i x_i u_{\lambda} - y_i \bar\xB_i^\top \bar\uB_{\lambda} ) \\
    &=  \frac{\gamma}{2n} \exp(-\gamma u_{\lambda}) G ( \bar\uB_{\lambda} ) \\
    &\ge \frac{\gamma}{2n} \exp(-\gamma u_{\lambda}),
\end{align*}
where the last inequality is because $G(\cdot)\ge 1$. The above bound implies that $u_{\lambda}\to\infty$ as $\lambda\to 0$.

We next show that $\bar\uB_{\lambda}$ is bounded. 
Define
\[
\Ncal:=\{i\in[n]: y_i\bar\xB_i^\top\bar\uB_{\lambda} < 0\},\quad 
\Pcal:=\{i\in[n]: y_i\bar\xB_i^\top\bar\uB_{\lambda} \ge 0\}.
\]
Then $\Ncal$ is nonempty by \Cref{eq:path:non-separable}. Moreover, there exists $i^*\in \Ncal\cap\Scal$ such that $y_{i^*}\bar\xB_{i^*}^\top\bar\uB_{\lambda}\le -b\|\bar\uB\|$ by \Cref{eq:path:non-separable}.
Note that $y_{i^*}x_{i^*}=\gamma$ by the definition of $\Scal$.
Then by definition, we have 
\begin{align*}
    \lambda\|\bar\uB\|^2 &= \frac{1}{n}\sum_{i=1}^n \frac{y_i\bar \xB_i^\top \bar\uB}{1+\exp(y_i x_i u_{\lambda} + y_i \bar\xB_i^\top \bar\uB_{\lambda} )}\\
    &=\frac{1}{n}\sum_{i\in\Ncal} \frac{y_i\bar \xB_i^\top \bar\uB}{1+\exp(y_i x_i u_{\lambda} + y_i \bar\xB_i^\top \bar\uB_{\lambda} )}+\frac{1}{n}\sum_{i\in\Pcal} \frac{y_i\bar \xB_i^\top \bar\uB}{1+\exp(y_i x_i u_{\lambda} + y_i \bar\xB_i^\top \bar\uB_{\lambda} )} \\
    &\le \frac{1}{n}\sum_{i\in\Ncal} \frac{y_i\bar \xB_i^\top \bar\uB}{1+\exp(y_i x_i u_{\lambda} )} +\frac{1}{n}\sum_{i\in\Pcal} {y_i\bar \xB_i^\top \bar\uB}\exp(-y_i x_i u_{\lambda} - y_i \bar\xB_i^\top \bar\uB_{\lambda} ) \\
    &\le \frac{1}{n} \frac{y_{i^*}\bar \xB_{i^*}^\top \bar\uB}{1+\exp(y_{i^*} x_{i^*} u_{\lambda} )} +\frac{1}{n}\sum_{i\in\Pcal} \exp(-y_i x_i u_{\lambda}  ) \\
    &\le \frac{1}{n} \frac{-b\|\bar\uB\|}{1+\exp(\gamma u_{\lambda} )} + \exp(-\gamma u_{\lambda}  ) ,
\end{align*}
where the second inequality is by $e^{t}t\le 1$. The above inequality implies 
\begin{align*}
    b\|\bar\uB_{\lambda}\| \le n \big(1+\exp(\gamma u_{\lambda} ) \big) \exp(-\gamma u_{\lambda}  ) \le 2n,
\end{align*}
where the last inequality is because $u_{\lambda}>0$. This shows that $\bar\uB_{\lambda}$ is bounded.

Now we prove an upper bound on $u_{\lambda}$.
Since $\bar\uB_{\lambda}$ is bounded, we know $\sum_{i=1}^n \exp(-y_i \bar\xB_i^\top \bar\uB_{\lambda} )$ is bounded by a constant $F_{\max}$. Let $X:=\max_{i}\|\xB_i\|$.
Then we have 
\begin{align*}
    \lambda    u_{\lambda} &=  \frac{1}{n}\sum_{i=1}^n \frac{y_ix_i}{1+\exp(y_i x_i u_{\lambda} + y_i \bar\xB_i^\top \bar\uB_{\lambda} )} \\
    &\le  \frac{X}{n}\sum_{i=1}^n \frac{1}{1+\exp(y_i x_i u_{\lambda} + y_i \bar\xB_i^\top \bar\uB_{\lambda} )} \\
    &\le  \frac{X}{n}\sum_{i=1}^n \exp(-y_i x_i u_{\lambda} - y_i \bar\xB_i^\top \bar\uB_{\lambda} ) \\
    &\le \frac{X}{n}\sum_{i=1}^n \exp(-\gamma u_{\lambda} - y_i \bar\xB_i^\top \bar\uB_{\lambda} )\\
     &\le \frac{X F_{\max}}{n} \exp(-\gamma u_{\lambda}),
\end{align*}
which implies that 
\begin{align}
    \lambda  \exp(\gamma u_{\lambda}) \le \frac{X F_{\max}}{n u_{\lambda}} \to 0.\label{eq:path:upper-u}
\end{align}

Finally, we show $\bar\uB_{\lambda}\to\bar\wB^*$. Recall the definition of $\bar\uB_{\lambda}$. Let us take $\lambda\to 0$ and use \cref{eq:path:upper-u}, the boundedness of $\bar\uB_{\lambda}$, and that 
\[y_ix_i> \gamma \ \text{for}\  i\not\in\Scal,\quad y_ix_i=\gamma\ \text{for}\  i\in\Scal,\]
then we get 
\begin{align*}
0 &=
    \lim_{\lambda\to 0}\lambda\bar\uB_{\lambda}  \exp(\gamma u_{\lambda}) \\
    & =  \lim_{\lambda\to 0}\frac{1}{n}\sum_{i=1}^n \frac{y_i\bar \xB_i\exp(\gamma u_{\lambda}) }{1+\exp(y_i x_i u_{\lambda} + y_i \bar\xB_i^\top \bar\uB_{\lambda} )} \\
    &= \lim_{\lambda\to 0}\frac{1}{n}\sum_{i\in\Scal} \frac{y_i\bar \xB_i \exp(\gamma u_{\lambda}) }{1+\exp(\gamma u_{\lambda} + y_i \bar\xB_i^\top \bar\uB_{\lambda} )} \\
    &= \lim_{\lambda\to 0}\frac{1}{n}\sum_{i\in\Scal} \frac{y_i\bar \xB_i \exp(\gamma u_{\lambda}) }{\exp(\gamma u_{\lambda} + y_i \bar\xB_i^\top \bar\uB_{\lambda} )} \\
    &= \lim_{\lambda\to 0}\frac{1}{n}\sum_{i\in\Scal} \frac{y_i\bar \xB_i }{\exp( y_i \bar\xB_i^\top \bar\uB_{\lambda} )}.
\end{align*}
Thus $\bar\uB_{0}$ satisfies the first-order stationary condition of $G(\cdot)$. So we must have $\bar\uB_{0} = \bar\wB^*$. This completes our proof.
\end{proof}

\begin{proof}[Proof of \Cref{thm:path:point-wise-distance}]
The proof is a direct consequence of the above lemmas.

Note that $ u_{\lambda}$ is continous, $u_{\infty}=0$, and $\lim_{\lambda\to 0} u_{\lambda} = \infty$. Moreover $w_0=0$, $w_t$ is increasing, and $w_t\to\infty$.
Thus for each $t$ we can choose $\lambda(t)$ such that $ u_{\lambda} =  w_t$ and $\lambda(t)\to 0$. 
Then we have 
\begin{align*}
    \| \uB_{\lambda(t)} - \wB_t\| = \|\bar \uB_{\lambda(t)} - \bar \wB_t\| \le \|\bar\uB_{\lambda(t)} - \bar\wB^*\| +\| \bar \wB_t - \bar \wB^*\| \to 0. 
\end{align*}
This completes the proof.
\end{proof}

\subsection{Proof of \texorpdfstring{\Cref{thm:path:counter-example}}{Theorem 5.3}}\label{sec:proof:path:counter-example}

\begin{proof}[Proof of \Cref{thm:path:counter-example}]
As the dataset is linearly separable and we only care about the asymptotic, without loss of generality, we can consider the exponential loss instead of the logistic loss.
In what follows, we focus on analyzing gradient flow. Our argument applies to gradient descent with any fixed stepsize $\eta>0$.

Denote $\wB = (w^{(1)}, w^{(2)})$. Then the empirical risk can be written as 
\begin{align*}
    \eRisk(w^{(1)}, w^{(2)}):= \frac{1}{2} \big( \exp(-\gamma w^{(1)}) + \exp(-\gamma w^{(1)} - \gamma_2  w^{(2)}) \big).
\end{align*}
Note that the GF and $\ell_2$-regularization paths under the $\eRisk$ are the same as those under $\ln\eRisk$ up to a rescaling of the time and regularization strength.
It suffices to compared the GF and $\ell_2$-regularization paths under $\ln\eRisk$, where
\begin{align*}
    \ln\eRisk(w^{(1)}, w^{(2)}) = -\gamma w^{(1)} + \ln \big(1+\exp(-\gamma_2 w^{(2)}) \big) - \ln(2).
\end{align*}

The GF path is given by
\begin{align*}
    \dif w^{(1)}_t = \gamma \dif t,\quad 
    \dif w^{(2)}_t= \gamma_2 \frac{\exp(-\gamma_2 w^{(2)})}{1+\exp(-\gamma_2 w^{(2)})} \dif t,\quad w^{(1)}_0 = w^{(2)}_0= 0 .
\end{align*}
Solving the ODEs, we get 
\begin{align*}
    w^{(1)}_t = \gamma t,\quad 
    \bigg| w^{(2)}_t - \frac{\ln (1+\gamma_2^2 t)}{\gamma_2} \bigg| \le C,\quad t\ge 0,
\end{align*}
for some constant $C>0$.
The $\ell_2$-regularization path is given by 
\begin{align*}
   -\gamma + \lambda u^{(1)}_\lambda  = 0,\quad 
   -\gamma_2 \frac{\exp(-\gamma_2 u^{(2)})}{1+\exp(-\gamma_2 u^{(2)})} +  \lambda u^{(2)}_\lambda  = 0.
\end{align*}
Then we get 
\begin{align*}
     & u^{(1)}_{\lambda} = \frac{\gamma}{\lambda},\quad  \lambda \ge 0 \\
     \quad 
    & \bigg| u^{(2)}_{\lambda} - \frac{\ln (\gamma_2^2/ \lambda) - \ln\ln(\gamma^2/ \lambda)}{\gamma_2} \bigg| \le C,\quad \lambda\le \gamma^2_2/e,
\end{align*}

As $t \to \infty$ and $\lambda\to 0$, the two paths tend to infinity in both directions. However, due to the rate mismatch, it is impossible to match $\wB_t$ with $\uB_{\lambda}$ in both directions at the same time. So their $\ell_2$-distance has to be infinite asymptotically.  
\end{proof}
\end{document}